\documentclass[twoside,11pt]{article}
\usepackage{sidecap}
\sidecaptionvpos{figure}{c}

% Any additional packages needed should be included after jmlr2e.
% Note that jmlr2e.sty includes epsfig, amssymb, natbib and graphicx,
% and defines many common macros, such as 'proof' and 'example'.
%
% It also sets the bibliographystyle to plainnat; for more information on
% natbib citation styles, see the natbib documentation, a copy of which
% is archived at http://www.jmlr.org/format/natbib.pdf

\usepackage{jmlr2e}

% \usepackage{times}  % DO NOT CHANGE THIS
% \usepackage{helvet}  % DO NOT CHANGE THIS
% \usepackage{courier}  % DO NOT CHANGE THIS
% \usepackage[hyphens]{url}  % DO NOT CHANGE THIS
% \usepackage{graphicx} % DO NOT CHANGE THIS
% \urlstyle{rm} % DO NOT CHANGE THIS
% \def\UrlFont{\rm}  % DO NOT CHANGE THIS
% % \usepackage{natbib}  % DO NOT CHANGE THIS AND DO NOT ADD ANY OPTIONS TO IT
% \usepackage{caption} % DO NOT CHANGE THIS AND DO NOT ADD ANY OPTIONS TO IT
% % \DeclareCaptionStyle{ruled}{labelfont=normalfont,labelsep=colon,strut=off} % DO NOT CHANGE THIS
% % \frenchspacing  % DO NOT CHANGE THIS
% % \setlength{\pdfpagewidth}{8.5in}  % DO NOT CHANGE THIS
% % \setlength{\pdfpageheight}{11in}  % DO NOT CHANGE THIS
% %
% % These are recommended to typeset algorithms but not required. See the subsubsection on algorithms. Remove them if you don't have algorithms in your paper.
\usepackage{algorithm}

\usepackage[table]{xcolor}        % colors
\usepackage[utf8]{inputenc} % allow utf-8 input
\usepackage[T1]{fontenc}    % use 8-bit T1 fonts
\usepackage{hyperref}       % hyperlinks
\usepackage{url}            % simple URL typesetting
\usepackage{booktabs}       % professional-quality tables
\usepackage{tablefootnote}
\usepackage{amsfonts}       % blackboard math symbols
\usepackage{nicefrac}       % compact symbols for 1/2, etc.
\usepackage{microtype}      % microtypography

\usepackage{hyperref}       % hyperlinks
\usepackage{url}            % simple URL typesetting
\usepackage{booktabs}       % professional-quality tables
\usepackage{amsfonts}       % blackboard math symbols
\usepackage{nicefrac}       % compact symbols for 1/2, etc.
\usepackage{microtype}      % microtypography
\usepackage{lipsum}

\usepackage{mathtools} 
\usepackage{amsmath}
\usepackage{algorithm}
\usepackage{algpseudocode}
\usepackage{amssymb}
\usepackage{graphicx}
\usepackage{amsmath}
\usepackage{enumitem}

\usepackage{multirow, booktabs}
\usepackage{caption}

\usepackage{soul}

\usepackage{multirow, booktabs, xcolor, array}
\usepackage{caption}

\usepackage{bbold}

\usepackage{blindtext}

\usepackage{multirow}
\usepackage{dsfont}

\usepackage{subcaption}

\usepackage{caption}

\usepackage{booktabs,ragged2e}

\usepackage[flushleft]{threeparttable}

\usepackage[export]{adjustbox}

\makeatletter
\newcounter{phase}[algorithm]
\newlength{\phaserulewidth}
\newcommand{\setphaserulewidth}{\setlength{\phaserulewidth}}

\makeatother
\setphaserulewidth{.7pt}

\algdef{SE}[SUBALG]{Indent}{EndIndent}{}{\algorithmicend\ }%
\algtext*{Indent}
\algtext*{EndIndent}

\usepackage{threeparttable, booktabs, amsmath, multirow}

\newlength{\tempheight}
\newlength{\tempwidth}
\definecolor{midgray}{gray}{0.45} 

\newcommand{\rowname}[1]% #1 = text
{\rotatebox{90}{\makebox[\tempheight][c]{\textbf{#1}}}}

\newcommand{\columnname}[1]% #1 = text
{\makebox[\tempwidth][c]{\textbf{#1}}}

\usepackage{array}
\makeatletter
\g@addto@macro{\endtabular}{\gdef\rowfonttype{}}% Clear row font
\makeatother
\newcommand{\rowfonttype}{}% Current row font
\newcolumntype{L}{>{\rowfonttype\strut}l}
\newcolumntype{C}{>{\rowfonttype\strut}c}
\newcolumntype{R}{>{\rowfonttype\strut}r}

%
% These are are recommended to typeset listings but not required. See the subsubsection on listing. Remove this block if you don't have listings in your paper.
\usepackage{newfloat}
\usepackage{listings}
\lstset{%
	basicstyle={\footnotesize\ttfamily},% footnotesize acceptable for monospace
	numbers=left,numberstyle=\footnotesize,xleftmargin=2em,% show line numbers, remove this entire line if you don't want the numbers.
	aboveskip=0pt,belowskip=0pt,%
	showstringspaces=false,tabsize=2,breaklines=true}
\floatstyle{ruled}
\newfloat{listing}{tb}{lst}{}
\floatname{listing}{Listing}

\usepackage{xcolor}

\usepackage{mathrsfs}
\DeclareMathAlphabet{\mathcal}{OMS}{cmsy}{m}{n}

% Definitions of handy macros can go here

% Heading arguments are {volume}{year}{pages}{submitted}{published}{author-full-names}

% \jmlrheading{1}{2000}{1-48}{4/00}{10/00}{ }

% Short headings should be running head and authors last names

% \ShortHeadings{The Recurrency Kernel Function}{ }
\firstpageno{1}

\begin{document}

\title{The Normalized Cross Density Functional: A Framework to Quantify Statistical Dependence for Random Processes}

%A Novel Variational Principle For Learning High Dimensional Random Processes using Bidirectional Architectures

\author{\name Bo Hu \email hubo@ufl.edu \\
      \addr Department of Electrical and Computer Engineering\\
      University of Florida\\
      Gainesville, FL 32611, USA
      \AND
      \name Jos{\'e}~C.~Pr{\'\i}ncipe \email principe@cnel.ufl.edu \\
      \addr Department of Electrical and Computer Engineering\\
      University of Florida\\
      Gainesville, FL 32611, USA}

\editor{}
% \editor{Leslie Pack Kaelbling}

\maketitle

\begin{abstract}

This paper presents a novel approach to measuring statistical dependence between two random processes (r.p.) using a positive-definite function called the Normalized Cross Density (NCD). NCD is derived directly from the probability density functions of two r.p. and constructs a data-dependent Hilbert space, the Normalized Cross-Density Hilbert Space (NCD-HS). By Mercer's Theorem, the NCD norm can be decomposed into its eigenspectrum, which we name the Multivariate Statistical Dependence (MSD) measure, and their sum, the Total Dependence Measure (TSD). Hence, the NCD-HS eigenfunctions serve as a novel embedded feature space, suitable for quantifying r.p. statistical dependence. In order to apply NCD directly to r.p. realizations, we introduce an architecture with two multiple-output neural networks, a cost function, and an algorithm named the Functional Maximal Correlation Algorithm (FMCA). With FMCA, the two networks learn concurrently by approximating each other's outputs, extending the Alternating Conditional Expectation (ACE) for multivariate functions. We mathematically prove that FMCA learns the dominant eigenvalues and eigenfunctions of NCD directly from realizations. Preliminary results with synthetic data and medium-sized image datasets corroborate the theory. Different strategies for applying NCD are proposed and discussed, demonstrating the method's versatility and stability beyond supervised learning. Specifically, when the two r.p. are high-dimensional real-world images and a white uniform noise process, FMCA learns factorial codes, i.e., the occurrence of a code guarantees that a specific training set image was present, which is important for feature learning.

\end{abstract}

\begin{keywords}
Statistical dependence measure, alternating conditional expectation algorithm, maximal correlation, variational inference, factorial codes
\end{keywords}

\section{Introduction}

A central question in computer vision, information-theoretic learning, and computational neuroscience is how learning systems can uncover latent structures from real-world sensory data. This paper introduces a new, principled learning framework built on a bidirectional procedure for random processes (r.p.). By definition, a r.p. is a sequence of Borel measurable random variables (r.v.) indexed by a discrete or continuous variable $t$, such as a signal's time samples or an image's spatial coordinates, which are also known as the r.p. functional space. Our focus is the discrete-time r.p. $\mathbf{x} = \{\mathbf{x}(t, \omega)\}_{t=1}^{T}$, where $\omega$ is a subset of the common sample space $\Omega$. For simplicity, $\omega$ will be left out of the notation. Assume an ensemble of realizations $x_1, x_2, \cdots, x_N$ are sampled from this r.p. Each realization $x_n$ is a sequence $x_n(1), x_n(2), \cdots, x_n(T)$ which contains a rich latent functional structure.

Our contribution involves constructing a Hilbert space directly from the data's probability density functions ($pdfs$) to quantify the statistical dependence within and across two r.p. We start by reviewing the related literature.
\vspace{5pt}

\noindent \textbf{Explicit functional space information.} Initial attempts to quantify r.p. statistical dependence focused on the mean value of the $pdf$ between two time instances, i.e., $\mathbb{E}[p(\mathbf{x}(t), \mathbf{x}(t-\tau))]$. These efforts originate from Pearson's $\rho$ correlation coefficient and Principal Component Analysis (PCA), leading to Wiener's and Parzen's solutions for minimum mean-squared error methods (\citealt{wiener1949extrapolation, hagenblad1999aspects, hagenblad2000maximum, wills2013identification, parzen1961approach, parzen1999stochastic}). While suitable for stationary r.p., this quantifier only captures second-order statistics and can be limiting, as shown in Independent Component Analysis (ICA) (\citealt{hyvarinen2000independent}). Moreover, Wiener filter theory is limited to linear mappers, which are not universal approximators. 

Gaussian Processes (GP) assume that the joint $pdf$ between two time instances of the r.p. arises from a multivariate Gaussian distribution. This can also be seen as adding a Gaussian prior over the functions, useful for Bayesian inference. However, the embedded Gaussian prior still enforces a stationary assumption in the model. The GP regressor is inherently a nonlinear function of the samples~(\citealt{zhu1997gaussian, williams1998prediction, williams2006gaussian}), substantially improving upon Wiener theory and closely relating to optimal online kernel learning~(\citealt{principe2011kernel}). GP can be entirely described by its second-order statistics, which means that the sample covariance, assumed to have a zero mean, quantifies the functional structure of the random process. Since the covariance is positive semidefinite, it can be decomposed using the Karhunen Lo\`eve transform~(\citealt{karhunen1947lineare}), favoring a multivariate decomposition. More recent results~(\citealt{heinonen2016non, heinonen2018learning}) have extended GP for time-varying regression using a nonstationary extension of the squared exponential kernel. \vspace{5pt}

\noindent \textbf{Estimation of statistical descriptors directly from data.} Since the $pdf$ of the data is rarely available in machine learning, density estimation is necessary~(\citealt{parzen1962estimation, Liamestimate, silverman2018density}), but many methods struggle in high dimensions. Nyström's method (\citealt{baker1977numerical}), used in GP, first estimates the $pdf$ by counting the occurrences of paired samples, then applies standard eigendecomposition to the estimated joint $pdf$.
        
More recently, the kernel mean embedding theory (\citealt{muandet2017kernel, hayati2023kernel}) in Reproducing Kernel Hilbert Spaces (RKHS) has become dominant because of its excellent estimation properties and scalability to high dimensions. It can be employed in information-theoretic learning as estimators for R\'enyi's quadratic entropy, mutual information, and divergences (\citealt{Principe2010information}). It is applied to construct other statistical dependence criteria such as Kernel Independent Component Analysis (KICA)~(\citealt{bach2002kernel}) and the Hilbert-Schmidt Independence Criterion (HSIC)~(\citealt{fukumizu2004dimensionality, gretton2007kernel, sriperumbudur2010hilbert, muandet2017kernel}).

The mean embedding theory uses the functional mean operator to construct linear operators in the RKHS that involve $pdfs$, such as covariance or conditional mean, directly from data. It is particularly suitable for measuring statistical dependence, by approximating the cross-covariance operator in the RKHS. For a pair of r.v. $\mathbf{x}$ and $\mathbf{u}$ with a chosen positive-definite characteristic kernel $\mathcal{K}(\cdot, \cdot)$ (e.g., Gaussian), each r.v. is equipped with its respective RKHS, denoted as $\mathscr{H}$ and $\mathscr{G}$. Each RKHS has its own covariance operator $\mathcal{C}_{X\hspace{-1pt}X}$ or $\mathcal{C}_{U U}$. A cross-covariance operator $\mathcal{C}_{X\hspace{-1pt}U}$ is constructed, mapping from RKHS $\mathscr{H}$ to $\mathscr{G}$, such that for functions $f \in \mathscr{H}$ and $g \in \mathscr{G}$, their inner product $\langle f, \mathcal{C}_{X\hspace{-1pt}U}g \rangle$ is the covariance between them over their joint distribution. The cross-covariance operator is Hilbert-Schmidt and forms a new Hilbert space $\text{HS}(\mathscr{H}, \mathscr{G})$. Define the normalized cross-covariance operator $\mathcal{V}_{X\hspace{-1pt}U}:= \mathcal{C}_{X\hspace{-1pt}X}^{-\frac{1}{2}}\mathcal{C}_{X\hspace{-1pt}U} \mathcal{C}_{U\hspace{-1pt}U}^{-\frac{1}{2}}$. Its spectrum and norm measure the statistical dependence~(\citealt{bach2002kernel, gretton2007kernel}). In practice, these operators are approximated using Gram matrices by applying kernels to all pairwise samples in the training set. The construction of the operator and its new Hilbert space is elegant but faces significant limitations: (1) operations with Gram matrices are computationally very intensive; and (2) the space construction depends on the kernel and its hyperparameter, causing biases in estimation. 

Alternative approaches maximize a variational lower bound on mutual information using kernels or neural networkss~(\citealt{nguyen2009surrogate}). However, they fundamentally differ from GP and kernel mean embedding, as they learn a scalar-valued density ratio function without a Hilbert space or decomposition.

\vspace{5pt}

\noindent \textbf{Bidirectional mappings.} Alternating Conditional Expectation (ACE) substitutes the covariance with Alfred R\'enyi's maximal correlation, also a measure of statistical dependence~(\citealt{breiman1985estimating, andrew2013deep, hu2021mimo, huang2018gaussian, huang2019universal}). For r.v. $\mathbf{x}$ and $\mathbf{u}$, R\'enyi's maximal correlation finds optimal functions $f$ and $g$, among all feasible measurable functions, that maximize the correlation coefficient between $f(\mathbf{x})$ and $g(\mathbf{u})$. ACE uses a bidirectional recursive algorithm to find the solution, recursively updating $f$ and $g$ to match each other's conditional mean. ACE's limitation is that it projects samples onto a real line, fails to represent the full joint $pdf$, and does not form a multidimensional space.

This paper uses ACE's bidirectional recursions to directly model statistical dependence between two r.p. Central to this approach is the Normalized Cross Density (NCD), related to the density ratio found in Mutual Information (MI), with each r.p. of the pair defining its own NCD. NCD is positive definite, thus defines a new Hilbert space named the NCD-Hilbert Space (NCD-HS). Note that the norm of NCD-HS is defined directly from the $pdf$ of the data, so it is a data dependent Hilbert Space, unlike the conventional RKHS theory. The NCD norm allows a spectrum decomposition by Mercer's Theorem, producing a countably infinite sequence of eigenvalues and an orthonormal set of eigenfunctions. We name its eigenspectrum the Multivariate Statistical Dependence (MSD), which is ideal for r.p. quantification. %We provide a theoretical proof that R\'enyi's maximal correlation is a degenerate case of MSD. Its orthonormal family of eigenfunctions, a novel feature projection, spans a new data-dependent Hilbert space called the Normalized Cross-Density Hilbert Space (NCD-HS),  and does not require any fixed RKHS.

Section~\ref{section_fmca} presents a neural network architecture, a cost function and an algorithm called the Functional Maximal Correlation Algorithm (FMCA), which extends ACE for multivariate functions. We show that FMCA's optimization implements the spectrum decomposition of the NCD through bidirectional recursive updates to find the dominant eigenvalues and eigenfunctions. Section~\ref{reference_process} discusses the examples that will demonstrate the flexibility and accuracy of the FMCA. Sections~\ref{experiment_section},~\ref{aggregation_section}, and~\ref{image_codeds_section} present numerical evidence of the quality of the decompositions and compare results with alternative methods. The paper concludes with a brief discussion.
Table 1 compares NCD-HS and FMCA with the alternatives compared in this introduction.

\begin{table}[h]
\caption{\footnotesize Comparing NCD-HS and FMCA with related methods. The advantages of FMCA include: (1) MSD is multivariate, while MI is a scalar; (2) Only the NCD Hilbert space is constructed from the densities, while other kernels, like Gaussian kernel are fixed and data-independent (only the operator are data dependent); (3) NCD-HS eliminates the need for hyperparameter kernel tuning and thus ensures exact, unbiased approximation (assuming convex optimization); (4) Only FMCA can project test data to features in the embedding space of the eigenfunctions, whereas KICA and HSIC produce only a statistical dependence criterion, not a feature projector.}
\centering
\footnotesize
\begin{threeparttable}
\resizebox{\linewidth}{!}{
    \begin{tabular}{llllll}
    \toprule
    % Algorithm & Stat. Dep.  Measures\tnote{1}  & Space Type & Data-dep. Kernel\tnote{4} & Solver & Outputs \\\midrule
    % GP     & CORR\tnote{2} & $L_2(\mu)$   & No  & Nystr{\"o}m & Predictor \\
    % KICA  & KGV & $\text{HS}(\mathscr{H}, \mathscr{G})$ &  No  & Eigenproblem & Scalar \\
    % HSIC & NOCCO & $\text{HS}(\mathscr{H}, \mathscr{G})$ &  No  & Generalized Eigenproblem & Scalar \\
    % MINE  & MI & $L_2(\mu)$\tnote{3} &  No  & Optimization & Scalar \\ 
    % \rowcolor{gray!25} FMCA  & MSD of NCD& NCD-HS & Yes & Optimization & Embed. Space\tnote{5} \\
    Algorithm & Stat. Dep.  Measures\tnote{1}  & Space Type & Data-Dep. Kernel\tnote{5} & Solver & Outputs \\\midrule
    GP     & CORR\tnote{2} & $L_2$\tnote{3}   & No  & Nystr{\"o}m & Predictor \\
    KICA  & KGV & $\text{HS}(\mathscr{H}, \mathscr{G})$ &  No  & Eigenproblem & Scalar Measure\\
    HSIC & NOCCO & $\text{HS}(\mathscr{H}, \mathscr{G})$ &  No  & Generalized Eigenproblem & Scalar Measure\\
    MINE  & MI & $L_2$\tnote{4} &  No  & Optimization & Scalar Measure\\ 
    \rowcolor{gray!25} FMCA  & MSD of NCD& NCD-HS & Yes & Optimization & Embed. Space\tnote{6} \\
    \bottomrule
    \end{tabular}}
\begin{tablenotes}
\footnotesize
% \item[1] Stat. Dep. Measure: Algorithm-defined statistical dependence 
% measures.
% \item[2] CORR: GP computes correlations of r.v. over time (second-order statistics).
% \item[3] Although using kernels, GP minimizes the $L_2$ distance for time series regression
% \item[4] MINE does not construct any explicit projection space, only finding the density ratio, a single-valued measurable function
% \item[5] Data-dep. Kernel: Whether the kernel is dependent on data
% \item[6] Embed. Space: Only FMCA projects unseen data to features in an embedding space of eigenfunctions.
\item[1] Stat. Dep. Measure: Algorithm-defined statistical dependence measures.
\item[2] CORR: GP calculates r.v. correlations over time (second-order statistics).
\item[3] GP minimizes the $L_2$ distance for time series regression using kernels.
\item[4] MINE finds only the density ratio, a scalar-valued measurable function, without an explicit space.
\item[5] Data-Dep. Kernel: Kernel dependency on data.
\item[6] Embed. Space: Only FMCA projects unseen data to features in an eigenfunction embedding space.
\end{tablenotes}
\end{threeparttable}
\end{table}

\section{Measuring Statistical Dependence with a Novel Variational Approach}
\label{bidirectional_system}

The framework is presented in two parts. In this section, the focus is on the theoretical foundations of the variational system and the Normalized Cross Density (NCD). Assuming that all \textit{pdfs} of the r.p. are known, how can we properly quantify the statistical dependence within and between two r.p.? Unlike conventional methods, our proposed approach constructs a Bayesian system and creates latent internal variables that best reflect the statistical dependence. 

% The framework is presented in two parts. In this section, we focus on the theoretical foundations of the variational system and the density kernel. Assuming that all probability density functions (pdfs) of the given random processes (r.p.) are available, the question arises: How can we properly quantify the statistical dependence within and between two r.p.? Our proposed approach, which differs from current methodologies, involves constructing a Bayesian system and creating latent internal variables that best reflect the statistical dependence.

\subsection{Bidirectional recursion, normalized cross density, and dependence measure}

Suppose we have two r.p., $\mathbf{x} = \{\mathbf{x}(t)\}_{t=1}^{T_1}$ and $\mathbf{u} = \{\mathbf{u}(t)\}_{t=1}^{T_2}$, each with a fixed and finite length ($T_1$ and $T_2$, respectively). For the proofs, their joint and conditional $pdfs$ are assumed to exist and be Lebesgue measurable. No stationarity condition is required. The variational system is defined as follows.

\begin{figure}[t]
\captionsetup{type=figure}
\centering
{\includegraphics[width=1\textwidth]{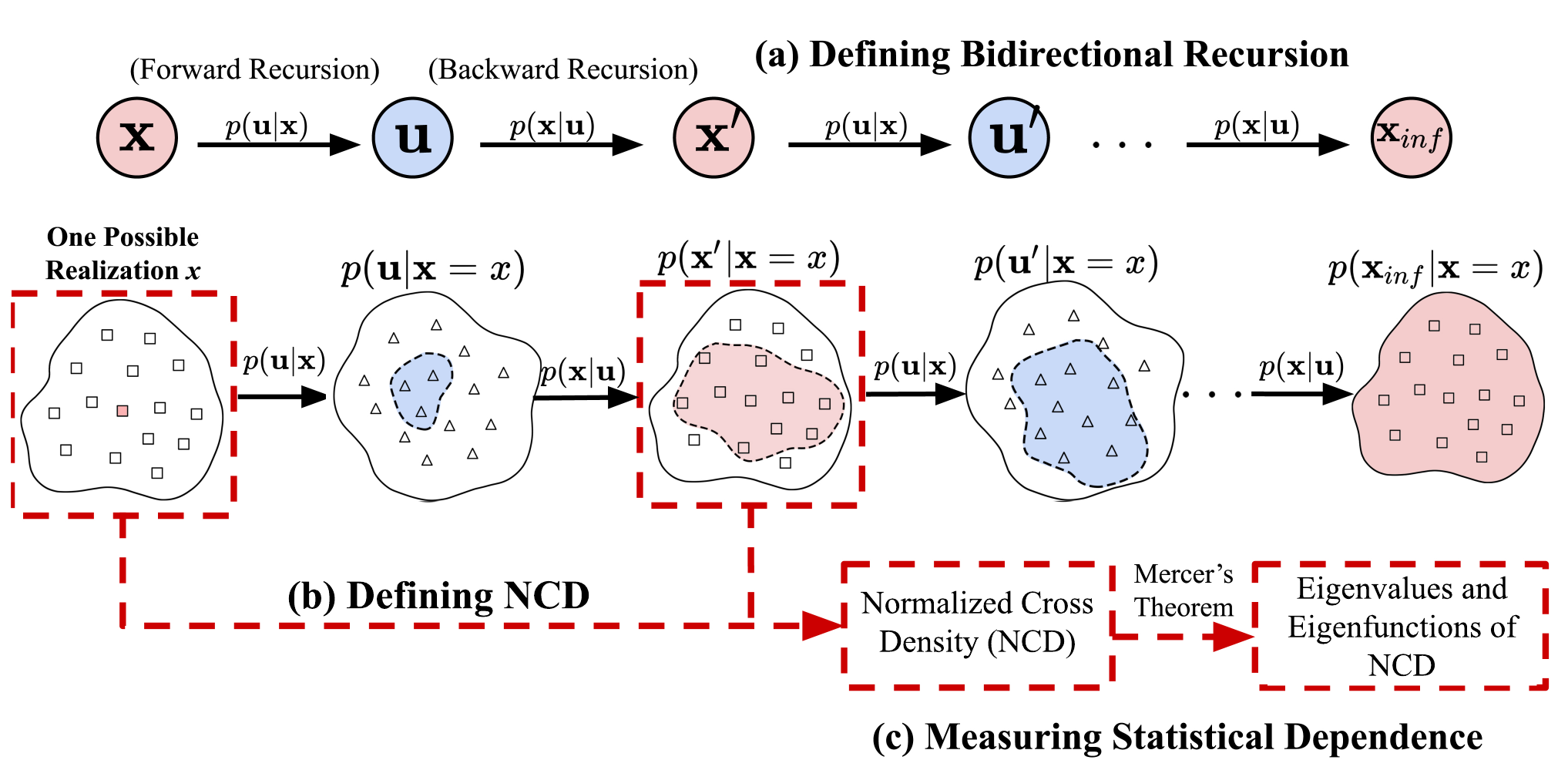}}\\
% \subfloat[]{\includegraphics[width=1\textwidth]{{./figure/figure_sklar}.pdf}}
% \captionof{figure}{Diagram of the bidirectional recursion process. The diagram can be summarized in three parts, the bidirectional recursion, the definition of the CDKF, and the proposed statistical dependence measure. (a) Given two r.p. $\mathbf{x}$ and $\mathbf{u}$, suppose their joint density $p(\mathbf{x}, \mathbf{u})$ exist, we view the two conditional densities $p(\mathbf{u}|\mathbf{x})$ and $p(\mathbf{x}|\mathbf{u})$ as two iterative functions representing the forward iteration and the backward iteration of the process. By applying the recursively the two iterations, we create a series of r.p. shown as shown in the first part of the diagram. (b) The second part of the diagram shows the definition of CDKF. Starting with a possible realization $x$ of the r.p., by applying the forward iteration, we will obtain a conditional density $p(\mathbf{u}|\mathbf{x}=x)$ that shows the possible realizations that will occur by giving $x$. By applying the backward iteration and marginalizing over $\mathbf{u}$, we obtain a conditional density $p(\mathbf{x}'|\mathbf{x}=x)$ that will evaluate the probability of the recurrence. We will show that the joint density $p(\mathbf{x} = x, \mathbf{x}'=x')$ in fact induces a positive-definite density kernel called CDKF. (c) By applying Mercer's theorem, the eigenvalues of the CDKF is a novel multivariate statistical dependence for the r.p., and the eigenfunctions of the CDKF becomes the functional representation of the r.p.}
\captionof{figure}{\small Diagram of the bidirectional recursion process. (a) Given two r.p. $\mathbf{x}$ and $\mathbf{u}$ with a joint density, we treat the two conditional densities as bidirectional iterative functions. By recursively applying the two conditional mappings, two internal variables $\mathbf{x}'$ and $\mathbf{u}'$ are created. (b) The joint densities $p(x, x')$ and $p(u, u')$ obtained by applying marginalization evaluate the level of independence in the space of the internal variables, represented by red and blue regions. (c) Each joint density induces its NCD and Hilbert space (NCD-HS). By applying Mercer's theorem, the eigenspectrum and eigenfunctions are obtained. The spectrum is the statistical dependence measure, MSD.}\label{figureprocess}
\vspace{-5pt}
\end{figure}

\begin{definition}
(Bidirectional recursion) Suppose two r.p. $\mathbf{x} = \{\mathbf{x}(t)\}_{t=1}^{T_1}$, $\mathbf{u} = \{\mathbf{u}(t)\}_{t=1}^{T_2}$ and their joint \textit{pdf} $p(\mathbf{x} = x, \mathbf{u} = u)$ are given. The two conditional \textit{pdfs} have the forms
\begin{equation}
\begin{aligned}
p(\mathbf{x} = x|\mathbf{u} = u) = p\Big(\mathbf{x}(1) = x(1), \cdots, \mathbf{x}(T_1) = x(T_1)\big|\mathbf{u}(1) = u(1), \cdots, \mathbf{u}(T_2) = u(T_2)\Big), \\
p(\mathbf{u} = u|\mathbf{x} = x) = p\Big(\mathbf{u}(1) = u(1), \cdots, \mathbf{u}(T_2) = u(T_2)\big|\mathbf{x}(1) = x(1), \cdots, \mathbf{x}(T_1) = x(T_1)\Big).
\end{aligned}
\label{forward_backward}
\end{equation}
To simplify the explanation, we assign different directions to each $\textit{pdf}$: $p(\mathbf{u} = u|\mathbf{x} = x)$ as the forward direction and $p(\mathbf{x} = x|\mathbf{u} = u)$ as the backward direction. Interpreting each \textit{pdf} as an iterative function, we construct a variational system, termed bidirectional recursion, by recursively applying both directions of the conditional \textit{pdfs}. Starting with either variable $\mathbf{x}$ or $\mathbf{u}$, we construct two internal variables, $\mathbf{x}'$ and $\mathbf{u}'$, defined by two joint \textit{pdfs}
% To simplify the explanation we attribute different directions to each \textit{pdf}. The \textit{pdf} $p(\mathbf{u} = u|\mathbf{x} = x)$ is interpreted as an iterative function in the forward direction, and  $p(\mathbf{x} = x|\mathbf{u} = u)$ as an iterative function in the backward direction. The bidirectional recursion is a variational system constructed by recursively applying both conditional mappings. Starting with the variable $\mathbf{x}$ (or $\mathbf{u}$), define two internal variables $\mathbf{x}'$ or $\mathbf{u}'$ by computing the joint $pdf$
\begin{equation}
\begin{aligned}
p(\mathbf{x}=x, \mathbf{x}'=x')  &= \int_{\mathcal{U}} p(\mathbf{x}' = x'|\mathbf{u} = u) p(\mathbf{u} = u| \mathbf{x} = x) p(\mathbf{x} = x) pu,\\
p(\mathbf{u}=u, \mathbf{u}'=u') &= \int_{\mathcal{X}} p(\mathbf{u}' = u'|\mathbf{x} = x) p(\mathbf{x} = x| \mathbf{u} = u) p(\mathbf{u} = u) px.
\end{aligned}
\label{cross_densities}
\end{equation}
We name these two joint \textit{pdfs} the cross densities. The four variables $\mathbf{x}, \mathbf{u}, \mathbf{x}', \mathbf{u}'$ form the bidirectional recursion. For convenience, we use the following notations:
\vspace{5pt}

(i) Marginal \textit{pdfs}: $p(x):=p(\mathbf{x}=x),\; p(u):=p(\mathbf{u}=u);$ \vspace{5pt}

(ii) Conditional \textit{pdfs}: $p(u|x):=p(\mathbf{u} = u|\mathbf{x} = x),\; p(x'|u):=p(\mathbf{x} = x'|\mathbf{u} = u) $; \vspace{5pt}

% (iii) \textcolor{black}{The joint and the conditional cross densities}: 
(iii) Joint and conditional cross densities: 
\begin{equation}
\begin{gathered}
p(x, x'):= p(\mathbf{x}=x, \mathbf{x}'=x'), p(u, u') := p(\mathbf{u}=u, \mathbf{u}'=u'),\\
p(x'|x) := p(\mathbf{x}'=x'|\mathbf{x}=x), \,\,p(u'| u) := p(\mathbf{u}'=u'|\mathbf{u}=u).\vspace{5pt}
\end{gathered}
\end{equation}
Each cross density $p(x, x')$ and $p(u, u')$ will induce a positive-definite function.
\label{definition_density}
\end{definition}

\noindent Figure~\ref{definition_density} further illustrates the variational procedure, demonstrating how this construction characterizes statistical dependence. The process of generating internal variables consists of two steps. First, recursion is established by applying two conditional \textit{pdfs} bidirectionally, resulting in three variables. Depending on the initial variable, which can be either $\mathbf{x}$ or $\mathbf{u}$, the system takes the form of $\mathbf{x}\rightarrow\mathbf{u}\rightarrow\mathbf{x}'$ or $\mathbf{u}\rightarrow\mathbf{x}'\rightarrow\mathbf{u}'$. The second step derives the joint \textit{pdf} $p(x, x')$ or $p(u, u')$ by performing marginalization over the support of the intermediate variable, as demonstrated in Equation~\eqref{cross_densities}. The blue and red regions in Figure~\ref{definition_density} represent possible reachable supports that are dependent upon the amount of statistical dependence between the external variables.
\vspace{5pt}

\noindent \textbf{Extreme conditions of statistical dependence.} To further clarify this property, it is best to consider the extreme conditions. Referring to Figure~\ref{definition_density}, consider the case where $\mathbf{x}$ and $\mathbf{u}$ are in a one-to-one correspondence. In this scenario, a given specific realization $x$ of $\mathbf{x}$, there exists one corresponding realization $u$ of $\mathbf{u}$ that has a one-to-one mapping with this particular realization $x$. This implies that there will also be a one-to-one mapping between the realization $x$ and a realization $x'$ of the variable $\mathbf{x}'$, such that the cross density $p(x, x')$ is nonzero if and only if $x' = x$. Conversely, if $\mathbf{x}$ and $\mathbf{u}$ are not in a one-to-one correspondence, there will be multiple $x'$ that define a region of realizations with a positive cross density. In the extreme case where $\mathbf{x}$ and $\mathbf{u}$ are statistically independent, this region expands to the entire support of $\mathbf{x'}$ or $\mathbf{u}'$, as the cross density matches the product of marginal densities. Hence, by analyzing the support of the cross density, we can determine the degree of statistical dependence between two processes. \vspace{5pt}

\noindent \textbf{Equilibrium of the recursion.} A key characteristic of this variational system is its equilibrium. Recall the ACE algorithm, which updates two functions iteratively to maximize the correlation coefficient; our variational system exhibits similar bidirectionality. We find a direct link between ACE and our system through the spectrum decomposition of the NCD, induced by the cross densities. The equilibrium state of the variational system, matches the optimal functions approximated by ACE. However, ACE identifies only the top eigenfunction, ignoring other orthonormal functions in the decomposition. We provide definitions for the two functions, $K(x, x')$ and $K(u, u')$, in Lemma~\ref{lemma_2_kernel}.

\begin{lemma}
(NCD definition) The functions $K(x, x') = \frac{p(x, x')}{p^{\frac{1}{2}}(x) p^{\frac{1}{2}}(x')}$ and $K(u, u') = \frac{p(u, u')}{p^{\frac{1}{2}}(u) p^{\frac{1}{2}}(u')}$ are positive-definite kernel functions, referred to as the Normalized Cross Density (NCD). Each function defines a linear operator associated with a Hilbert space, referred to as the Normalized Cross-Density Hilbert Space (NCD-HS).
\label{lemma_2_kernel}
\end{lemma}

\noindent The proof of the lemma is straightforward and is located in Appendix~\ref{appendix_ncd_def}. An essential characteristic of a kernel function is its eigenfunction expansion. Applying Mercer's theorem to the linear operator associated with the NCD kernel, we show that its eigenvalues numerically measures the statistical dependence.
% Since an essential characteristic of a kernel function is its eigenfunction expansion, we apply Mercer's theorem to the linear operator associated with the NCD kernel. By doing so, we show that the eigenvalues of this operator provide a numerical quantification of statistical dependence.
\begin{corollary}
\textcolor{black}{Applying} Mercer's theorem, the linear operator 
\begin{equation}
\begin{aligned}
\mathcal{T} \varphi (x) = \int_{\mathcal{X}}K(x, x') \varphi (x') dx'
\end{aligned}
\label{linear_operator}
\end{equation}
induces an orthonormal eigenfunction set $\{\phi_i \}_{i=1}^\infty$ and eigenvalues $\{ \lambda_i \}_{i=1}^\infty$ such that \begin{equation}
\begin{gathered}
\int_{\mathcal{X}}\phi^2_i(x) dx = 1,\; \mathcal{T}\phi_i(x) = \lambda_i \phi_i,\; i=1,2,\cdots, \infty,\\
K(x, x') = \sum_{i=1}^\infty \lambda_i \phi_i(x) \phi_i(x').
\end{gathered}
\end{equation}
NCD's eigenspectrum $\{ \lambda_i \}_{i=1}^\infty$ is a novel statistical dependence measure. The other kernel $K(u, u')$ also induces a different orthonormal eigenfunction set $\{\psi_i \}_{i=1}^\infty$. Moreover, the two NCDs $K(x, x')$ and $K(u, u')$ share the same eigenspectrum $\{ \lambda_i \}_{i=1}^\infty$.
\label{eiproperty}
\end{corollary}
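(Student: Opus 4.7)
The plan is to apply Mercer's theorem to each CDKF in isolation for the first two assertions, and then to link the two integral operators through a single cross operator whose polar decomposition forces their nonzero spectra to coincide.

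Since Theorem~\ref{lemma_2_kernel} already establishes that $K(x,x')$ and $K(u,u')$ are symmetric and positive-definite, the first two claims become routine once the mild regularity needed for Mercer is in place: under the paper's standing assumption that the joint densities exist and are Lebesgue measurable, both kernels are square integrable on their respective product spaces, so the induced operators are compact and self-adjoint. Mercer's theorem then delivers the orthonormal eigenfunction families $\{\phi_i\}$, $\{\psi_i\}$, the nonnegative eigenvalues $\{\lambda_i\}$, $\{\mu_i\}$, and the uniformly convergent expansion $K(x,x')=\sum_i\lambda_i\phi_i(x)\phi_i(x')$ together with its $\mathcal{U}$-analog.

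The nontrivial assertion is $\lambda_i=\mu_i$. I would introduce the auxiliary kernel $M(x,u)=p(x,u)/\bigl(p^{1/2}(x)\,p^{1/2}(u)\bigr)$. Substituting $p(x'|u)\,p(u|x)\,p(x)=p(x,u)\,p(x',u)/p(u)$ inside the definition of $p(x,x')$ (and the analogous rewriting for $p(u,u')$) yields the twin factorizations
\begin{equation*}
K(x,x')=\int_{\mathcal{U}} M(x,u)\,M(x',u)\,du,\qquad K(u,u')=\int_{\mathcal{X}} M(x,u)\,M(x,u')\,dx.
\end{equation*}
Letting $A\colon L^2(\mathcal{X})\to L^2(\mathcal{U})$ be the Hilbert-Schmidt operator with kernel $M$ and $A^*$ its adjoint, the operator $T$ of~\eqref{linear_operator} equals $A^*A$, while its counterpart $T'$ built from $K(u,u')$ equals $AA^*$. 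The standard fact that $A^*A$ and $AA^*$ share their nonzero eigenvalues with multiplicities, made explicit via the map $\phi_i\mapsto\psi_i:=\lambda_i^{-1/2}A\phi_i$ (and its obvious inverse on the range), immediately gives $\lambda_i=\mu_i$ on the nonzero spectrum, which is all that matters for the eigenspectrum as a dependence measure.

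The main obstacle I expect is regularity rather than algebra: one must check that $M$ is square integrable, i.e.\ that
\begin{equation*}
\int_{\mathcal{X}\times\mathcal{U}} M^2(x,u)\,dx\,du=\int_{\mathcal{X}\times\mathcal{U}}\frac{p^2(x,u)}{p(x)\,p(u)}\,dx\,du
\end{equation*}
is finite, so that $A$ is genuinely Hilbert-Schmidt and the factorizations $T=A^*A$, $T'=AA^*$ make sense as operator identities. This is a mild hypothesis, equivalent to finiteness of the $\chi^2$ divergence between $p(x,u)$ and $p(x)p(u)$, and is implicit in the paper's regularity assumptions; modulo this technicality, every step above is a direct consequence of standard spectral theory for compact self-adjoint operators.
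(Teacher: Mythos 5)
Your proposal is correct, and on the one substantive claim --- that $K(x,x')$ and $K(u,u')$ share the same eigenvalues --- it takes a genuinely different and more direct route than the paper. The paper states this corollary without proof at this point; the shared-spectrum claim is only justified much later (in the corollary following Theorem~\ref{main_theorem}) by observing that the finite-dimensional matrices $\overline{\mathbf{P}^*}\,\overline{\mathbf{P}^*}^{\intercal}$ and $\overline{\mathbf{P}^*}^{\intercal}\overline{\mathbf{P}^*}$ have the same eigenvalues, an argument that inherits all the hypotheses of the FMCA machinery (attainment of the infimum, universal approximation, a finite output dimension $K$). Your factorization $K(x,x')=\int_{\mathcal{U}}M(x,u)M(x',u)\,du$ with $M(x,u)=p(x,u)/\bigl(p^{1/2}(x)p^{1/2}(u)\bigr)$ is exactly right --- it follows from $p(x'|u)p(u|x)p(x)=p(x,u)p(x',u)/p(u)$ --- and exhibiting $T=A^*A$ and $T'=AA^*$ gives the coincidence of the full nonzero spectra, with multiplicities and with the explicit intertwining $\psi_i=\lambda_i^{-1/2}A\phi_i$, at the operator level and with no optimization in sight. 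The later FMCA corollary is in effect the finite-dimensional shadow of your singular-value-decomposition argument, so your proof both strengthens the statement and decouples it from Section~\ref{section_fmca}. Your flagged regularity condition (finiteness of the $\chi^2$ divergence between $p(x,u)$ and $p(x)p(u)$, so that $A$ is Hilbert--Schmidt) is a genuine gap in the paper's own presentation that you are right to make explicit; the only caveat worth adding is that square integrability alone yields the eigenexpansion in the $L^2$ sense via the spectral theorem for compact self-adjoint operators, whereas the uniform convergence promised by Mercer's theorem proper additionally requires continuity of the kernel on a compact domain --- a point on which the paper is equally silent, so this does not count against you.
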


Throughout the paper, we assume that the eigenvalues are ordered in a descending manner: $\lambda_1 \geq \lambda_2 \geq \cdots$. The eigenfunctions $\phi_1, \phi_2, \cdots$ and $\psi_1, \psi_2, \cdots$ are ordered in correspondence with their respective eigenvalues. The property of two NCDs having identical eigenvalues can be easily proven later in the second property in Lemma~\ref{lemma_solution}, Section~\ref{CORE-THEOREM}. Our next lemma shows that the eigenspectrum of NCD has a unique property in the case of independence.
\begin{lemma}
For any given r.p., the largest eigenvalue of NCD is always equal to 1, that is, $\lambda_1 = 1$. If and only if two r.p. are independent, all the other eigenvalues are zero $\lambda_2 = \lambda_3 = \cdots = 0$.
% the linear operator $T\varphi$ has eigenvalues 

% If two random processes are independent

% The linear operator $T\varphi$ has 

% The linear operator $T\varphi$ has only one positive eigenvalue, i.e. $\lambda_2 = \lambda_3 = \cdots = 0$, if and only if two random processes are independent.
\label{dependence}
\end{lemma}
\begin{proof}
We show $\lambda_1 = 1$ by choosing the function $\phi_1(x) = p^{\frac{1}{2}}(x)$. This function satisfies the condition $\int \phi_1^2(x) dx = \int p(x) dx = 1$, ensuring that its norm is equal to $1$. Next, we show it is invariant to the linear operator:
% We first show $\lambda_1 = 1$ for any given r.p. The strategy is to show there is an eigenfunction $\phi(x) = p^{\frac{1}{2}}(x)$ with an eigenvalue $\lambda_1 = 1$. By choosing $\phi_1(x) = p^{\frac{1}{2}}(x)$, it follows easily $\int \phi_1^2(x) dx = \int p(x) dx = 1$, thus the norm of the function is $1$. Next we apply the linear operator to $\phi_1(x) = p^{\frac{1}{2}}(x)$, which gives
\begin{equation}
\begin{gathered}
T\phi_1(x) = \int_{\mathcal{X}} \frac{p(x, x')}{p^{\frac{1}{2}}(x)p^{\frac{1}{2}}(x')} p^{\frac{1}{2}}(x') dx' = p^{\frac{1}{2}}(x) = \phi_1(x).
\end{gathered}
\end{equation}
Hence, $\phi_1(x) = p^{\frac{1}{2}}(x)$ is an eigenfunction with an eigenvalue $\lambda_1 = 1$. Consider now the case where the two r.p. are independent. In this scenario, the NCD takes the form $K(x, x') = p^{\frac{1}{2}}(x)p^{\frac{1}{2}}(x') = \phi_1(x) \phi_1(x')$. This implies that $K(x, x')$ has only one positive eigenvalue and one eigenfunction. Conversely, if $K(x, x')$ can be expressed as $K(x, x') = \phi_1(x) \phi_1(x') = p^{\frac{1}{2}}(x)p^{\frac{1}{2}}(x')$, it can be shown that
\begin{equation}
\begin{gathered}
K(x, x') = \frac{p(x, x')}{p^{\frac{1}{2}}(x)p^{\frac{1}{2}}(x')} = p^{\frac{1}{2}}(x)p^{\frac{1}{2}}(x') \Rightarrow p(x, x') = p(x)p(x'),
\end{gathered}
\end{equation}
which is satisfied only when the two r.p. $\mathbf{x}$ and $\mathbf{u}$ are independent.\end{proof}
% Lemma~\ref{dependence} shows that the eigenvalues of CDKF satisfy one of the important postulates for any measure of statistical dependence on two r.v.~(\citealt{renyi1959measures}), which states that the value of the measure must vanish when two r.v. are independent. \textcolor{black}{Lemma~\ref{dependence} is an extension of this postulate} to characterize multidimensional statistical independence or dependence by employing the set of eigenvalues from the full eigenspectrum. Now with Lemma~\ref{dependence} as the theoretical foundation, we can claim Definition~\ref{definition_eigenspectrum} for the eigenspectrum. 
Lemma~\ref{dependence} provides the proof that NCD's eigenvalues satisfy the important postulates for any statistical dependence measure between two r.v.~(\citealt{renyi1959measures}): the measure must be zero for independence. Lemma~\ref{dependence} extends this postulate to characterize multidimensional statistical dependence using a set of eigenvalues. Building upon Lemma~\ref{dependence}, we formally propose our statistical dependence measure in Definition~\ref{definition_eigenspectrum}.
\begin{definition}(Multivariate and total statistical dependence) We define the Multivariate Statistical Dependence (MSD) to be NCD's eigenspectrum $\{ \lambda_i \}_{i=1}^\infty$. In many scenarios, the total power of the spectrum is needed. In this paper, we construct the Total Statistical Dependence (TSD) and the corresponding truncated TSD as
% Given any monotonic increasing function $c(\cdot):[0, 1]\rightarrow [0, \infty)$ that satisfies $c(0) = 0$, we define the Total Statistical Dependence (TSD) and the corresponding truncated TSD as \textcolor{red}{FIX THIS}
% \begin{equation}
% \begin{gathered}
% T(\lambda_1, \lambda_2, \cdots; c) = \sum_{i=1}^\infty c(\lambda_i), \;\; T_K(\lambda_1, \lambda_2, \cdots; c) = \sum_{i=1}^K c(\lambda_i).
% \end{gathered} 
% \end{equation} 
% If an NCD has countably infinite eigenvalues that are strictly positive, in which case the TSD is unbounded, the truncated TSD uses the dominant $K$ eigenvalues and ensures that the measure remains bounded. Immediately, function $c$ can be chosen as $c(\lambda) = \lambda$ or polynomials of $\lambda$. In this paper, we choose $c$ to be a special function $c(\lambda) = \frac{1}{2}\log\frac{1}{1-\lambda}$, such that
\begin{equation}
\begin{gathered}
T = -\frac{1}{2} \sum_{i=1}^\infty \log({1-\lambda_i}), \;\; T_K = -\frac{1}{2} \sum_{i=1}^K \log({1-\lambda_i}).
\end{gathered} 
\end{equation} 
First, in this definition, their values can be unbounded when $\lambda = 1$, thus we set each eigenvalue with $\lambda_i \leftarrow \min(\lambda_i, 1-\epsilon)$, where $\epsilon$ is a small positive constant such that for each $i$, the term within the log is strictly positive. Second, since the first eigenvalue is a constant $1$, it can be discarded in the computation in practice, such that the measure becomes zero when independence occurs. In Appendix~\ref{TSD_DEFINITION}, we present a more general definition of TSD with all types of convex functions.
% Note that $c$ is convex between $0$ and $1$. First, for this choice of $c$, the function can be unbounded when $\lambda = 1$, thus we set each eigenvalue with $\lambda_i \leftarrow \min(\lambda_i, 1-\epsilon)$, where $\epsilon$ is a small positive constant such that for each $i$, the term within the log is strictly positive. Second, since the first eigenvalue is a constant $1$, it can be discarded in the computation in practice, such that the measure becomes zero when strict independence occurs.
\label{definition_eigenspectrum}
\end{definition}

% \begin{definition}(Total statistical dependence (TSD)) The eigenspectrum $\{ \lambda_i \}_{i=1}^\infty$ of the NCD functions $K(x, x')$ and $K(u, u')$ serve as a novel multivariate statistical dependence measure, which we refer to as Multivariate Statistical Dependence (MSD). We also define the sum of the eigenvalues as the Total Statistical Dependence (TSD). Approximating the functional maximal correlation with function approximators is a novel learning objective.
% \label{definition_eigenspectrum}
% \end{definition}

% {\color{red}{DEFINE TSD}}

\noindent In the neural network community, measures or cost functions used for statistical dependence estimation have been limited to scalar quantities (e.g. mutual information, correlation coefficient), that can be derived from single or multidimensional sources. For a chosen order $K$, MSD, computed using NCD, defines a new Hilbert space of order $K$ that enables the quantification of all eigenvalues across the spatial or temporal structure of the random process up to order $K$. This allows for an accurate characterization of the error signal's pdf. Additionally, unlike ACE, solving the NCD eigenproblem yields an orthonormal family of functions. This allows for the full characterization of cross densities, surpassing the limitations of ACE that only capture two scalar-valued functions.

\subsection{Learning framework with r.p. realizations}
 \label{design_principle_rp}
% The remaining problem is that the integration of the linear operator is with respect to the Lebesgue measure, and normally we are given realizations of r.p. The standard variational technique of changing the probability measures has been used in the literature of the GP (\citealt{williams2006gaussian}), \textcolor{black}{which can be applied here as follows.}
A remaining issue is that the integration of the linear operator is w.r.t. the Lebesgue measure, while we usually have realizations of r.p. We can apply the standard variational technique of changing the measure in integration, a common practice in the GP literature (\citealt{williams2006gaussian}). We can apply this technique in our context as follows:
\begin{lemma}
Suppose NCD's decomposition yields a set of eigenvalues $\{ \lambda_i \}_{i=1}^\infty$ and eigenfunctions $\{{\phi_i}(x)\}_{i=1}^\infty$. In this case, there exists a set of functions $\{\widehat{\phi_i}(x)\}_{i=1}^\infty$ such that
\begin{equation}
\begin{gathered}
% \int_{\mathcal{X}} \widehat{\phi_i}(x) \widehat{\phi_j}(x) p(x) dx = \begin{cases} 1, \;i=j& \\ 0, \;i\neq j \end{cases}\hspace{-8pt},\hspace{5pt}\int_{\mathcal{U}} \widehat{\psi_i}(u) \widehat{\psi_j}(u) p(u) du = \begin{cases} 1, \;i=j& \\ 0, \;i\neq j \end{cases}\hspace{-8pt}.
\int_{\mathcal{X}} \widehat{\phi_i}(x) \widehat{\phi_j}(x) p(x) dx = \begin{cases} 1, \;i=j& \\ 0, \;i\neq j \end{cases}\hspace{-8pt},\;\; i, j=1,2,\cdots, \infty,
\end{gathered}
\label{Orthonormal}
\end{equation}
\begin{equation}
\begin{gathered}
\mathbb{E}_{\mathbf{x}'}[\widehat{\phi_i} (\mathbf{x}')|x] = \int_{\mathcal{X}} p(x'|x) \widehat{\phi_i} (x') dx' = \lambda_i \widehat{\phi_i} (x)\;\; a.e.p(x), \;\; i=1,2,\cdots, \infty.\\
% \mathbb{E}_{\mathbf{u}'}[\widehat{\psi_i} (\mathbf{u}')|u] = \int_{\mathcal{U}} p(u'|u) \widehat{\psi_i} (u') du' = \lambda_i \widehat{\psi_i} (u)\;\; a.e.p(u), \;\; i=1,2,\cdots, \infty .
\end{gathered}
\label{conditional_mean}
\end{equation}
We refer to Equation~\eqref{Orthonormal} as the orthonormal condition and~\eqref{conditional_mean} as the equilibrium condition. Moreover, the set of functions $\{\widehat{\phi_i}(x)\}_{i=1}^\infty$ approximates a special probabilistic quantity, which we call the cross density ratio (CDR)
\begin{equation}
\begin{gathered}
 \frac{p(x, x')}{p(x) p(x')} = \sum_{i=1}^\infty \lambda_i \widehat{\phi_i}(x) \widehat{\phi_i}(x).
%\; \;\sum_{i=1}^\infty \sum_{i=j}^\infty \lambda_i \widehat{\psi_i}(u) \widehat{\psi_j}(u) = \frac{p(u, u')}{p(u) p(u')}
\end{gathered}
\label{final_equation}
\end{equation}
We denote the CDR as $\rho (x, x'):=\frac{p(x, x')}{p(x) p(x')}$. Similarly, there exists a set of functions $\{\widehat{\psi_i}(u)\}_{i=1}^\infty$ that satisfy the two conditions for the r.p. $\mathbf{u}$, and approximate the CDR \textcolor{black}{$\rho(u, u') := \frac{p(u, u')}{p(u) p(u')} = \sum_{i=1}^\infty \lambda_i \widehat{\psi_i}(u) \widehat{\psi_i}(u)$.}
\label{theorem1}
\end{lemma}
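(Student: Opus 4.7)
The plan is to obtain the functions $\widehat{\phi_i}$ by a change of measure from the Mercer eigenfunctions $\phi_i$ of the kernel $K(x,x')$ already given by Corollary~\ref{eiproperty}. Concretely, define $\widehat{\phi_i}(x) := \phi_i(x)/p^{\frac{1}{2}}(x)$ on the support of $p(x)$ (and extend arbitrarily off it, which is why \eqref{conditional_mean} can only hold $p(x)$-almost everywhere). This is exactly the reweighting used for Gaussian processes when translating a Lebesgue-measure eigenproblem into one with respect to the data measure.

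First I would verify the orthonormal condition \eqref{Orthonormal} by direct substitution: the factor $p(x)$ in the integrand cancels the $p^{\frac{1}{2}}(x)$ denominators, reducing the identity to $\int_{\mathcal{X}} \phi_i(x)\phi_j(x)\,dx = \delta_{ij}$, which is the Lebesgue orthonormality of the Mercer eigenfunctions. Next I would verify the equilibrium condition \eqref{conditional_mean}. Using $p(x'|x) = p(x,x')/p(x)$ and $p(x,x') = K(x,x')\,p^{\frac{1}{2}}(x)\,p^{\frac{1}{2}}(x')$ from the definition of the CDKF, the conditional expectation rearranges to
\begin{equation}
\int_{\mathcal{X}} p(x'|x)\,\widehat{\phi_i}(x')\,dx' = \frac{1}{p^{\frac{1}{2}}(x)} \int_{\mathcal{X}} K(x,x')\,\phi_i(x')\,dx' = \frac{\lambda_i\,\phi_i(x)}{p^{\frac{1}{2}}(x)} = \lambda_i\,\widehat{\phi_i}(x),
\end{equation}
where the middle equality is just the eigenequation $T\phi_i=\lambda_i\phi_i$ from \eqref{linear_operator}.

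To derive \eqref{final_equation} I would take the Mercer expansion $K(x,x') = \sum_i \lambda_i \phi_i(x)\phi_i(x')$ and divide both sides by $p^{\frac{1}{2}}(x)\,p^{\frac{1}{2}}(x')$. The left side becomes $p(x,x')/(p(x)p(x')) = \rho(x,x')$, and the right side becomes $\sum_i \lambda_i\,\widehat{\phi_i}(x)\,\widehat{\phi_i}(x')$, which matches \eqref{final_equation} (reading the typographical $\widehat{\phi_i}(x)\widehat{\phi_i}(x)$ as $\widehat{\phi_i}(x)\widehat{\phi_i}(x')$, consistent with the paired form on the left). The statement for $\{\widehat{\psi_i}\}$ follows identically, replacing $(x,\mathcal{X},p(x),\phi_i)$ by $(u,\mathcal{U},p(u),\psi_i)$ and using the second CDKF $K(u,u')$.

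The routine parts are the two cancellations; the real conceptual point to handle carefully is that this change of variable is only valid where $p(x)>0$, so both \eqref{Orthonormal} and \eqref{conditional_mean} must be interpreted with respect to the measure induced by $p(x)\,dx$ rather than Lebesgue, hence the explicit ``a.e.\ $p(x)$'' in \eqref{conditional_mean}. A secondary issue is convergence of the series in \eqref{final_equation}: Mercer's theorem gives uniform convergence of $K(x,x')$ on a compact support under mild continuity assumptions, and dividing by $p^{\frac{1}{2}}(x)p^{\frac{1}{2}}(x')$ preserves this convergence in the $L^2(p\otimes p)$ sense, which is the natural ambient space for the weighted eigenfunctions $\widehat{\phi_i}$. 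I would flag this as the main technical subtlety, noting that the paper's existence assumptions on $p(x,x')$ together with the positive-definiteness established in Theorem~\ref{lemma_2_kernel} are enough to invoke the standard Mercer expansion and complete the proof.
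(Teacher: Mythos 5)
Your proposal is correct and follows essentially the same route as the paper's Appendix~\ref{CDR_appendix}: define $\widehat{\phi_i}=\phi_i\,p^{-\frac{1}{2}}$ on the support of $p$, check orthonormality by cancellation of the density factors, obtain the equilibrium condition from the eigenequation $T\phi_i=\lambda_i\phi_i$, and divide the Mercer expansion by $p^{\frac{1}{2}}(x)p^{\frac{1}{2}}(x')$ to get the CDR. Your added remarks on the a.e.~$p(x)$ qualification and on convergence of the series are reasonable elaborations of points the paper leaves implicit.
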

As this variational technique is a standard approach, a detailed proof can be found in Appendix~\ref{CDR_appendix}. This lemma reveals that the CDR's variational eigenproblem shares the same eigenvalues as MSD, with eigenfunctions differing only by the term $p^{-\frac{1}{2}}(x)$. For a function to be considered a CDR eigenfunction, the lemma poses two conditions: orthonormal and equilibrium conditions. Both of these conditions rely on empirical quantities, which enable direct optimization of learning systems using neural networks or other functional approximators.

\section{Solving the variational eigenproblem with functional maximal correlation algorithm}
\label{section_fmca}

In practical scenarios, solving the variational eigenproblem with a learning system requires implementing the bidirectional recursion procedure from realizations. Two algorithms closely related to our approach are ACE~(\citealt{breiman1985estimating}), already discussed, and the Maximal Correlation Algorithm (MCA)~(\citealt{hu2021mimo}). Our previous work, MCA, optimizes a neural network to determine a multivariate nonlinear function that maximizes the correlation ratio between input data and desired data, intended for regression and system identification. All these objective functions share a close connection to R\'enyi's maximal correlation~(\citealt{renyi1959measures}).

Building upon these works, we introduce a two neural network architecture as function approximators and the Functional Maximal Correlation Algorithm (FMCA) that follows an optimization scheme resembling recursive updates. FMCA's goal is to achieve equilibrium in the bidirectional recursion. It differs from ACE by approximating two sets of functions using one neural networks with multidimensional outputs for each, rather than just two scalar-valued functions. Concepts from the autocorrelation function (ACF), the cross-correlation function (CCF), and Wiener filters can be adopted to explain the learning and equilibrium condition.

Given two r.p. $\mathbf{x}$ and $\mathbf{u}$, as well as two networks $\mathbf{f}_\theta$ and $\mathbf{g}_\omega$, each network individually processes its corresponding r.p. to generate multivariate features from the respective r.p. realizations. Throughout each iteration, each network predicts the other's network output until they reach equilibrium. We propose a cost function based on the log determinants of ACFs and CCFs, designed to minimize the bidirectional prediction error while preserving the orthonormality of each set of functions.

Lemma~\ref{lemma_solution} and Theorem~\ref{main_theorem_parameterized_model} are the key theoretical contributions of this section, in which we prove that the objective function of FMCA will converge to the CDR's eigenvalues, and the two sets of functions represented by the networks will converge to the CDR's eigenfunctions. Our strategy for the proof is to show that the trained neural networks, as multivariate functions, satisfy both the orthonormality and equilibrium conditions specified in Lemma~\ref{theorem1}.

\begin{figure}[t]
\captionsetup{type=figure}
\centering
{\includegraphics[width=1\textwidth]{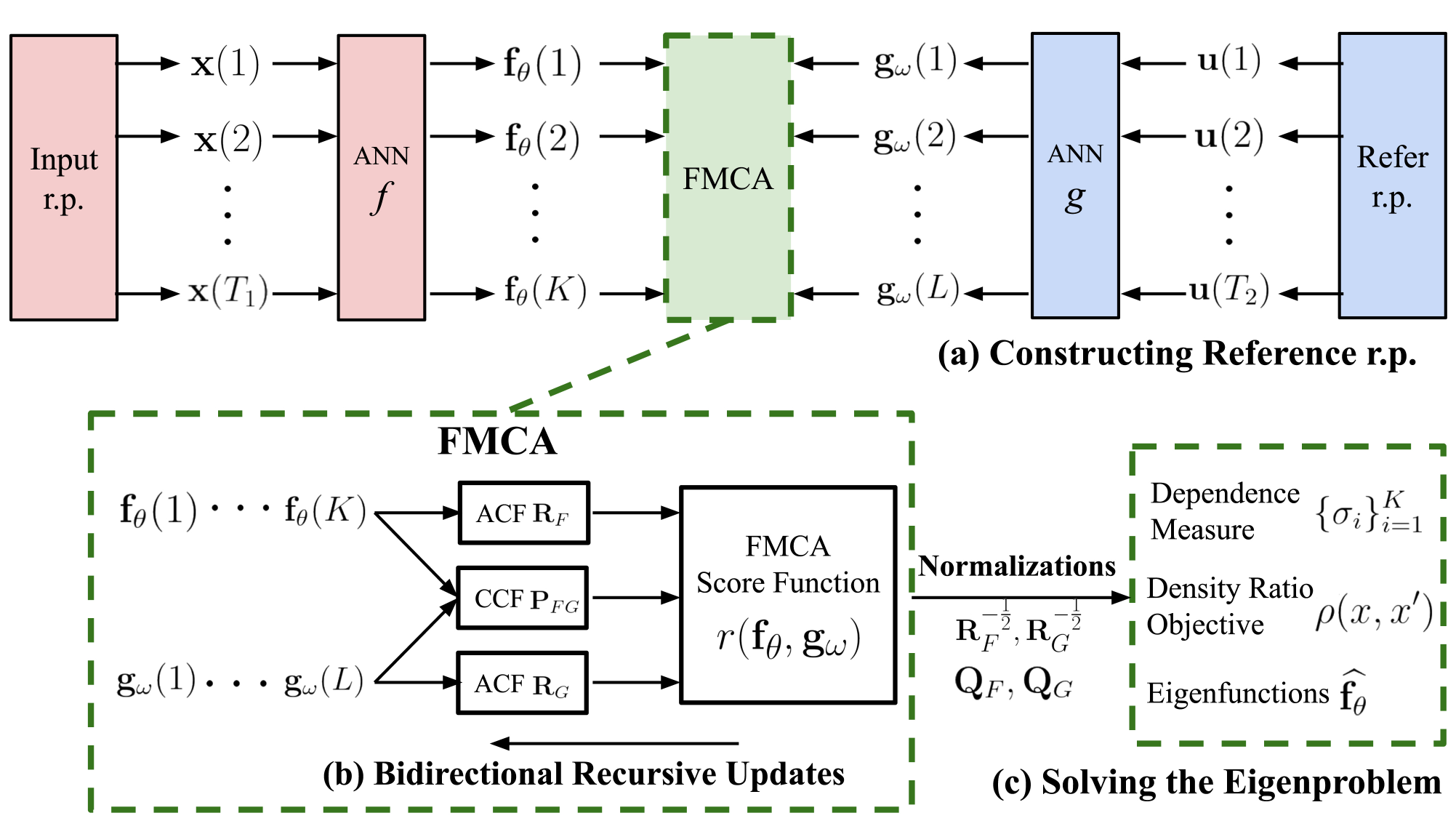}}\\
% \subfloat[]{\includegraphics[width=1\textwidth]{{./figure/figure_sklar}.pdf}}
\captionof{figure}{\small Diagram describing the architecture and the FMCA cost function to find the leading eigenvalues of the NCD. The pipeline has three parts:  constructs the input and reference data latent spaces, optimizes the two neural networks, and obtains the functional decomposition by applying two normalization schemes. (a) Suppose two r.p. are given, FMCA is applied to quantify the statistical dependence between them. For the reference r.p., we will follow design rules introduced in Section~\ref{reference_process} for both supervised and unsupervised learning. (b) The optimization of the two neural networks follows a special bidirectional optimization scheme, where the output of each network is used as the target for the other in alternation. (c) After the networks are trained, we apply two normalization schemes to obtain the NCD eigenvalues and the CDR decomposition.}
\label{figure_fmca_pipeline}
% \vspace{-10pt}
\end{figure}

\subsection{R\'enyi's maximal correlation is the second largest eigenvalue of NCD}
\label{single_variate_representation}

We begin by discussing the connection between R\'enyi's maximal correlation (\citealt{renyi1959measures}) and MSD. As will be shown in Lemma~\ref{lemma_10}, the maximal correlation is in fact the second largest NCD eigenvalue. The solutions for the two scalar-valued functions used in the maximal correlation are the two CDR eigenfunctions.

Given two arbitrary scalar-valued functions $f:\mathcal{X}\rightarrow \mathbb{R}$ and $g:\mathcal{U}\rightarrow \mathbb{R}$, the maximal correlation is defined as the supremum of the correlation coefficient between $f(\mathbf{x})$ and $g(\mathbf{u})$ that can be obtained by searching over all feasible $f$ and $g$, and is expressed as
\begin{equation}
\begin{aligned}
\textbf{cc}(f(\mathbf{x}), g(\mathbf{u})) = \frac{\mathbb{E}[f(\mathbf{x}) g(\mathbf{u})] - \mathbb{E}[f(\mathbf{x})] \mathbb{E}[g(\mathbf{u})]}{\textbf{std}(f(\mathbf{x}))\, \textbf{std}(g(\mathbf{u}))},\;\;\textbf{cc}^*(\mathbf{x}, \mathbf{u}) = \sup_{f, g} \; \textbf{cc}(f(\mathbf{x}), g(\mathbf{u})),
\end{aligned}
\label{form1}
\end{equation}
where $\textbf{std}$ represents the standard deviation. 
\vspace{5pt}

\noindent \textbf{Recursive updates for maximal correlation:} In the above formulation, the two functions can be implemented by two neural networks $f_\theta$ and $g_\omega$. The supremum $\textbf{cc}^*$ in~\eqref{form1} can then be approximated by training these neural networks with gradient ascent using alternating optimization: with a fixed network $f_\theta$, the network $g_\omega$ is updated to match $f_\theta$ such that $g_\omega(u) \leftarrow \frac{\mathbb{E}_\mathbf{x}[f_\theta(\mathbf{x})|u]}{\text{std}(\mathbb{E}_\mathbf{x}[f_\theta(\mathbf{x})|\mathbf{u}])}$, thus increasing the correlation coefficient. Conversely, when $g_\omega$ is fixed, the network $f_\theta$ is updated to match $g_\omega$ such that $f_\theta(x) \leftarrow \frac{\mathbb{E}_\mathbf{u}[g_\omega(\mathbf{u})|x]}{\text{std}(\mathbb{E}_\mathbf{u}[g_\omega(\mathbf{u})|\mathbf{x}])}$. This procedure can be related to the ACE recursive update (as referred in Section~2). By introducing the variational system into the picture, we can theoretically analyze ACE's optimality in the following remark and lemma.

\begin{remark} The single-variate maximal correlation satisfies the following property:
\vspace{5pt}

(i) (Upper and lower bound) The maximal correlation is bounded: $0 \leq \mathbf{cc}^* \leq 1$;\vspace{5pt}

(ii) (Defining a metric) $r^* = \log(1-{\mathbf{cc}^*}^2)$ is a metric between $f$ and $g$;\vspace{5pt}

(iii) (Scale and shift-invariance)  Suppose $f$ and $g$ are the optimal solutions to the maximal correlation, then for any given scalars $a, b, c, d\in \mathbb{R}\backslash\{0\}$, the functions $a f+b$ and $c g+d$ are also the optimal solutions to the maximal correlation; \vspace{5pt}

(iv) (Vanishing condition) $\mathbf{cc}^*= 0$ if and only if two r.v. $\mathbf{x}$ and $\mathbf{u}$ are independent;\vspace{5pt}
 
(v) (Invariance to bijective transformations) Applying any bijective functions to $\mathbf{x}$ or $\mathbf{u}$ does not change the value of the maximal correlation.
\label{remark}
\end{remark}
% Most of the properties in Remark~\ref{remark} are straightforward to prove. 

% Not only these properties are useful for  

% It is easy to spot how the vanishing condition are connected to the CDKF, as we have proven in Lemma~\ref{dependence} that CDKF has only one positive eigenvalue for the case of the strictly independence. 

% Most of the properties in Remark~\ref{remark} are straightforward to prove. The connection between the maximal correlation and the variational eigenproblem can be easily spotted by looking at the vanishing condition. For the case of the strictly independence, Lemma~\ref{dependence} implies that the CDKF has only one positive eigenvalue, while the vanishing condition implies that the maximal correlation is zero. Indeed, we can give the proof of this statement that the maximal correlation is possibly the second largest eigenvalue of the CDKF as follows.
\begin{lemma}

If $f$ and $g$ are optimal solutions for maximal correlation, we construct normalized functions $\overline{f}(x) = \frac{f(x) - \mathbb{E}(f(\mathbf{x}))}{\mathbf{std}(f(\mathbf{x}))}$ and $\overline{g}(u) = \frac{g(u) - \mathbb{E}(g(\mathbf{u}))}{\mathbf{std}(g(\mathbf{u}))}$. Their maximal correlation is the second-largest eigenvalue of CDR, while the functions themselves are the respective eigenfunctions. This can be expressed as $(\mathbf{cc}^*)^2 = \lambda_2$, $\overline{f}(x) = \widehat{\phi}_2(x)$ $a.e.p(x)$, and $\overline{g}(u) = \widehat{\psi}_2(u)$ $a.e.p(u)$.

% \textcolor{red}{notations}

% The maximal correlation is the second-largest eigenvalue and its respective eigenfunctions in the variational eigenproblem, with $(\mathbf{cc}^*)^2 = \lambda_2$, $\overline{f}(x) = \widehat{\phi}_2(x)$ almost everywhere $p(x)$, and $\overline{g}(u) = \widehat{\psi}_2(u)$ almost everywhere $p(u)$.
% Assuming $f^*$ and $g^*$ are the optimal solutions for the maximal correlation, then construct $\overline{f}(x) = \frac{f^*(x) - \mathbb{E}(f^*(\mathbf{x}))}{\mathbf{std}(f^*(\mathbf{x}))}$ and $\overline{g}(u) = \frac{g^*(u) - \mathbb{E}(g^*(\mathbf{u}))}{\mathbf{std}(g^*(\mathbf{u}))}$. The maximal correlation finds the second largest eigenvalue and the corresponding function of the variational eigenproblem, as $(\mathbf{cc}^*)^2 = \lambda_2$, $\overline{f}(x) = \widehat{\phi}_2(x)$ $a.e.p(x)$, and $\overline{g}(u) = \widehat{\psi}_2(u)$ $a.e.p(u)$.
\label{lemma_10}
\end{lemma}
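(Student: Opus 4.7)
The plan is to expand arbitrary test functions in the orthonormal eigenbases provided by Lemma~\ref{theorem1} and reduce the maximal correlation problem to a Cauchy--Schwarz estimate over the CDKF spectrum. First, by the scale and shift invariance of the correlation coefficient (Remark~\ref{remark}(iii)), one may restrict the supremum to functions satisfying $\mathbb{E}[f(\mathbf{x})]=\mathbb{E}[g(\mathbf{u})]=0$ and $\mathbf{std}(f(\mathbf{x}))=\mathbf{std}(g(\mathbf{u}))=1$, whereby the objective collapses to $\mathbf{cc}^*=\sup_{f,g}\mathbb{E}[f(\mathbf{x})g(\mathbf{u})]$ under those constraints.

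Next, the same argument used in Lemma~\ref{dependence} identifies $\widehat{\phi}_1(x)\equiv 1$ as the leading normalized eigenfunction with $\lambda_1=1$, and likewise $\widehat{\psi}_1(u)\equiv 1$. Hence the zero-mean conditions amount to orthogonality with $\widehat{\phi}_1$ in $L^2(p(x))$ and with $\widehat{\psi}_1$ in $L^2(p(u))$. Expanding $f=\sum_{i\geq 2}a_i\widehat{\phi}_i$ and $g=\sum_{i\geq 2}b_i\widehat{\psi}_i$ in the respective orthonormal bases turns the unit-variance constraints into $\sum_{i\geq 2}a_i^2=\sum_{i\geq 2}b_i^2=1$.

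The crux of the argument is to establish the cross-coupling identity
\begin{equation*}
\mathbb{E}[\widehat{\phi}_i(\mathbf{x})\,\widehat{\psi}_j(\mathbf{u})]=\sqrt{\lambda_i}\,\delta_{ij}.
\end{equation*}
I would derive this by viewing the conditional-expectation operators $T_1f(u):=\mathbb{E}[f(\mathbf{x})\mid u]$ and $T_2g(x):=\mathbb{E}[g(\mathbf{u})\mid x]$ as mutually adjoint bounded operators between $L^2(p(x))$ and $L^2(p(u))$. Their composition $T_2T_1$ is exactly the operator appearing in the equilibrium condition~\eqref{conditional_mean}, so its eigenpairs coincide with $(\lambda_i,\widehat{\phi}_i)$; by the singular value decomposition of $T_1$ this forces $T_1\widehat{\phi}_i=\sqrt{\lambda_i}\,\widehat{\psi}_i$ up to a sign that can be absorbed into $\widehat{\psi}_i$, which gives the identity after one conditioning step. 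Plugging the expansions back into the objective then yields $\mathbb{E}[f(\mathbf{x})g(\mathbf{u})]=\sum_{i\geq 2}a_ib_i\sqrt{\lambda_i}$, and the monotonicity $\lambda_2\geq\lambda_i$ for $i\geq 2$ combined with Cauchy--Schwarz bounds this quantity by $\sqrt{\lambda_2}$, with equality precisely when $a_2=b_2=\pm 1$ and all other coefficients vanish. This gives $(\mathbf{cc}^*)^2=\lambda_2$ together with $\overline{f^*}=\widehat{\phi}_2$ and $\overline{g^*}=\widehat{\psi}_2$ almost surely.

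The main obstacle is precisely the cross-coupling step. Lemma~\ref{theorem1} records the eigenequation only for the composed operator on each side separately, so one must pair $\widehat{\phi}_i$ with the corresponding $\widehat{\psi}_i$ across the two $L^2$ spaces with the correct singular value and a consistent sign convention, and also verify that $\{\widehat{\phi}_i\}$ and $\{\widehat{\psi}_i\}$ are complete orthonormal bases rather than merely orthonormal sets so that the expansions above are legitimate. Completeness itself follows from Mercer's theorem applied to the CDKF (Corollary~\ref{eiproperty}) together with the change-of-measure relation $\widehat{\phi}_i=\phi_i/p^{1/2}$, but laying these facts out carefully, rather than the final Cauchy--Schwarz estimate, is where the real work of the proof resides.
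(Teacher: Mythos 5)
Your proposal is correct, but it takes a genuinely different route from the paper. The paper's proof is a first-order/variational argument: it applies Cauchy--Schwarz to $\mathbb{E}\bigl[\overline{f^*}(\mathbf{x})\,\mathbb{E}_{\mathbf{u}}[\overline{g^*}(\mathbf{u})|\mathbf{x}]\bigr]$, notes that equality must hold at the supremum, and deduces the two fixed-point relations $\mathbb{E}_{\mathbf{u}}[\overline{g^*}(\mathbf{u})|x]=\mathbf{cc}^*\,\overline{f^*}(x)$ and $\mathbb{E}_{\mathbf{x}}[\overline{f^*}(\mathbf{x})|u]=\mathbf{cc}^*\,\overline{g^*}(u)$; composing them shows $\overline{f^*}$ satisfies the equilibrium condition~\eqref{conditional_mean} with eigenvalue $(\mathbf{cc}^*)^2$, and the constant eigenfunction $\widehat{\phi}_1\equiv 1$ is excluded because the unit-variance normalization forbids $\mathbf{std}=0$. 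You instead expand $f$ and $g$ in the eigenbases, prove the singular-value pairing $\mathbb{E}[\widehat{\phi}_i(\mathbf{x})\widehat{\psi}_j(\mathbf{u})]=\sqrt{\lambda_i}\,\delta_{ij}$, and optimize over coefficients. Each approach buys something: the paper's avoids the SVD pairing and any completeness question, but it only shows that the optimum, if attained, sits at \emph{some} non-unit eigenvalue --- the achievability of $\sqrt{\lambda_2}$ (that the supremum is at least this large) is left implicit, and your explicit computation with $a_2=b_2=1$ is exactly what closes that gap. Conversely, your approach exposes the whole spectrum (each pair $(\widehat{\phi}_i,\widehat{\psi}_i)$ achieves correlation $\sqrt{\lambda_i}$), which connects directly to Theorem~\ref{main_theorem}. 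Two small technical remarks: the completeness of $\{\widehat{\phi}_i\}$ that you flag as the hard part is actually dispensable --- any component of $f$ in the orthogonal complement of the eigenfunctions lies in the null space of the conditional-expectation operator, contributes nothing to $\mathbb{E}[f(\mathbf{x})g(\mathbf{u})]=\langle T_1 f,g\rangle$, and only wastes variance budget, so it can be dropped without loss; and your equality analysis (like the paper's statement) tacitly assumes $\lambda_2>\lambda_3$, since otherwise the maximizer is only unique up to rotation within the $\lambda_2$-eigenspace.
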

The proof of Lemma~\ref{lemma_10} follows the Cauchy-Schwarz inequality and can be found in~\citealt{renyi1959measures}, with details included in the Appendix~\ref{appendix_mca}. While the algorithm and theory for maximal correlation are well-studied, its limitation as a scalar-valued quantity for r.v. makes it insufficient for many real-world applications. The following example illustrates this insufficiency.

% Although the algorithm and the theory for maximal correlation are complete and thoroughly investigated, the maximal correlation is only a scalar-valued quantity that applies only to r.v., which is insufficient for many real-world applications. 

% The proof of Lemma~\ref{lemma_10} applies the Cauchy-Schwarz inequality and can be found in~\citealt{renyi1959measures}. The details of the proof are included in the appendix. However, finding the maximal correlation is not adequate for many real-world applications, as it applies only to r.v. To illustrate this shortcoming, let us analyze the following example.
\begin{example}
Measuring the statistical dependence of an r.p. requires more than the second largest eigenvalue. Given $\mathbf{x}= \{\mathbf{x}_t\}_{t=1}^{T_1}$ and $\mathbf{u}=\{\mathbf{u}_t\}_{t=1}^{T_2}$, assume a one-to-one mapping exists between the first element of each process, i.e., $\mathbf{x}(1)$ and $\mathbf{u}(1)$. The maps $\pi_x(\mathbf{x}) = \mathbf{x}(1)$ and $\pi_u(\mathbf{u}) = \mathbf{u}(1)$ satisfy $\mathbf{cc}(\pi_x, \pi_u) = 1$. Thus regardless of the length $T_1$, $T_2$ or the functional structure in $\{\mathbf{x}_t\}_{t=2}^{T_1}$ and $\{\mathbf{u}_t\}_{t=2}^{T_2}$, the maximal correlation remains $1$, implying an erroneous estimation of the r.p. functional structure.
% Let $\mathbf{x}= \{\mathbf{x}_t\}_{t=1}^{T_1}$ and $\mathbf{u}=\{\mathbf{u}_t\}_{t=1}^{T_2}$ be given. Suppose there is a one-to-one mapping (strict dependence) between the first element of each process, \textcolor{black}{i.e.} $\mathbf{x}(1)$ and $\mathbf{u}(1)$, the coordinate maps $\pi_x(\mathbf{x}) = \mathbf{x}(1)$ and $\pi_u(\mathbf{u}) = \mathbf{u}(1)$ satisfy $\mathbf{cc}(\pi_x, \pi_u) = 1$. Regardless of the length $T_1$, $T_2$ and the functional structure in $\{\mathbf{x}_t\}_{t=2}^{T_1}$ and $\{\mathbf{u}_t\}_{t=2}^{T_2}$, the maximal correlation will always be $1$. Thus the maximal correlation overestimates the functional structure of the r.p. 
\end{example}
A further limitation is that a single scalar-valued function representation compromises the accuracy of the CDR approximation, since only one eigenfunction of CDR is learned, regardless of the model's complexity or representational power. This shortcoming commonly occurs in neural networks, where the cost is only a scalar variable in regression or the class dimension in classification. These two issues motivated the introducion of the Functional Maximal Correlation Algorithm (FMCA).

% exactly what FMCA improves. 
% This is what 

%  

% Such shortcomings are commonly observed in neural networks, where the cost is a scalar variable. FMCA is developed to overcome this challenge.

%Given that CDR also defines the joint density of internal variables, this suggests that ACE or MCA merely yield suboptimal internal variables. Such shortcomings are commonly observed in neural networks, where the cost is a scalar variable. FMCA has been developed to overcome these challenges.

% Another justification is that the CDR approximation is not accurate with only a single scalar-valued function, regardless of its model complexity or representation power. As CDR further defines the joint density of the internal variables, it means that ACE or MCA only produces suboptimal internal variables. Notice that this shortcoming happens too often in neural networks where the cost is a scalar variable. This is exactly what FMCA improves. 

\subsection{Functional maximal correlation algorithm}
\label{FMCA}
FMCA employs two sets of functions $\{f_i\}_{i=1}^K$ and $\{g_i\}_{i=1}^L$, also denoted as $\mathbf{f}=[f_1, \cdots, f_K]^\intercal:\mathcal{X} \rightarrow \mathbb{R}^{K}$ and $\mathbf{g}=[g_1, \cdots, g_L]^\intercal:\mathcal{U} \rightarrow \mathbb{R}^{L}$. Here we consider $K=L$ for simplicity. Due to the multidimensional nature of the formulation, the correlation coefficient used in MCA no longer applies; thus, a new cost function is needed to quantify the metric distance between these two sets of functions. Recall that in Definition~\ref{definition_eigenspectrum}, we defined the Total Statistical Dependence (TSD) as the total spectrum power. Hence, TSD will be used as an optimization objective here. Preserving R\'enyi's intuition of maximal correlation, we define the cost with the autocorrelation function (ACF) and the cross-correlation function (CCF) between network outputs. The following abbreviations will be used.

\begin{definition} Given $\mathbf{f}$ and $\mathbf{g}$, define:
\vspace{5pt}

(i) The ACF of $\mathbf{f}(\mathbf{x}):$ $\mathbf{R}_F := \mathbf{R}(\mathbf{f}(\mathbf{x})) = \mathbb{E}[\mathbf{f}(\mathbf{x})  \mathbf{f}^\intercal(\mathbf{x}) ] = \int_{\mathcal{X}} \mathbf{f}(x) \mathbf{f}^\intercal(x) p(x) dx;$ \vspace{5pt}

(ii) The ACF of $\mathbf{g}(\mathbf{u}):$ $\mathbf{R}_G := \mathbf{R}(\mathbf{g}(\mathbf{u})) = \mathbb{E}[\mathbf{g}(\mathbf{u})  \mathbf{g}^\intercal(\mathbf{u}) ]= \int_{\mathcal{U}} \mathbf{g}(u) \mathbf{g}^\intercal(u) p(u) du ;$ \vspace{5pt}

(iii) The CCF between $\mathbf{f}(\mathbf{x})$ and $\mathbf{g}(\mathbf{u}):$
\begin{equation}
\begin{gathered}
\mathbf{P}_{FG} := \mathbf{P}(\mathbf{f}(\mathbf{x}), \mathbf{g}(\mathbf{u})) = \mathbb{E}[\mathbf{f}(\mathbf{x})  \mathbf{g}^\intercal(\mathbf{u}) ] = \int_{\mathcal{X}}\int_\mathcal{U} \mathbf{f}(x) \mathbf{g}^\intercal(u) p(x, u) dx du;
\end{gathered}
\end{equation}

(iv) The joint ACF combining $\mathbf{f}(\mathbf{x})$ and $\mathbf{g}(\mathbf{u}):$
$\mathbf{R}_{FG} := \mathbf{R}(\mathbf{f}(\mathbf{x}), \mathbf{g}(\mathbf{u})) = \begin{bmatrix}\mathbf{R}_F & \mathbf{P}_{FG} \\
\mathbf{P}^\intercal_{FG} & \mathbf{R}_G
\end{bmatrix}.
$\vspace{5pt}

% \begin{equation}
% \begin{gathered}
% \mathbf{R}(\mathbf{f}(\mathbf{x})) = \mathbb{E}[\mathbf{f}(\mathbf{x})  \mathbf{f}^\intercal(\mathbf{x}) ] = \int_{\mathcal{X}} \mathbf{f}(x) \mathbf{f}^\intercal(x) p(x) dx
% \\
% \mathbf{R}(\mathbf{g}(\mathbf{u})) = \mathbb{E}[\mathbf{g}(\mathbf{u})  \mathbf{g}^\intercal(\mathbf{u}) ]= \int_{\mathcal{U}} \mathbf{g}(u) \mathbf{g}^\intercal(u) p(u) dx \\
% \mathbf{P}(\mathbf{f}(\mathbf{x}), \mathbf{g}(\mathbf{u})) = \mathbb{E}[\mathbf{f}(\mathbf{x})  \mathbf{g}^\intercal(\mathbf{u}) ] = \int_{\mathcal{X}}\int_\mathcal{U} \mathbf{f}(x) \mathbf{g}^\intercal(u) p(x, u) dx du\\
% \end{gathered}
% \end{equation}
\label{definition_11}
\end{definition}
Based on Definition~\ref{definition_11}, we define the cost function of FMCA as follows:
\begin{equation}
\begin{gathered}
r(\mathbf{f}(\mathbf{x}), g(\mathbf{u})) = \log\det \mathbf{R}(\mathbf{f}(\mathbf{x}), \mathbf{g}(\mathbf{u}))-\log \det \mathbf{R}(\mathbf{f}(\mathbf{x})) - \log\det \mathbf{R}(\mathbf{g}(\mathbf{u})).
\end{gathered}
\label{fmca_cost}
\end{equation}
FMCA finds the infimum of the cost function~\eqref{fmca_cost}. This problem can be formulated as:
\begin{equation}
\begin{gathered}
r^*(\mathbf{x}, \mathbf{u}) = \inf_{\mathbf{f}, \mathbf{g}} r(\mathbf{f}(\mathbf{x}), \mathbf{g}(\mathbf{u})).
\end{gathered}
\label{fmca_maximal}
\end{equation}
For regularization, a small diagonal matrix $\epsilon \mathbf{I}$ is added to the ACF when computing the log determinant. As stated, during optimization, the two sets of functions, $\mathbf{f}$ and $\mathbf{g}$, are replaced by two multi-output neural networks, $\mathbf{f}_\theta$ and $\mathbf{g}_\omega$.

% The two function families, $\mathbf{f}$ and $\mathbf{g}$, can be replaced by two multi-output neural networks, $\mathbf{f}_\theta$ and $\mathbf{g}_\omega$, for optimization purposes.

% The maximal correlation~\eqref{form1} is a special case of FMCA with $K=L=2$. The two function families, $\mathbf{f}$ and $\mathbf{g}$, can be replaced by two multi-output neural networks, $\mathbf{f}_\theta$ and $\mathbf{g}_\omega$, for optimization purposes.

% For the purpose of regularization, a diagonal matrix with small values $\epsilon I$ is added to the ACFs whenever the log-determinant is computed. 

% The maximal correlation~\eqref{form1} is a special case of FMCA when $K=L=2$. The two families of functions $\mathbf{f}$ and $\mathbf{g}$ can also be substituted by two multiple-output neural networks $\mathbf{f}_\theta$ and $\mathbf{g}_\omega$ for optimization. 

\vspace{5pt}
\noindent \textbf{Recursive updates of FMCA:} The bidirectional recursive updates of FMCA can be shown by taking the Schur complement of $\mathbf{R}_{FG}$, as:

\vspace{-7pt}

%In order to handle the matrix form of the cost function in ~\eqref{fmca_maximal}, we propose using the log determinant, which by applying the Schur complement to $\mathbf{R}_{FG}$ takes the form:

% FMCA uses \textcolor{black}{bidirectional recursive updates} to obtain the infimum~\eqref{fmca_maximal}. This can be shown by applying the Schur complement to $\mathbf{R}_{FG}$ as
\begin{equation}
\begin{aligned}
\log\det \begin{bmatrix}{\mathbf{R}}_F & {\mathbf{P}}_{FG} \\
{\mathbf{P}}^\intercal_{FG} & {\mathbf{R}}_G
\end{bmatrix} &= \log\det {\mathbf{R}}_F + \log\det ({\mathbf{R}}_G  - {\mathbf{P}}^\intercal_{FG} {\mathbf{R}}_F^{-1} {\mathbf{P}}_{FG})\\
&= \log\det {\mathbf{R}}_G + \log\det ({\mathbf{R}}_F  - {\mathbf{P}}_{FG} {\mathbf{R}}_G^{-1} {\mathbf{P}}^\intercal_{FG}).
\end{aligned}
\label{equation_RFG}
\end{equation} 
Furthermore, using this form of $\mathbf{R}_{FG}$ in the cost function~\eqref{fmca_cost} yields
% Next, these two forms are applied to the score function $r(\mathbf{f}(\mathbf{x}), g(\mathbf{u}))$, which yields
\begin{equation}
\begin{aligned}
r(\mathbf{f}(\mathbf{x}), g(\mathbf{u})) &= \log\det ({\mathbf{R}}_G  - {\mathbf{P}}^\intercal_{FG} {\mathbf{R}}_F^{-1} {\mathbf{P}}_{FG}) - \log\det {\mathbf{R}}_G \\
&= \log\det ({\mathbf{R}}_F  - {\mathbf{P}}_{FG} {\mathbf{R}}_G^{-1} {\mathbf{P}}^\intercal_{FG}) - \log\det {\mathbf{R}}_F.
\end{aligned}
\label{wiener_solution_schuml}
\end{equation}
For those familiar with Wiener filter theory, the argument of the first log determinant form in~\eqref{wiener_solution_schuml} is equivalent to the prediction error obtained from applying two optimal Wiener filters on each network outputs. In other words, the two networks become optimal linear projectors for each other, complemented by an additional term, either $\log\det {\mathbf{R}}_F$ or $\log\det {\mathbf{R}}_G$, which ensures an invertible ACF. Equation~\eqref{wiener_solution_schuml} shows that $r$ can be expressed in two different forms, depending on the direction in which the Schur complement is applied, corresponding to two optimal Wiener filters applied in different directions. The following subsection discusses our key contribution, which demonstrates that, through recursive updates, this cost function will approach the truncated TSD, and the optimization will solve the decomposition problem of CDR as the network learns the dominant eigenvalues and eigenfunctions.

\subsection{FMCA learns CDR's orthonormal decomposition}
\label{CORE-THEOREM}
The strategy for connecting FMCA and NCD is to prove that the functions optimized by recursive updates satisfy both the orthonormal condition~\eqref{Orthonormal} and the equilibrium condition~\eqref{conditional_mean}, as stated in Lemma~\ref{theorem1}. This section is divided into the following segments. \vspace{-5pt}

\begin{enumerate}[leftmargin=*]
\item \textbf{Lemma~\ref{lemma_less_than_0} and Lemma~\ref{LEMMA_ORTHONORMAL}:} The essential properties of FMCA are proved, including that the orthonormal condition can be easily satisfied through normalization, with $\overline{\mathbf{f}} = {\mathbf{R}_F}^{-\frac{1}{2}} \mathbf{f}$ and $\overline{\mathbf{g}} = {\mathbf{R}_G}^{-\frac{1}{2}} \mathbf{g}$. Our primary focus will then shift to proving the equilibrium condition, which requires the network outputs to be invariant w.r.t. the linear operator of the CDR.\vspace{-5pt}
\item \textbf{Lemma~\ref{matrix_inequality_lemma} and Corollary~\ref{corollary_matrix_inequality}:} A matrix inequality~\eqref{matrix_inequality} is introduced, analogous to the Cauchy-Schwarz inequality for single-variate maximal correlation. The matrix inequality becomes equality when Wiener solutions, $\mathbf{m}_F = \overline{\mathbf{P}} \overline{\mathbf{g}}$ and $\mathbf{m}_G = \overline{\mathbf{P}}^\intercal \overline{\mathbf{f}}$, are satisfied. \vspace{-5pt}

\item \textbf{Lemma~\ref{lemma_solution}:} Wiener solutions imply that the equlibrium condition~\eqref{conditional_mean} is satisfied, upon normalization $\widehat{\mathbf{f}} = {\mathbf{Q}_F}^\intercal \overline{\mathbf{f}}$ and $\widehat{\mathbf{g}} = {\mathbf{Q}_G}^\intercal \overline{\mathbf{g}}$. \vspace{-5pt}

\item\textbf{Theorem~\ref{main_theorem_parameterized_model}:} Wiener solutions are equivalent to the cost optimality, proven by examining the cost's first-order partial derivatives. Due to its length, the proof of Theorem~\ref{main_theorem_parameterized_model} can be found in Appendix~\ref{main_proof}. 
\end{enumerate}\vspace{-5pt}
\noindent We now begin with the essential properties of FMCA and the orthonormal condition. \vspace{-5pt}

\begin{lemma}
(Cost function’s upper bound) For any $\mathbf{f}$ and $\mathbf{g}$ with invertible $\mathbf{R}_F$ and $\mathbf{R}_G$, the FMCA cost function is upper bounded by $0$, i.e.
\begin{equation}
\begin{aligned}
r(\mathbf{f}(\mathbf{x}), \mathbf{g}(\mathbf{u})) \leq 0,
\end{aligned}
\end{equation}
and subsequently the optimal cost function $r^*(\mathbf{x}, \mathbf{u}) \leq 0$.
\label{lemma_less_than_0}
\end{lemma}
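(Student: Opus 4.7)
The plan is to apply Fischer's determinantal inequality for positive semidefinite block matrices, which is the natural multivariate analogue of the bound $\textbf{cc}^* \leq 1$ from Remark~\ref{remark}. Equivalently, the inequality drops out of the Schur complement identity~\eqref{equation_RFG} that the authors have already recorded, combined with the observation that $\mathbf{R}_{FG}$ is automatically positive semidefinite.

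First I would observe that
\begin{equation*}
\mathbf{R}_{FG} = \mathbb{E}\!\left[\begin{bmatrix}\mathbf{f}(\mathbf{x})\\ \mathbf{g}(\mathbf{u})\end{bmatrix}\begin{bmatrix}\mathbf{f}(\mathbf{x})\\ \mathbf{g}(\mathbf{u})\end{bmatrix}^\intercal\right],
\end{equation*}
so $\mathbf{R}_{FG} \succeq 0$. Combined with the hypothesis that $\mathbf{R}_F$ and $\mathbf{R}_G$ are invertible, both diagonal blocks are strictly positive definite, so all inverses and square roots that appear below are well defined.

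Next I would substitute the upper Schur complement form in~\eqref{equation_RFG} into the definition of the score function~\eqref{fmca_cost}, which collapses two of the three terms:
\begin{equation*}
r(\mathbf{f}(\mathbf{x}), \mathbf{g}(\mathbf{u})) = \log\det(\mathbf{R}_G - \mathbf{P}^\intercal_{FG}\mathbf{R}_F^{-1}\mathbf{P}_{FG}) - \log\det \mathbf{R}_G = \log\det(\mathbf{I} - \mathbf{M}),
\end{equation*}
where $\mathbf{M} := \mathbf{R}_G^{-1/2}\mathbf{P}^\intercal_{FG}\mathbf{R}_F^{-1}\mathbf{P}_{FG}\mathbf{R}_G^{-1/2}$, obtained by pulling $\mathbf{R}_G^{1/2}$ out of both sides inside the first determinant and using $\det(AB)=\det(A)\det(B)$. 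The matrix $\mathbf{M}$ is PSD since it has the form $A^\intercal A$ with $A = \mathbf{R}_F^{-1/2}\mathbf{P}_{FG}\mathbf{R}_G^{-1/2}$. Moreover, because $\mathbf{R}_{FG} \succeq 0$ and $\mathbf{R}_F \succ 0$, its Schur complement $\mathbf{R}_G - \mathbf{P}^\intercal_{FG}\mathbf{R}_F^{-1}\mathbf{P}_{FG}$ is PSD; conjugating by $\mathbf{R}_G^{-1/2}$ says exactly $\mathbf{I} - \mathbf{M} \succeq 0$. Hence every eigenvalue of $\mathbf{M}$ lies in $[0,1]$, so $\det(\mathbf{I} - \mathbf{M}) \in [0,1]$ and $\log\det(\mathbf{I} - \mathbf{M}) \leq 0$. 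Taking the infimum over $\mathbf{f},\mathbf{g}$ then gives $r^*(\mathbf{x},\mathbf{u}) \leq 0$.

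There is no serious obstacle: every step is a textbook consequence of positive semidefiniteness, and the only mild care needed is the determinantal rewriting above. By symmetry one could equally well start from the lower Schur form in~\eqref{equation_RFG} and perform the analogous reduction with the roles of $\mathbf{R}_F$ and $\mathbf{R}_G$ swapped, which confirms that the bound does not depend on which block is taken as the ``pivot.''
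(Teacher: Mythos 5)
Your proof is correct, but it takes a different route from the paper's. The paper follows the trace-based proof of Fischer's inequality: it conjugates $\mathbf{R}_{FG}$ by $\operatorname{diag}(\mathbf{R}_F^{-1/2},\mathbf{R}_G^{-1/2})$ to obtain a normalized matrix $\overline{\mathbf{R}_{FG}}$ with identity diagonal blocks, observes that its trace is the constant $K+L$, and applies the AM--GM inequality to its (nonnegative) eigenvalues to get $\det\overline{\mathbf{R}_{FG}}\leq 1$, then identifies $\log\det\overline{\mathbf{R}_{FG}}$ with the score function. You instead start from the Schur complement identity~\eqref{equation_RFG}, collapse the score to $\log\det(\mathbf{I}-\mathbf{M})$ with $\mathbf{M}=\mathbf{R}_G^{-1/2}\mathbf{P}_{FG}^\intercal\mathbf{R}_F^{-1}\mathbf{P}_{FG}\mathbf{R}_G^{-1/2}$, and use $\mathbf{0}\preceq\mathbf{M}\preceq\mathbf{I}$ (the upper bound coming from positive semidefiniteness of the Schur complement of the PSD matrix $\mathbf{R}_{FG}$). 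Both are textbook derivations of the same determinantal inequality; each step of yours checks out, including the expectation-of-outer-products argument for $\mathbf{R}_{FG}\succeq 0$, which the paper leaves implicit when it invokes AM--GM. Your route buys a slightly more self-contained argument and connects directly to the quantity $\det(\mathbf{I}_K-\overline{\mathbf{P}^*}\,\overline{\mathbf{P}^*}^{\intercal})$ that reappears in Theorem~\ref{main_theorem}, whereas the paper's route has the side benefit of introducing the normalized block matrix $\overline{\mathbf{R}_{FG}}$ and the half-power normalization that are reused in Lemma~\ref{lemma_invariance} and throughout the rest of the section. The only cosmetic caveat, shared by both proofs, is that when $\det(\mathbf{I}-\mathbf{M})=0$ the score degenerates to $-\infty$, which is still consistent with the claimed upper bound.
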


\begin{lemma}
(Enforcing orthonormality) For any given $\mathbf{f}$ and $\mathbf{g}$ with invertible $\mathbf{R}_F$ and $\mathbf{R}_G$, construct normalizations
\begin{equation}
\begin{aligned}
\overline{\mathbf{f}} = {\mathbf{R}_F}^{-\frac{1}{2}} \mathbf{f}, \;\; \overline{\mathbf{g}} = {\mathbf{R}_G}^{-\frac{1}{2}} \mathbf{g}.
\end{aligned}
\label{normalize_inverse}
\end{equation}
Each of the two sets of functions, represented by $\overline{\mathbf{f}}$ or $\overline{\mathbf{g}}$, satisfies the orthonormal condition~\eqref{Orthonormal}, meaning they are orthonormal w.r.t. the marginal distributions. Moreover, these sets of functions satisfy the following properties:
\vspace{3pt}

\noindent (i) Normalization does not change the cost: $r(\overline{\mathbf{f}}, \overline{\mathbf{g}}) = r(\mathbf{f}, \mathbf{g});$ \vspace{3pt}

\noindent (ii) The cost is invariant under affine transformations. Any pair of elements from sets
\begin{equation}
\begin{aligned}
\mathcal{F} &= \{\mathbf{A} \overline{\mathbf{f}} : \mathbf{A} \in \mathbb{R}^{K \times K} \hspace{-4pt}\text{ and full rank}\},\;\;\mathcal{G} = \{\mathbf{B} \overline{\mathbf{g}} : \mathbf{B} \in \mathbb{R}^{L \times L} \text{ and full rank}\}.
\end{aligned}
\label{solution_set}
\end{equation}
share the same cost function as $\overline{\mathbf{f}}$ and $\overline{\mathbf{g}}$. Assuming $\mathbf{A}\mathbf{A}^\intercal = \mathbf{I}_K$ and $\mathbf{B}\mathbf{B}^\intercal = \mathbf{I}_L$, the transformation will also preserve orthonormality.
\label{LEMMA_ORTHONORMAL}
\end{lemma}

\begin{proof}
% Lemma~\ref{lemma_less_than_0} and Lemma~\ref{LEMMA_ORTHONORMAL} follow the conventional inequality of log determinants (\citealt{marcus1992survey}). The proof begins with diagonalizing $\mathbf{R}_{FG}$ by normalizing $\mathbf{f}$ and $\mathbf{g}$. Assuming that $\mathbf{R}_F$ and $\mathbf{R}_G$ are full rank, and defining their negative half powers as $\mathbf{R}_F^{-\frac{1}{2}}$ and $\mathbf{R}_G^{-\frac{1}{2}}$, we can diagonalize $\mathbf{R}_{FG}$ with
The proofs of Lemmas~\ref{lemma_less_than_0} and~\ref{LEMMA_ORTHONORMAL} follow conventional log determinant inequalities (\citealt{marcus1992survey}) by normalizing $\mathbf{f}$ and $\mathbf{g}$ to diagonalize $\mathbf{R}_{FG}$. Given $\mathbf{R}_F$ and $\mathbf{R}_G$ are full rank, their negative half powers, $\mathbf{R}_F^{-\frac{1}{2}}$ and $\mathbf{R}_G^{-\frac{1}{2}}$, are used for this diagonalization:
\begin{equation}
\begin{aligned}
\begin{bmatrix}\mathbf{R}^{-\frac{1}{2}}_F & \mathbf{0} \\
\mathbf{0} & \mathbf{R}^{-\frac{1}{2}}_G
\end{bmatrix} \begin{bmatrix}\mathbf{R}_F & \mathbf{P}_{FG} \\
\mathbf{P}^\intercal_{FG} & \mathbf{R}_G.
\end{bmatrix} \begin{bmatrix}\mathbf{R}^{-\frac{1}{2}}_F & \mathbf{0} \\
\mathbf{0} & \mathbf{R}^{-\frac{1}{2}}_G
\end{bmatrix} = \begin{bmatrix} \mathbf{I}_K & \mathbf{R}^{-\frac{1}{2}}_F \mathbf{P}_{FG} \mathbf{R}^{-\frac{1}{2}}_G \\
\mathbf{R}^{-\frac{1}{2}}_G \mathbf{P}^\intercal_{FG} \mathbf{R}^{-\frac{1}{2}}_F & \mathbf{I}_L
\end{bmatrix}:= \overline{\mathbf{R}_{FG}}.
\end{aligned}
\label{normalized}
\end{equation}
Since the diagonal elements of $\overline{\mathbf{R}_{FG}}$ are $1$, its trace, $\text{Tr}(\overline{\mathbf{R}_{FG}}) = K+L$, is constant for any feasible $\mathbf{f}$ and $\mathbf{g}$. Denoting the eigenvalues of $\overline{\mathbf{R}_{FG}}$ as $\{\lambda_i(\overline{\mathbf{R}_{FG}})\}_{i=1}^{K+L}$, we obtain:
\begin{equation}
\begin{aligned}
\text{Tr}(\overline{\mathbf{R}_{FG}}) = \sum_{i=1}^{K+L} \lambda_i(\overline{\mathbf{R}_{FG}}) = K+L.
\end{aligned}
\label{determinant_inequality}
\end{equation}
The next step is to apply the inequality of arithmetic and geometric means to the eigenvalues:
\begin{equation}
\begin{aligned}
\det \overline{\mathbf{R}_{FG}} = \prod_{i=1}^{K+L} \lambda_i(\overline{\mathbf{R}_{FG}}) 
\leq \left(\frac{1}{K+L}\sum_{i=1}^{K+L} \lambda_i(\overline{\mathbf{R}_{FG}}) \right)^{K+L}= 1,
\end{aligned}
\label{inequality}
\end{equation}
The cost function, $r(\mathbf{f}(\mathbf{x}), \mathbf{g}(\mathbf{u}))$, can be written as the log determinant of $\overline{\mathbf{R}_{FG}}$: \vspace{2pt}
\begin{equation}
\begin{aligned}
\log\det \overline{\mathbf{R}_{FG}} = \log\det \mathbf{R}_{FG} - \log\det \mathbf{R}_F -  \log\det \mathbf{R}_G := r(\mathbf{f}(\mathbf{x}), \mathbf{g}(\mathbf{u})).
\end{aligned}
\label{equation_31}
\end{equation}
\noindent Applying the inequality~\eqref{inequality} to the cost~\eqref{equation_31} yields $r(\mathbf{f}(\mathbf{x}), \mathbf{g}(\mathbf{u}))\leq 0$. Since the optimal cost, $r^*(\mathbf{x}, \mathbf{u})$, is the infimum, it follows that $r^*(\mathbf{x}, \mathbf{u})\leq 0$.

The orthonormal condition follows immediately. By constructing normalizations~\eqref{normalize_inverse}, it can be shown that $\int \overline{\mathbf{f}}(x) \overline{\mathbf{f}}(x)^\intercal p(x) dx = \mathbf{I}_K$ and $\int \overline{\mathbf{g}}(u) \overline{\mathbf{g}}(u)^\intercal p(u) du = \mathbf{I}_L$. Any affine transformations of two functions with a pair of full-rank matrices, $\mathbf{A}$ and $\mathbf{B}$, do not change the value of the cost function:
\begin{equation}
\resizebox{1\linewidth}{!}{
$\begin{gathered}
\mathbf{R}(\mathbf{A} \overline{\mathbf{f}}) = \mathbf{A} \mathbf{A}^\intercal,\;\;
\mathbf{R}(\mathbf{B} \overline{\mathbf{g}}) = \mathbf{B} \mathbf{B}^\intercal, \;\; 
\mathbf{P}(\mathbf{A} \overline{\mathbf{f}}, \mathbf{B} \overline{\mathbf{g}}) = \mathbb{E}[\mathbf{A} \overline{\mathbf{f}}\;\overline{\mathbf{g}}^\intercal \mathbf{B}^\intercal]= \mathbf{A} \overline{\mathbf{P}_{FG}} \mathbf{B}^\intercal,\;\;\;\;\;\;\;\;\;\;\;\;\;\;\;\;\;\;\;\;\;\;\;\;\;\;\;\;\;\;\; \vspace{3pt}\\
\mathbf{R}(\mathbf{A} \overline{\mathbf{f}}, \mathbf{B} \overline{\mathbf{g}}) = \begin{bmatrix}\mathbf{R}(\mathbf{A} \overline{\mathbf{f}}) & \mathbf{P}(\mathbf{A} \overline{\mathbf{f}}, \mathbf{B} \overline{\mathbf{g}}) \vspace{3pt}\\
\mathbf{P}^\intercal(\mathbf{A} \overline{\mathbf{f}}, \mathbf{B} \overline{\mathbf{g}}) & \mathbf{R}(\mathbf{B} \overline{\mathbf{g}})
\end{bmatrix} = \begin{bmatrix}\mathbf{A} & \mathbf{0} \\
\mathbf{0} & \mathbf{B}
\end{bmatrix} \overline{\mathbf{R}_{FG}} \begin{bmatrix}\mathbf{A}^\intercal & \mathbf{0} \\
\mathbf{0} & \mathbf{B}^\intercal 
\end{bmatrix} \Rightarrow r(\mathbf{A} \overline{\mathbf{f}}, \mathbf{B} \overline{\mathbf{g}}) = r(\overline{\mathbf{f}}, \overline{\mathbf{g}}).
\end{gathered}$}
\label{quantities_affine}
\end{equation}
Therefore, the normalized functions $\overline{\mathbf{f}}$ and $\overline{\mathbf{g}}$ form two sets~\ref{solution_set}, making the cost function invariant to any affine transformation.
\end{proof}
The next step is to further normalize functions $\overline{\mathbf{f}}$ and $\overline{\mathbf{g}}$ such that they satisfy the equilibrium condition~\eqref{conditional_mean}. This involves the eigenfunctions and eigenvalues of the matrices $\overline{\mathbf{P}}\;\overline{\mathbf{P}}^\intercal$ and $\overline{\mathbf{P}}^\intercal\;\overline{\mathbf{P}}$. Lemma~\ref{matrix_inequality_lemma} and Corollary~\ref{corollary_matrix_inequality} will first show a special matrix inequality that coincides with the Cauchy-Schwarz inequality in single-variate maximal correlation. The condition for this matrix inequality to become equality is the well-known Wiener solutions, meaning that each function's projection is the conditional mean of the other.

% Firstly, Lemma~\ref{matrix_inequality_lemma} and Corollary~\ref{corollary_matrix_inequality} will show a specific matrix inequality that matches with the Cauchy-Schwarz equality in the single-variate maximal correlation. The condition for this matrix inequality to become equality is the famous Wiener solutions, where the projection of each function is the conditional mean of the other. 

% Notably, the criterion for this matrix inequality to achieve the state of equality is epitomized by the renowned Wiener solutions, wherein each function manifests as a projection of the conditional mean of the other.

% Next, Lemma~\ref{matrix_inequality_lemma} and Corollary~\ref{corollary_matrix_inequality} will show a special matrix inequality, in correspondence to the Cauchy-Swatz equality in single-variate maximal correlation. The condition for this matrix inequality to become equality is the famous Wiener solutions that each function is a projection of the conditional mean of each other.  

%As the main statement is provided in Lemma~\ref{lemma_solution}, Lemma~\ref{matrix_inequality_lemma} and Corollary~\ref{corollary_matrix_inequality} will first prove the matrix inequalities essential for the main proof.

%As the relationship and necessary condition are stated in Lemma~\ref{lemma_solution}, Lemma~\ref{matrix_inequality_lemma} and Corollary~\ref{corollary_matrix_inequality} will first prove the matrix inequalities essential for the proof.

% matrix_inequality_lemma
% corollary_matrix_inequality
% lemma_solution

\begin{lemma}
(Matrix inequalities for bidirectional recursion) Let the orthonormal sets of functions $\overline{\mathbf{f}}$ and $\overline{\mathbf{g}}$ be given. Denote their CCF as $\overline{\mathbf{P}} := \mathbb{E}[\overline{\mathbf{f}} \; \overline{\mathbf{g}}^\intercal]$. We investigate the conditional expectations $\mathbf{m}_G(x) := \mathbb{E}_{\mathbf{u}}[\overline{\mathbf{g}}(\mathbf{u})|x]$ and $\mathbf{m}_F(u) := \mathbb{E}_{\mathbf{x}}[\overline{\mathbf{f}}(\mathbf{x})|u]$. The two conditional expectations are two mapping functions $\mathbf{m}_G:\mathcal{X} \rightarrow \mathbb{R}^L$ and $\mathbf{m}_F:\mathcal{U} \rightarrow \mathbb{R}^K$. Since they are functions of $x$ and $u$, they also have ACFs, which we denote as \vspace{-5pt}
\begin{equation}
\begin{aligned}
\mathbf{R}_{{G|x}} =  \int_{\mathcal{X}}\mathbf{m}_G (x) \mathbf{m}_G^\intercal(x) p(x) dx, \;\; \mathbf{R}_{{F|u}} =  \int_{\mathcal{U}}\mathbf{m}_F (u) \mathbf{m}_F^\intercal(u) p(u) du.
\end{aligned}
\label{COVARIANCE_CONDITIONAL_MEAN}
\end{equation}
The following matrix inequality holds: \vspace{-1pt}
\begin{equation}
\begin{aligned}
\overline{\mathbf{P}}^\intercal\overline{\mathbf{P}} \preceq \mathbf{R}_{{G|x}}, \;\;   
 \overline{\mathbf{P}}\;\overline{\mathbf{P}}^\intercal  \preceq \mathbf{R}_{{F|u}}. 
% \mathbf{R}_{{F|u}} \preceq \overline{\mathbf{P}}\;\overline{\mathbf{P}}^\intercal \preceq \mathbf{I}_K, \;\; \mathbf{R}_{{G|x}} \preceq \overline{\mathbf{P}}^\intercal\overline{\mathbf{P}} \preceq \mathbf{I}_L. 
\end{aligned}
\label{matrix_inequality}
\end{equation}
Suppose $\mathbf{m}_F = \overline{\mathbf{P}} \overline{\mathbf{g}}$, the equality $\mathbf{R}_{{F|u}} = \overline{\mathbf{P}}\;\overline{\mathbf{P}}^\intercal$ will hold. Similarly, if $\mathbf{m}_G = \overline{\mathbf{P}}^\intercal \overline{\mathbf{f}}$, one finds $\mathbf{R}_{{G|x}} = \overline{\mathbf{P}}^\intercal\overline{\mathbf{P}}$. These conditions coincide with the the Wiener solutions of two function sets.
\label{matrix_inequality_lemma}
\end{lemma}

\begin{corollary} (The form of bidirectional Wiener solutions) 
Applying the Schur complement to the cost function $r(\overline{\mathbf{f}}, \overline{\mathbf{g}})$ yields 
\begin{equation}
\begin{aligned}
r(\overline{\mathbf{f}}, \overline{\mathbf{g}}) = \log\det (\mathbf{I}_K - \overline{\mathbf{P}}\;\overline{\mathbf{P}}^\intercal) = \log\det (\mathbf{I}_L - \overline{\mathbf{P}}^\intercal\overline{\mathbf{P}}). 
\end{aligned}
\label{score_function_complement}
\end{equation}
Suppose the Wiener solutions hold, i.e., if their conditional expectations satisfy $\mathbf{m}_F = \overline{\mathbf{P}} \overline{\mathbf{g}}$ and $\mathbf{m}_G = \overline{\mathbf{P}}^\intercal \overline{\mathbf{f}}$, then the matrix inequalities~\eqref{matrix_inequality} will become equalities, and the cost function will take the form \vspace{-10pt}
\begin{equation}
\begin{aligned}
r(\overline{\mathbf{f}}, \overline{\mathbf{g}})=  \log\det(\mathbf{I}_K - \mathbf{R}_{{F|u}}) =  \log\det(\mathbf{I}_L - \mathbf{R}_{{G|x}}).
\end{aligned}
\label{inequality_log_det}
\end{equation}
%Later, Lemma~\ref{lemma_solution} implies that with this solution, the functions become eigenfunctions of the CDR. Theorem~\ref{main_theorem_parameterized_model} implies that the infimum of the cost is achieved \textit{iff} this solution holds. 

% Later, Lemma~\ref{lemma_solution} implies that with this solution, the functions become eigenfunctions of the CDR. Theorem~\ref{main_theorem_parameterized_model} suggests that this solution iff the infimum in terms of the cost.
\label{corollary_matrix_inequality}
\end{corollary}

\begin{proof} The matrix inequality~\eqref{matrix_inequality} is given by the property that the autocorrelation matrix is positive semidefinite. With the definitions in~\eqref{COVARIANCE_CONDITIONAL_MEAN}, if we compute the CCF between functions $\mathbf{m}_F (u)$ and $\mathbf{g}(u)$, by the tower property of conditional expectation, it yields
\begin{equation}
\begin{aligned}
\int \mathbf{m}_F (u) \overline{\mathbf{g}}^\intercal(u) p(u) du = \int  \mathbb{E}[\overline{\mathbf{f}}(\mathbf{x})|u] \overline{\mathbf{g}}^\intercal(u) p(u) du = \iint \overline{\mathbf{f}}(x) \overline{\mathbf{g}}^\intercal(u) p(x, u) dx du := \overline{\mathbf{P}}.
\end{aligned}
\label{tower_property}
\end{equation}
It can be deduced that $\int \mathbf{m}_G (x) \overline{\mathbf{f}}^\intercal(x) p(x) dx = \overline{\mathbf{P}}^\intercal$ from the same analysis. This implies that the joint ACF $\mathbf{R} (\mathbf{m}_F, \overline{\mathbf{g}})$ takes the following form:
\begin{equation}
\begin{aligned}
\mathbf{R} (\mathbf{m}_F, \overline{\mathbf{g}})= \begin{bmatrix}\mathbf{R}_{{F|u}} & \overline{\mathbf{P}} \\
\overline{\mathbf{P}}^\intercal & \mathbf{I}_L 
\end{bmatrix} \Rightarrow \log \det  \mathbf{R} (\mathbf{m}_F, \overline{\mathbf{g}}) = \log \det (\mathbf{R}_{{F|u}} - \overline{\mathbf{P}}\;
\overline{\mathbf{P}}^\intercal). 
\end{aligned}
\end{equation}
This ACF $\mathbf{R}(\mathbf{m}_F, \overline{\mathbf{g}})$, with the conditional expectation, deviates from $\mathbf{R}(\overline{\mathbf{f}}, \overline{\mathbf{g}}) := \begin{bmatrix}\mathbf{I}_K & \overline{\mathbf{P}} \\ \overline{\mathbf{P}}^\intercal & \mathbf{I}_L \end{bmatrix}$ only in terms of $\mathbf{R}_{{F|u}}$, attributed to the tower property in~\eqref{tower_property}. Since $\mathbf{R} (\mathbf{m}_F, \overline{\mathbf{g}})$ is an ACF, it is positive semidefinite by definition. Consequently, its Schur complement is also positive semidefinite, and therefore, $\overline{\mathbf{P}}\;\overline{\mathbf{P}}^\intercal \preceq \mathbf{R}_{{F|u}}$, as well as $\overline{\mathbf{P}}^\intercal\overline{\mathbf{P}} \preceq \mathbf{R}_{{G|x}}$. The two inequalities become equalities when $\mathbf{m}_F = \overline{\mathbf{P}} \overline{\mathbf{g}}$ and $\mathbf{m}_G = \overline{\mathbf{P}}^\intercal \overline{\mathbf{f}}$, respectively.

\end{proof}
After presenting the forms of Wiener solutions, we proceed to demonstrate in Lemma~\ref{lemma_solution} that these solutions imply the eigenfunctions of CDR, upon normalization. In Theorem~\ref{main_theorem_parameterized_model}, we further demonstrate the equivalence of Wiener solutions and cost optimality. Hence, it will be shown that achieving cost optimality is equivalent to solving the CDR eigenproblem.

% After presenting the Wiener solutions and the cost form when the solutions are satisfied, we proceed to prove in Lemma~\ref{lemma_solution} that they imply eigenfunctions of CDR. In Theorem~\ref{main_theorem_parameterized_model}, we establish their equivalence to cost optimality. Thus, cost optimality corresponds to solving the CDR eigenproblem.

% Having presented the forms of Wiener solutions and the form of the cost when Wiener solutions are satisfied, we move to prove in Lemma~\ref{lemma_solution} that the Wiener solutions imply that the functions become the eigenfunctions of CDR. In Theorem~\ref{main_theorem_parameterized_model}, we prove that the Wiener solutions are equivalent to the cost optimality. Combining the two, we say the cost optimality is equivalent to solving the CDR eigenproblem. 

% Combining these two, we claim minimizing the cost 

% This series of lemmas, based on inequalities of matrix, eigenvalues, and log determinants, leads to the sufficient condition for optimality. The underlying reason why it does not imply the necessary condition is that $\det \left ( \mathbf{R}_{{F|u}} - \overline{\mathbf{P}}\;\overline{\mathbf{P}}^\intercal \right) = 0$, used in~\eqref{inequality_proof}, does not imply $\mathbf{R}_{{F|u}} - \overline{\mathbf{P}}\;\overline{\mathbf{P}}^\intercal = \mathbf{0}$. Before proving that this condition is both necessary and sufficient, we first provide the missing link that connects this condition for the infimum of optimization to the equilibrium condition~\eqref{conditional_mean} for the linear operator of CDR.

\begin{lemma} 
(Wiener solutions imply eigenfunctions) Suppose both $\mathbf{m}_F = \overline{\mathbf{P}} \overline{\mathbf{g}}$ and $\mathbf{m}_G = \overline{\mathbf{P}}^\intercal \overline{\mathbf{f}}$ are satisfied. Solve the matrix eigenproblem
%Denote the functions as $\overline{\mathbf{f}^*}$ and $\overline{\mathbf{g}^*}$, with their CCF as $\overline{\mathbf{P}^*}$. Solve the following eigendecomposition problem
\begin{equation}
\begin{aligned}
\overline{\mathbf{P}}\, \overline{\mathbf{P}}^{\intercal} = \mathbf{Q}_F \mathbf{\Sigma} {\mathbf{Q}_F}^\intercal, \;\;\overline{\mathbf{P}}^{\intercal} \overline{\mathbf{P}}= \mathbf{Q}_G \mathbf{\Sigma} {\mathbf{Q}_G}^\intercal,
\end{aligned}
\end{equation}
and construct two new sets of functions $\widehat{\mathbf{f}} = {\mathbf{Q}_F}^\intercal \overline{\mathbf{f}}$ and $\widehat{\mathbf{g}} = {\mathbf{Q}_G}^\intercal \overline{\mathbf{g}}$, equivalently $\widehat{\mathbf{f}} = {\mathbf{Q}_F}^\intercal \mathbf{R}_F^{-\frac{1}{2}} \mathbf{f}$ and $\widehat{\mathbf{g}} = {\mathbf{Q}_G}^\intercal \mathbf{R}_G^{-\frac{1}{2}} \mathbf{g}$. The following properties can be proven for these quantities:
\vspace{5pt}

\noindent (i) Functions $\widehat{\mathbf{f}}$ and $\widehat{\mathbf{g}}$ satisfy both orthonormal~\eqref{Orthonormal} and equilibrium conditions~\eqref{conditional_mean} defined in Lemma~\ref{theorem1}. Therefore, they are eigenfunctions of CDR. \vspace{5pt}

\noindent (ii) Eigenvalues in $\mathbf{\Sigma}$, shared by $\overline{\mathbf{P}}\, \overline{\mathbf{P}}^{\intercal}$ and $\overline{\mathbf{P}}^{\intercal}\overline{\mathbf{P}}$, are the dominant eigenvalues of CDR (i.e., MSD).

\noindent (iii) The infimum of the cost is attained and the optimal cost corresponds to the total power of statistical dependence (i.e, TSD), defined in Definition~\ref{definition_eigenspectrum}.

% \noindent (iii) Observe that minimizing the cost expressed in the form of~\eqref{score_function_complement} is maximizing eigenvalues of $\overline{\mathbf{P}}\, \overline{\mathbf{P}}^{\intercal}$ and $\overline{\mathbf{P}}^{\intercal}\overline{\mathbf{P}}$. Suppose the infimum of the cost is achieved, denoting the functions as $\widehat{\mathbf{f}^*}$ and $\widehat{\mathbf{g}^*}$, and the spectrum as $\Sigma^*$. The spectrum $\Sigma^*$ contains the dominant eigenvalues of CDR. We name this spectrum the total statistical dependence (TSD).
\label{lemma_solution}
\end{lemma}

\begin{proof}
Given the cross density $p(x'|x) = \int p(x'|u) p(u|x) du$. Using both conditions  $\mathbf{m}_F = \overline{\mathbf{P}} \overline{\mathbf{g}}$ and $\mathbf{m}_G = \overline{\mathbf{P}}^\intercal \overline{\mathbf{f}}$ to the conditional expectation $\mathbb{E}[\mathbf{f}(\mathbf{x'})|x]$, we derive:
\begin{equation}
\resizebox{1\linewidth}{!}{
$\begin{aligned}
\int_\mathcal{X} \overline{\mathbf{f}}(x') p(x'|x) dx' = \int_\mathcal{X} \overline{\mathbf{f}}(x') \int_\mathcal{U} p(x'|u) p(u|x) du dx' = \int_\mathcal{U} \mathbf{m}_F(u) p(u|x) du = \overline{\mathbf{P}} \mathbf{m}_G =  \overline{\mathbf{P}}\;\overline{\mathbf{P}}^\intercal \overline{\mathbf{f}}. 
\end{aligned}$}
\label{equation_decomposition}
\end{equation}
Therefore, the conditional mean satisfies $\mathbb{E}[\mathbf{f}(\mathbf{x'})|x] = \overline{\mathbf{P}}\;\overline{\mathbf{P}}^\intercal \overline{\mathbf{f}}$. Utilizing the eigendecomposition $\overline{\mathbf{P}}\, \overline{\mathbf{P}}^{\intercal} = \mathbf{Q}_F \mathbf{\Sigma} {\mathbf{Q}_F}^\intercal$, we derive $\mathbf{Q}_F^\intercal \mathbb{E}[\mathbf{f}(\mathbf{x'})|x] = \mathbf{\Sigma} \mathbf{Q}_F^\intercal \overline{\mathbf{f}}(x)$. Further substituting $\widehat{\mathbf{f}} = {\mathbf{Q}_F}^\intercal \overline{\mathbf{f}}$, we obtain:\vspace{-5pt}
\begin{equation}
\begin{aligned}
\int \widehat{\mathbf{f}}(x') p(x'|x) dx' = \mathbb{E}[\widehat{\mathbf{f}}(\mathbf{x'})|x] = \mathbf{\Sigma} \widehat{\mathbf{f}}(x) 
\end{aligned}
\vspace{-5pt}
\end{equation}
Denote $\widehat{\mathbf{f}} = [\widehat{{f}}_1, \cdots, \widehat{{f}}_K]^\intercal$ and the diagonal elements of $\Sigma$ as $\{\sigma_1, \cdots, \sigma_K\}$. Each function $\widehat{{f}}_k$ is invariant to the conditional mean operator of the cross density, with an eigenvalue $\sigma_k$, satisfying the equilibrium condition defined in~\eqref{conditional_mean}. Therefore, every function $\widehat{{f}}_k$ of $\widehat{\mathbf{f}}$ is an eigenfunction of the CDR $\rho(x, x'):= \frac{p(x, x')}{p(x)p(x')}$, with $\{\sigma_1, \cdots, \sigma_K\}$ being their eigenvalues.

% thus satisfying the 

% it follows that $\widehat{\mathbf{f}}$ satisfy the equilibrium condition, thus they are eigenfunctions of CDR $\rho(x, x'):= \frac{p(x, x')}{p(x)p(x')}$, with elements of $\Sigma$ being the corresponding eigenvalues. Similarly, the functions $\widehat{\mathbf{g}} = {\mathbf{Q}_G}^\intercal \overline{\mathbf{g}}$ are eigenfunctions of CDR $\rho(u, u')$. By matrix algebra, $\overline{\mathbf{P}}\, \overline{\mathbf{P}}^{\intercal}$ and $\overline{\mathbf{P}}^{\intercal}\, \overline{\mathbf{P}}$ shares the same set of eigenvalues. 

Assuming dimensionalities $K\leq L$, the cost function $r(\overline{\mathbf{f}}, \overline{\mathbf{g}}) = \log\det (\mathbf{I}_K - \overline{\mathbf{P}}\;\overline{\mathbf{P}}^\intercal) = \sum_{i=1}^K\log (1-\sigma_i)$. As the cost function reaches its minimum, the eigenvalues should converge to the dominant eigenvalues in MSD. Consequently, the negative value of the cost becomes the truncated TSD, as defined in Definition~\ref{definition_eigenspectrum}, where $r(\overline{\mathbf{f}}, \overline{\mathbf{g}}) = -2T_K$.

% Suppose dimensionalities $K\leq L$. It follows that the cost function $r(\overline{\mathbf{f}}, \overline{\mathbf{g}}) = \log\det (\mathbf{I}_K - \overline{\mathbf{P}}\;\overline{\mathbf{P}}^\intercal) = \sum_{i=1}^K\log (1-\sigma_i)$. Since the cost function reaches its infimum, the set of eigenvalues should reach the dominant eigenvalues in MSD. Therefore, the negative value of the cost corresponds to the truncated TSD defined in Definition~\ref{definition_eigenspectrum}, as $r(\overline{\mathbf{f}}, \overline{\mathbf{g}}) = -2T_K$.

% Now we have proven that when the optimality is reached, normalized functions $\widehat{\mathbf{f}} = {\mathbf{Q}_F}^\intercal \overline{\mathbf{f}}$ and $\widehat{\mathbf{g}} = {\mathbf{Q}_G}^\intercal \overline{\mathbf{g}}$ are the leading eigenfunctions of CDR. The eigenvalues of $\overline{\mathbf{P}}\, \overline{\mathbf{P}}^{\intercal}$ and $\overline{\mathbf{P}}^{\intercal} \overline{\mathbf{P}}$ are the dominant eigenvalues of CDR (i.e., MSD). The negative value of the optimal cost corresponds to the truncated TSD. 
We have now demonstrated that upon achieving optimality, the normalized functions $\widehat{\mathbf{f}} = {\mathbf{Q}_F}^\intercal \overline{\mathbf{f}}$ and $\widehat{\mathbf{g}} = {\mathbf{Q}_G}^\intercal \overline{\mathbf{g}}$ reach the leading eigenfunctions of CDR. Additionally, the eigenvalues from $\overline{\mathbf{P}}\, \overline{\mathbf{P}}^{\intercal}$ and $\overline{\mathbf{P}}^{\intercal} \overline{\mathbf{P}}$ reach the leading eigenvalues of CDR (i.e., MSD). The negative value of the optimal cost reaches the truncated TSD.
\end{proof}

We have now established that Wiener solutions solve the CDR eigenproblem. The following analysis demonstrates that these solutions are equivalent to the optimality of the cost function, further implying that cost optimality implies solving the CDR eigenproblem. Theorem~\ref{main_theorem_parameterized_model} demonstrates this claim by analyzing the first-order derivative of the cost function. A detailed proof of this theorem, presented in Appendix~\ref{main_proof}, shows that satisfying the Wiener solutions is equivalent to maximizing the eigenvalues of $\overline{\mathbf{P}}\;\overline{\mathbf{P}}^\intercal$, thus achieving the optimal cost function.

\begin{theorem}
(Equivalence of Wiener solutions and optimality) Suppose the two functions are now parameterized models $\mathbf{f}_\theta$ and $\mathbf{g}_\omega$. Assume the models are universal. The notations of matrices and normalizations are kept unchanged. The partial derivative of $r(\mathbf{f}_\theta, \mathbf{g}_\omega)$ w.r.t. all parameters reaches zero if and only if the two sets of functions satisfy the Wiener solution bidirectionally, i.e., they are the optimal linear projectors of each other:
\begin{equation}
\begin{aligned}
\mathbf{P}_{FG}^\intercal \mathbf{R}_F^{-1}  \mathbf{f}_\theta(x) = \mathbb{E}[\mathbf{g}_\omega(\mathbf{u})|x], \;\; \mathbf{P}_{FG} \mathbf{R}_G^{-1} \mathbf{g}_\omega(u) = \mathbb{E}[\mathbf{f}_\theta(\mathbf{x})|u]. 
\end{aligned}
\label{parameterized_model_condition}
\end{equation}
Consequently, upon normalization, the two parameterized models will satisfy all properties outlined in Lemma~\ref{lemma_solution} as leading eigenfunctions of CDR.
% Now suppose $\mathbf{g}_\omega$ is fixed and we take gradient decent to update $\mathbf{f}_\theta$ 
\label{main_theorem_parameterized_model}
\end{theorem}
Nonconvexity in the optimization still presents an unsolved challenge. To execute the exact update, the system must achieve the infimum of the cost at each recursive step, which is clearly infeasible for neural networks due to the large number of local minima. Applying convex learning machines to address this issue is an ongoing area of research.

\section{FMCA applications}
\label{reference_process}

In this section, we describe how FMCA can be employed for classification and regression, as well as applications for Markov chain aggregation and learning image codes.

% Suppose the input data from the real world is a high-dimensional, non-stationary r.p. with a rich and complex functional structure, such as an image dataset. The input r.p. is fed to the primary network, $\mathbf{f}_\theta$. A second r.p. may also be available with lower dimension and less complexity, serving the purpose of achieving a specific learning task, such as image labels for classification, a target r.p. for regression, or simply uniform, unstructured random noise for variational applications. This second r.p. is fed to the reference network $\mathbf{g}_\omega$. The latent structure of the two neural network outputs can be considered a coding mechanism, unifying supervised and unsupervised learning tasks. Depending on the design of the reference r.p., various CDR forms are possible, demonstrating the flexibility of the architecture.

Suppose the input data from the real world consists of a high-dimensional, non-stationary r.p. with a rich and complex functional structure, such as an image dataset. This input r.p. is fed into the primary network, $\mathbf{f}_\theta$. A second r.p. with lower dimension and/or less complexity, referred to as the reference r.p., may also be available and is fed into the reference network, $\mathbf{g}_\omega$. They can be used in  specific learning tasks, such as image labels for classification, a target r.p. for regression, or simply uniform, unstructured random noise for variational applications.  Outputs of the two networks, which contain the functional structure of the r.p., construct an embedded projection space and can be seen as codes for images that unify supervised and unsupervised learning tasks. Depending on the design of the reference r.p., various forms of codes and CDR can be achieved, demonstrating the flexibility of the architecture.

% In the following sections, we will successively introduce traditional classification and regression tasks, Markov chain aggregation, and learning codes for real-world image datasets as non-stationary r.p.

% The first application is the aggregation of a Markov chain, which has long been investigated with the goal of reducing its size into fewer partitions, with the least change in the emission probabilities. This task is related to the spectral representation of the transition matrix. Since FMCA solves a decomposition of the $pdf$, we show why using bases of CDR is superior than traditional methods. The second application is feature learning (coding) on real-world image datasets, which can be viewed as a non stationary r.p. with very high complexity. We demonstrate that FMCA is capable of creating codes with different characteristics of generalization and specificity by selecting different reference r.p.

\subsection{Conventional classification and regression}

\noindent \textbf{Classification (FMCA-C).} In classification, image data is fed to $\mathbf{f}_\theta$ and labels to $\mathbf{g}_\omega$. The reference r.p. here is a one-hot vector representing labels with the ideal CDR ${\rho}(x, x') = 1$ for samples in the same class and $0$ otherwise. Output dimensions for both networks are usually set to be the same in practice. Once the projection space is trained, testing is performed by training a small Multi-Layer Perceptron (MLP) that projects the learned latent codes of the primary r.p. (images), into the space of the reference r.p. (labels) to determine the classification accuracy. This architecture, FMCA-C, will be implemented in the experiments. \vspace{5pt}

\noindent \textbf{Regression.} FMCA can be easily adapted for regression or time series filtering. For a finite-horizon time series $\mathbf{x}(1), \mathbf{x}(2), \cdots$ paired with the target signal $\mathbf{d}(1), \mathbf{d}(2), \cdots$, the goal is to predict $\mathbf{d}(t)$ using past samples of the input signal. The primary and reference r.p. are chosen to be the input and target signals, respectively. For a user-specified order $\tau$, network $\mathbf{f}_\theta$ receives the input signal of length $\tau$, and network $\mathbf{g}_\omega$ receives one or multiple future samples of the target signal $\mathbf{d}(t)$. Both networks are trained until convergence, yielding time-invariant eigenfunctions in the embedding space that encode the functional structure of each signal. The parameters are then fixed. 

A regressor, typically a linear model, is trained to project the learned eigenfunctions of the primary r.p. (input signal) to the space of the reference r.p. (target signal) to obtain a prediction by minimizing the Mean-Squared Error (MSE) criterion, thereby producing the optimal Wiener projection. For testing, the input is fed to the primary network and the linear regressor output provides the prediction. More details about applying FMCA to nonlinear filtering can be found in our conference paper~(\citealt{hu2023cross}).
\vspace{-5pt}
\subsection{Markov chain aggregation}
\label{aggregation_problem}

Markov chain aggregation requires solving a decomposition of the transition probability $p(x_{t+1}|x_t) = \sum_{k=1}^r p(x_{t+1}|z_t=k)p(z_t=k|x_{t})$, where each $z_t\in \{1, 2, \cdots, r\}$ is an assignment of each state to its discrete aggregated state. The probability $p(x_{t+1}|z_t=k)$ represents the state assignment, independent of $t$~(\citealt{duan2019state}). Baselines related to ours include Spectral State Compression (SSC)~(\citealt{zhang2019spectral}), soft-state aggregations with anchor states~(\citealt{duan2019state}), and Nonnegative Matrix Factorization (NMF)~(\citealt{lee1999learning, lee2000algorithms, donoho2003does}). These methods use a similar spectrum decomposition approach as FMCA but decompose a different quantity and often require estimating the $pdf$ empirically.

% Such approaches share a similar spectrum decomposition approach as FMCA, with a different quantity, and often require estimating the $pdf$ emperically. 

% They will be compared with CDR because they follow basically the same methodology (spectrum decomposition, extraction of the leading basis functions, and then projection to the aggregated states). 

\vspace{5pt}

\noindent \textbf{Deterministic state aggregation with CDR.} Here, the previous time step of the chain is treated as $\mathbf{x}$ (input for $\mathbf{f}_\theta$), while the current time step is treated as $\mathbf{u}$ (input for $\mathbf{g}_\omega$). FMCA approximates their two NCDs, along with their basis functions. Once the networks are trained, the top $r$ basis functions, spanning an $r$-dimensional space, are used to derive the $r$ aggregated states. In this subspace, we find $r$ centroids with the optimal $L_2$ distance to partition the state space. \vspace{-5pt}
\begin{equation}
\resizebox{0.6\textwidth}{!}{
$\begin{gathered}
\min_{\Omega_1, \cdots, \Omega_r} \text{min}_{v_1, \cdots, v_r\in \mathbb{R}^r} \sum_{s=1}^r \int_{x_t \in \Omega_s } ||\widehat{\mathbf{f}}_\theta(x_t) - v_s||_2^2 \;p(x_t)\; dx_t,
\end{gathered}$}
\end{equation}
where $v_1, \cdots, v_r$ are the centroids, and $\Omega_1, \cdots, \Omega_r$ are the $r$ partitions of the state space.
\vspace{5pt}

% \noindent \textbf{Soft-state aggregation.} The goal of soft-state aggregation is to learn the distribution $p(z_t=k|x_{t})$, not just the states. In FMCA, one simple solution is to use the softmax function as the activation function in the last layer of the networks and set the output dimension to $r$. This way, training the network directly learns a distribution.

\noindent \textbf{Soft-state aggregation.} The goal of soft-state aggregation is to learn the distribution $p(z_t=k|x_{t})$, not just the states themselves. In FMCA, a direct approach is to use the softmax function as the activation function in the networks' final layer and set the output dimension to $r$. By doing so, training the network will inherently learn a distribution.
\vspace{5pt}

% \noindent More detailed comparisons with the baselines and their implementations can be found in the experiments in Section~\ref{aggregation_section} and Appendix~\ref{implementation_details}.

\noindent Further comparisons with the baselines and their implementations are provided in Section~\ref{aggregation_section} and Appendix~\ref{implementation_details}.

% employ, after training,  a softmax function at the primary neural network output, and choose the number of outputs to be $r$. We can interpret the primary neural network output directly as a distribution. 

% First, the best quantity to decompose here is obviously CDR, by considering bidirectional relationships between two states, and all the eigenvalues and bases are normalized. And it is theoretically justified. Second, FMCA never requires approximating the empirical matrix. It solves the optimization directly without needing to discretize the state space or using counting measures. Thirdly, the non-negative decomposition can also be elegantly done. We provide two approaches. We can use clustering to the top $K$ eigenfunctions of CDR. Or we can also directly substitute the final activation layer of neural nets to be a softmax function. This also generates a non-negative decomposition directly. 

% There are three problems of this. First, the most serious problem is the estimation of the transition probability. Second, it is the choice of the quantity to decompose that is stable (SSC decomposes the joint, normalize, decomposing the transition directly is hard to extend to large state space). Thirdly, the mapping from bases to non-negative probability distributions is not elegant for the existing methods. This is why vertex finding algorithm is introduced. However it is very complex and still not very stable. 

 \vspace{-5pt}
\subsection{Learning r.p. encoders of images} 
\label{imagerp}\vspace{-5pt}
% Volume 

%Another important real-world r.p. is the images, which is known to be difficult to model as an r.p., where conventional practices of using Markov random field may only have limited applications such as segmentation. On the other hand, applying statistical dependence measures to images is still a challenging and a task under investigation. The conventional mutual information estimation still suffers when applying to learn features for images. 

% FMCA measures the statistical dependence between pairs of two r.p. $\mathbf{x}$ and $\mathbf{u}$, with the input image dataset (the primary r.p.) assigned to the primary r.p. $\mathbf{x}$. Here we will present different ways of selecting the reference r.p. $\mathbf{u}$ to define a joint density $p(x, u)$. We will show that the same architecture can be used for unsupervised representation learning  applications, which emphasizes the importance of the eigenfunctions for representation. 

FMCA measures the statistical dependence between pairs of two r.p., $\mathbf{x}$ and $\mathbf{u}$. In this application, the image dataset is assigned to the primary r.p., $\mathbf{x}$. We present various methods for constructing the reference r.p., $\mathbf{u}$, to define the joint density $p(x, u)$. We demonstrate that the same architecture can be used for unsupervised representation learning applications, emphasizing the significance of eigenfunctions in learning representations.

%Now we introduce several ways of constructing this joint distribution, in the spectrum of whether the functional space of r.p. is considered. \vspace{5pt}

\vspace{5pt}

\noindent \textbf{Maximal functional space codes (FMCA-M).} The simplest case is to use the original image as both the primary and the reference r.p. This produces the upper bound of statistical dependence. It corresponds to the conventional autoencoder (AE) topology, but it is trained with the eigenfunctions of the CDR.

% When no label information is available for the dataset (unsupervised learning), the proven lower bound of the FMCA cost can be exploited. For each given image $\{x_n(t_1, t_2)\}_{t_1, t_2=1}^{T_1}$ used as the primary r.p, the exact same image $\{x_n(t_1, t_2)\}_{t_1, t_2=1}^{T_1}$ is selected as the reference r.p. sample. This procedure measures the statistical dependence between the image random process realization and itself, which obtains the maximal statistical dependence. In this case, the optimality condition of the CDR is that $\widetilde{\rho}(x, x')$ is positive (activated) if and only if $x=x'$. For any two different samples $x\neq x'$, the CDR is zero (deactivated), as $\widetilde{\rho}(x, x') = 0$. If the optimal CDR is reached, the learning system can capture and distinguish all realizations in the training set of the primary r.p., and the network outputs form the decomposition as a representation of the r.p. However in experiments, we find that the maximal dependence coding is more inefficient and inaccurate than the factorial coding, which will be introduced next.pl

\vspace{3pt}

\noindent \textbf{Partial functional space codes (FMCA-PT).} %We provide two approaches for the case when we only want to consider the part of the functional space that is important for semantic meanings and generalizations. The first approach is by adding a small additive noise to the original image as the reference process, following $p(x, u) = p(\mathbf{x}=x)\mathcal{N}(u|x, \sigma \mathbf{I})$. As $\mathbf{u}$ is a the image process $\mathbf{x}$ with noise, the statistical dependence between them extract the most useful of the dataset.
The idea is to use image patches as the reference $\mathbf{u}$. Construct a mapping function $h(x, c)$ such that for a given image $x$ and a specified choice of coordinates $c$, an image patch $h(x, c)$ is created from this image. We consider all possible patches from the same image as the reference, thus producing a joint distribution $p(x, u) = p(\mathbf{x}=x)\frac{1}{|\mathcal{C}|}\int_{c\in {C}} \mathbb{1}(u = h(x, c))dc$, by marginalizing over all choices of patches. The aim of such a construction is to constrain the training to extract only the meaningful information in the functional space, by intentionally creating statistical independence between the original image r.p. $\mathbf{x}$ and the reference r.p. $\mathbf{u}$, but still retaining local statistical dependence at the patch level for training. \vspace{3pt}

\noindent \textbf{Factorial codes (FMCA-FC).} Factorial codes, by definition, minimize the sum of the bit entropies of the code components~(\citealt{barlow1989finding, barlow1989unsupervised}), i.e., the code components become statistically independent. In image processing, factorial codes can distinguish one image from all the other images in the dataset. With the proposed architecture, this means that the reference input $u$ should be as random as possible, i.e., a white noise r.p. Before training starts, each image in the dataset is paired with a noise realization of a certain length. Note that both factorial coding and maximal dependence coding have the same goal of capturing all statistical dependence in the realizations of the input random process by setting the CDR ${\rho}(x, x')$ to be positive if and only if $x = x'$, which captures the dissimilarities between all realizations due to normalization. \vspace{3pt}

Three sets of experiments are presented in the upcoming sections to illustrate these applications. Section~\ref{experiment_section} shows FMCA's ability to effectively approximate eigenfunctions and accurately measure statistical dependence across various distributions. Section~\ref{aggregation_section} presents results for Markov chain aggregation. Section~\ref{image_codeds_section} presents FMCA's application to representation learning in real-world image datasets.

% There will be three sets of experiments presented in the following sections. Section~\ref{experiment_section} shows that FMCA effectively approximates eigenfunctions for a variety of distributions and measures statistical dependence more accurately than other approaches. Section~\ref{aggregation_section}, shows results for the Markov chain aggregation. Section~\ref{image_codeds_section} shows the results of using FMCA for representation learning in real-world image data sets.
% aggregation_section
% image_codeds_section

\vspace{-5pt} \section{Experiments for measuring statistical dependence}
\label{experiment_section}

This section shows the robustness and capability MSD with FMCA in measuring statistical dependence across various distributions.
\vspace{5pt}

\noindent \textbf{Baselines.} Three baselines are compared: \vspace{-5pt}
\begin{itemize}[leftmargin=*]
\item KICA-KGV: Kernel Generalized Variance from Kernel Independent Component Analysis (\citealt{bach2002kernel}, \citealt{bach2005probabilistic}); \vspace{-5pt}
\item HSIC-NOCCO: Normalized Cross-Covariance Operator from Hilbert-Schmidt Independence Criterion (\citealt{gretton2007kernel, sriperumbudur2010hilbert});\vspace{-5pt}
\item MINE: Mutual Information Neural Estimator (\citealt{belghazi2018mutual}).
\end{itemize}\vspace{-5pt}

\noindent Both KICA-KGV and HSIC-NOCCO use Gaussian kernels to construct a Gram matrix and perform a spectral decomposition without optimization. The Gaussian kernel size is crucial and set to $0.1$. MINE optimizes a neural network using a variational cost, producing a scalar estimate of Shannon's mutual information. 

\vspace{5pt}

\noindent KICA-KGV, and HSIC-NOCCO theoretically approximate the same eigenvalues as our method but introduce bias due to kernels. TSD results for KICA-KGV are consistent with ours, whereas HSIC-NOCCO approximates a slightly different quantity using matrix trace, which sums the eigenvalues ($c(\lambda) = \lambda$ in Definition~\ref{definition_generalized_tsd} of Appendix~\ref{appendix_ncd_def}). Additional details can be found in Appendix~\ref{implementation_details}. \vspace{5pt}
% \noindent Both KICA-KGV and HSIC-NOCCO construct a Gram matrix with Gaussian kernels and compute a spectrum decomposition without optimization. The Gaussian kernel size is a crucial hyperparameter, which we set to $0.1$. MINE optimizes a neural network using a variational cost, producing a scalar Shannon's mutual information estimate. 

% \vspace{5pt}

% \noindent MSD, KICA-KGV and HSIC-NOCCO theoretically approximate the same eigenvalues as ours, but inevitably introduce bias due to using kernels. TSD, KICA-KGV are consistent with ours, while HSIC-NOCCO approximates a slightly different quantity by summing the eigenvalues ($c(\lambda) = \lambda$ in Definition~\ref{definition_generalized_tsd} of Appendix~\ref{appendix_ncd_def}). More implementation details and comparisons can be found in Appendix~\ref{implementation_details}.\vspace{5pt}

% The variance of the Gaussian kernel is an important hyperparameter, which we fix 

% They have a slight difference in terms of definitions of 

% need to construct Gram matrices with Gaussian kernels and a spectrum decomposition step, without any optimization, while MINE optimizes a neural network with a cost function, producing a scalar estimate of mutual information. Details of their implementations can be found in the appendix.

% \textcolor{red}{(need to do appendix)}. \vspace{5pt}

\noindent \textbf{Reproducing theoretical values of MSD.} For simple distributions, the ground truth can be computed using the Nystr{\"o}m method~(\citealt{williams2006gaussian}) by estimating joint $pdf$ empirically, constructing the CDR, and performing eigendecomposition. For continuous distributions, this requires discretizing the sample space into intervals and estimating the $pdf$ of the resulting discretized distribution. The number of intervals to $1000$ for each dimension. This is inapplicable for complex, high-dimensional distributions.

% this 

% For simple distributions, it is possible to obtain the ground truth of MSD with Nystr{\"o}m method~(\citealt{williams2006gaussian}) by first estimating the joint distributions empirically, then constructing the NSD and performing the eigendecomposition. 

% For continuous distributions, this requires first discretizing the sample space into intervals (which we choose as $1000$ for each dimension), then approximate MSD with the discretized distributions. For complex distributions in high dimensions, this is no longer applicable. 

\vspace{5pt}

\noindent\textbf{Learning eigenfunctions of Gaussian distributions (Figure~\ref{figure_3_gauss}).} When two r.p. follow a joint multivariate Gaussian distribution, their cross density will also be a multivariate Gaussian $pdf$. In this case, the MSD and eigenfunctions have a closed-form solution offered by Hermite polynomials (\citealt{williams2006gaussian}, \citealt{zhu1997gaussian}). The correlation coefficient between the pair is varied to compare the learned MSD, basis functions, and CDR, as shown in Figure~\ref{figure_3_gauss}.

The results reveal that: (1) neural network outputs accurately approximate Hermite polynomials, the well-known Gaussian bases; (2) the obtained MSD matches their true values, with the largest eigenvalue at constant $1$ and others bounded by $1$; (3) eigenvalues converge sequentially, resembling Gram-Schmidt sequential decomposition, where the $(i+1)$-th mode converges only after the $i$-th mode converges.\vspace{5pt}

\begin{figure}[t]
   \begin{subfigure}[b]{0.4\textwidth}  % width is set to 45% of the text width
    \centering
    \includegraphics[width=1\textwidth]{./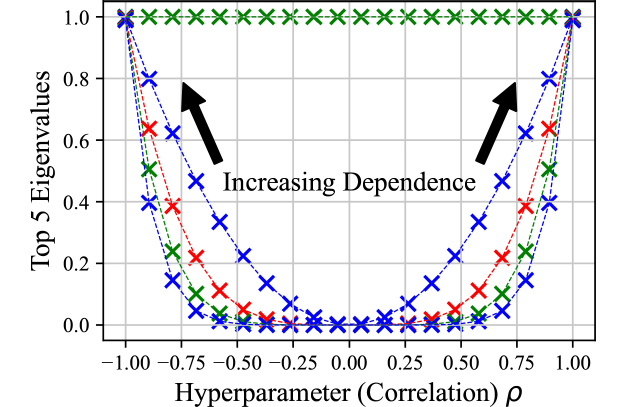}\vspace{-1pt}
    \caption{MSD w.r.t. correlations}
  \end{subfigure}
  \begin{subfigure}{0.5\textwidth}  % width is set to 45% of the text width
    \centering
    \includegraphics[width=\linewidth]{./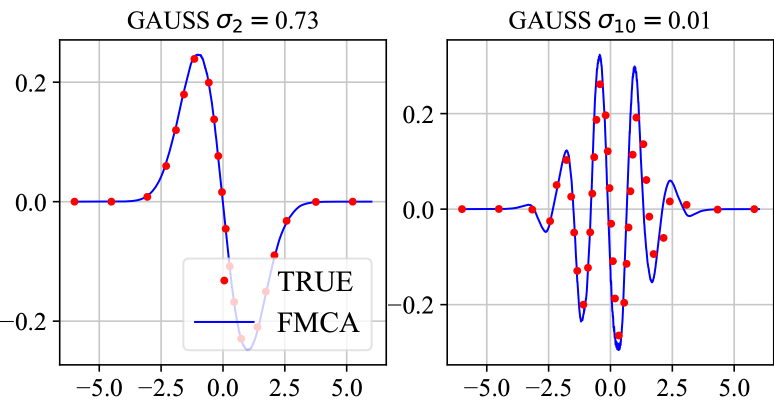}\vspace{-1pt}
    \caption{The second and fifth basis w/ ground truth}
  \end{subfigure}\hfill \vspace{3pt}\\ 
   \hfill \begin{subfigure}[b]{0.325\textwidth}  % width is set to 45% of the text width
    \centering
    \includegraphics[width=\linewidth]{./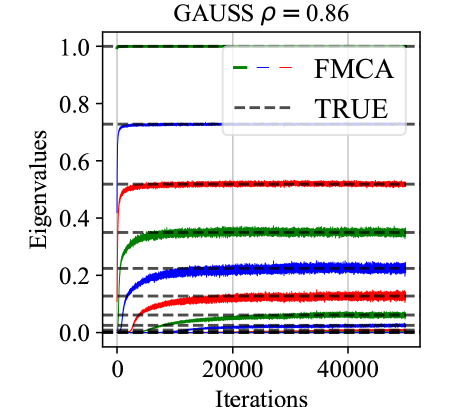}\vspace{-1pt}
    \caption{MSD learning curves}
  \end{subfigure}\hspace{-10pt}
    \begin{subfigure}[b]{0.615\textwidth}  % width is set to 45% of the text width
    \centering
    \includegraphics[width=\linewidth]{./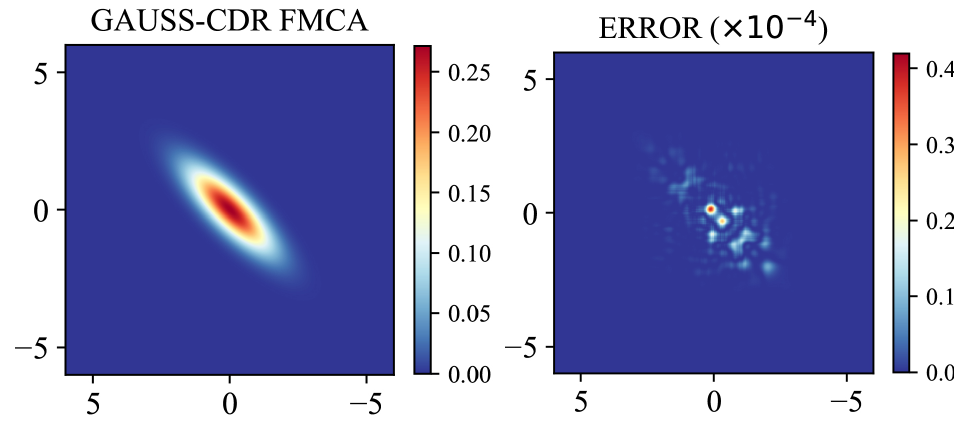}\vspace{-1pt}
        \caption{Learned CDR w/ approximation error}
  \end{subfigure}\hfill\vspace{-10pt} 
\caption{\small Results for joint Gaussian distributions with varying correlation coefficients. Visualization of bases, learning curves, and CDR for standardized Gaussians at correlation $\rho=0.86$. (a) shows the growth of learned MSD eigenvalues as correlations between two distributions increase. (b) compares approximated basis functions with theoretical ground truth, given by Hermite polynomials. (c) shows the smooth convergence of MSD eigenvalues, sequentially from smallest to largest. (d) shows the approximated CDR, which combines learned bases and spectrum, with a low error of $10^{-4}$. The visualization (d) is created by multiplying the CDR and marginals product for joint density.}\vspace{-5pt}
\label{figure_3_gauss}
\end{figure}

\noindent \textbf{Eigenfunctions of other distributions (Figure~\ref{spiral_plot}).} Further demonstrations include various 2D distributions: standardized Gaussians (GAUSS), checkerboard (CHECK), sinusoid (SW), and spiral (SPIRAL). The shapes and parameters for these distributions are detailed in Appendix~\ref{implementation_details}. Figure~\ref{spiral_plot} displays in sequence and in a circle arrangement their learned MSD eigenvalues, one per slice, starting from the positive horizontal axis. In red are the true values obtained with Nystr{\"o}m method. 

FMCA consistently outperforms baselines, with KICA and HSIC's generalized eigenproblems yielding similar results that tend to underestimate large eigenvalues and overestimate small ones. The bias is introduced by: (1) the single kernel nature of the RKHS and its hyperparameter, and (2) the limited number of samples due to the computational cost of constructing Gram matrices and computing their spectrum. Conversely, FMCA directly solves the eigenproblem via optimization, providing improved approximation characteristics.

As a multivariate statistical dependence measure, MSD captures structure in the CDR Hilbert space that scalar-valued measures like mutual information cannot achieve. This distinction is evident in Figure~\ref{spiral_plot}, showcasing varying spreads of eigenvalues with limiting cases like the checkerboard (just two eigenvalues) and the spiral, which exhibits significant dependence across all dimensions. Note that these datasets have similar global dependencies in terms of TSD. MSD's unique capability to quantify structure in the CDR space surpasses the limitations of scalar-valued measures such as mutual information. \vspace{5pt}

\begin{figure}[t]
  \centering
  \begin{subfigure}{.255\textwidth}
    \includegraphics[width=\linewidth]{./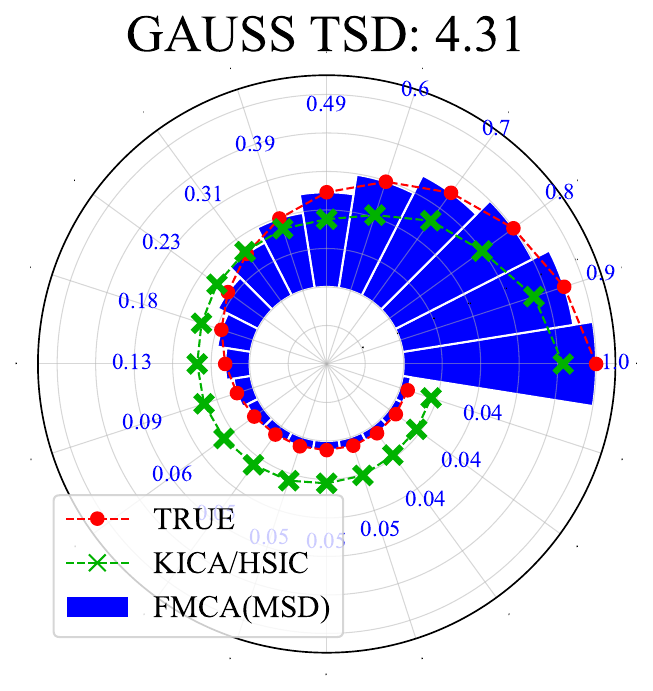}\vspace{-8pt}
    \phantomsubcaption
  \end{subfigure}\hspace{-10pt}
    \begin{subfigure}{.255\textwidth}
    \includegraphics[width=\linewidth]{./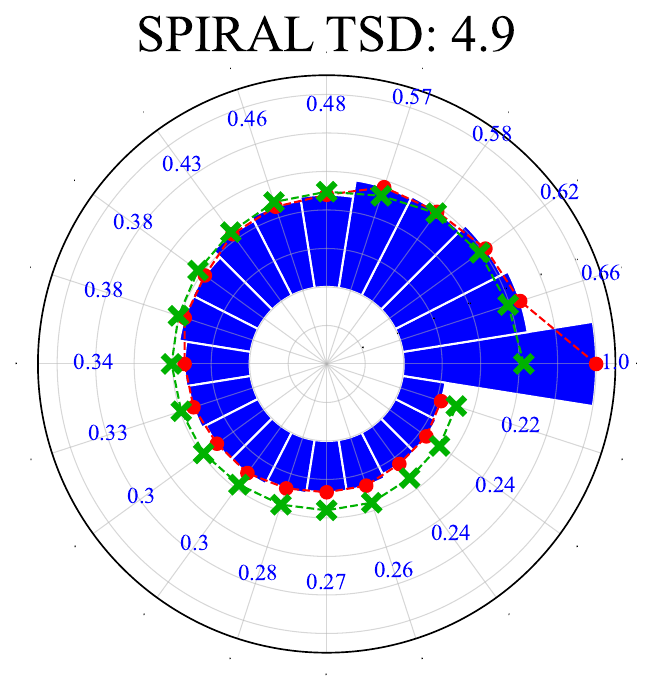}\vspace{-8pt}
    \phantomsubcaption
  \end{subfigure}\hspace{-10pt}
      \begin{subfigure}{.255\textwidth}
    \includegraphics[width=\linewidth]{./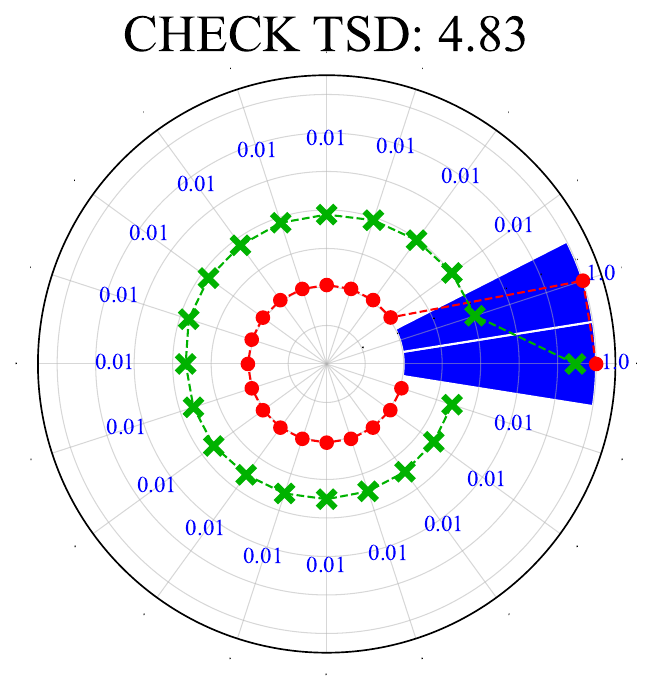}\vspace{-8pt}
    \phantomsubcaption
  \end{subfigure}\hspace{-10pt}
        \begin{subfigure}{.255\textwidth}
    \includegraphics[width=\linewidth]{./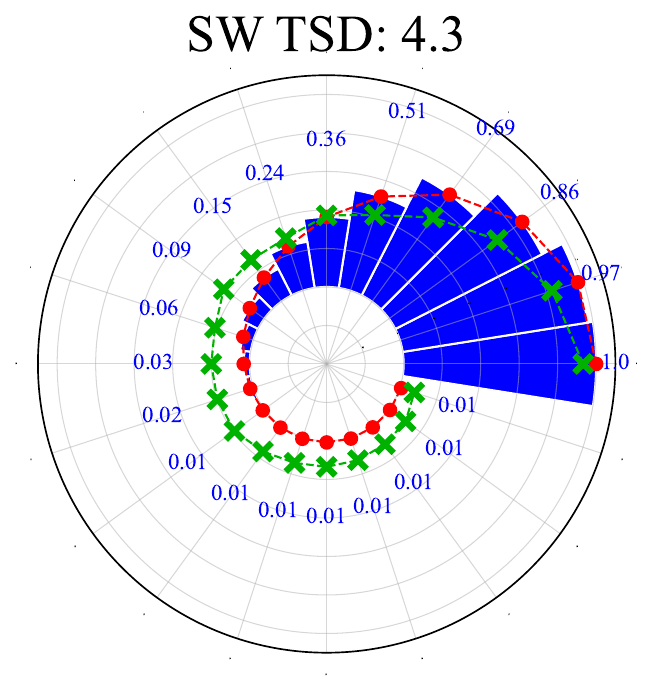}\vspace{-8pt}
    \phantomsubcaption
  \end{subfigure}
\caption{\small MSD of FMCA, compared to those produced by KICA/HSIC and the ground truth from the Nystr{\"o}m method. Despite varying TSD definitions, KICA and HSIC's decompositions are consistent and comparable to MSD. Due to Gaussian kernels in RKHS with kernel size as a hyperparameter, their approximations are always biased. FMCA, instead, produces the exact approximation to the ground truth. With similar TSD, MSD can greatly vary, which scalar measures like mutual information fail to capture. Eigenvalues of KICA/HSIC are generated by decomposing the the centered and normalized Gram matrix for the cross-covariance operator. } \vspace{-5pt} \label{spiral_plot}
\end{figure}

\noindent \textbf{Advantages over KICA, HSIC and MINE (Figure~\ref{figure_5_TSD}).} We compare the total statistical dependence learned by the different methods. FMCA and KICA estimate the same TSD quantity; meanwhile, HSIC estimates the sum of eigenvalues, and MINE estimates Shannon's mutual information. To plot on the same scale, we divide HSIC-NOCCO values by $2$.

In Figure~\ref{FA}, for Gaussian distributions with varying correlations, all baselines show an increase in TSD as the correlation increases. The curve variance is calculated by conducting $20$ trials of estimation, each time using $10^4$ randomly sampled samples. Among them, FMCA has the least variance. 

In Figure~\ref{FB}, we use a more challenging distribution: a 2D spiral with a white noise prior. As the noise variance increases, the statistical dependence decreases. FMCA accurately captures the impact of the noise, while KICA and HSIC indicate significantly poorer specificity and stability, particularly when the noise variance is small.

In Figure~\ref{FC}, we compare FMCA with MINE on Gaussian distributions when the correlation coefficient approaches $1$. MINE exhibits significant instability for high statistical dependence, which can be attributed to: (1) the logarithm in the cost function, introducing high variance during training; and (2) the approximated mutual information value of MINE being unbounded, causing the value to explode in one-to-one correspondence situations. FMCA avoids this issue as the eigenvalues are normalized, and the TSD is truncated by the dimension of the network outputs. \vspace{-5pt}

\begin{figure}[t]
  \centering
  \begin{subfigure}{.339\textwidth}
    \includegraphics[width=\linewidth]{./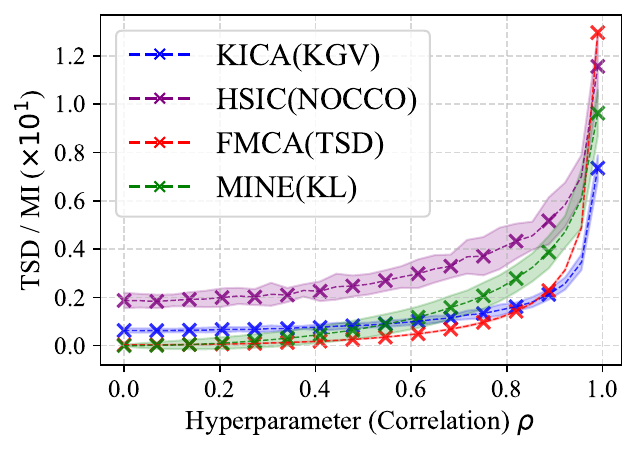}\vspace{-3pt}
    \caption{TSD/MI for GAUSS}
    \label{FA}
  \end{subfigure}\hspace{-7pt}
  \begin{subfigure}{.339\textwidth}
    \includegraphics[width=\linewidth]{./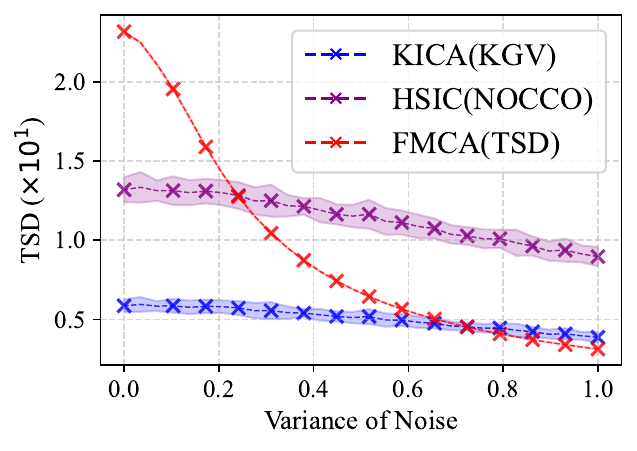}\vspace{-3pt}
    \caption{Inaccuracy of KICA/HSIC}
    \label{FB}
  \end{subfigure}\hspace{-7pt}
    \begin{subfigure}{.339\textwidth}
    \includegraphics[width=\linewidth]{./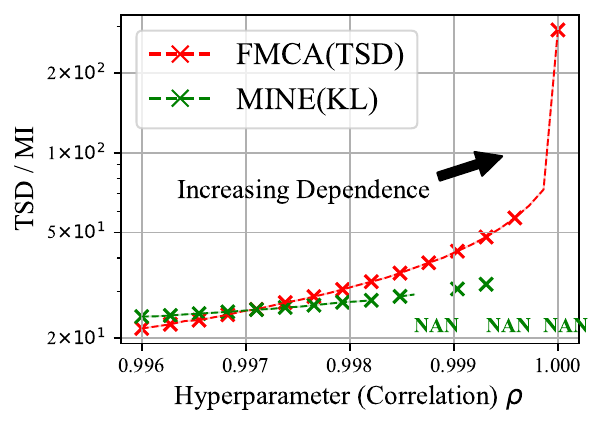}\vspace{-3pt}
    \caption{Instability of MINE}
    \label{FC}
  \end{subfigure}\vspace{-5pt}
\caption{\small Advantages of FMCA over KICA, HSIC, and MINE. (a) For a multivariate standardized Gaussian, all measures capture increasing statistical dependence as correlations increase, with less variance in FMCA. (b) In the SPIRAL dataset with varying noise levels, FMCA outperforms KICA and HSIC in terms of accuracy and stability. (c) MINE can diverge as the Gaussians become nearly a one-to-one correspondence; FMCA avoids this instability issue by truncating small eigenvalues. }
  \label{figure_5_TSD}\vspace{-10pt}
\end{figure}

\section{Experiments for Markov chain aggregation}
\label{aggregation_section}

In this section, we compare FMCA to the baselines for the Markov chain aggregation problem. 

\vspace{5pt}

% \noindent\textbf{Dataset.} The dataset we use is the yellow cab trips from New York city taxi traffic data~\citealt{nyc_taxi_data}. This dataset has been used in many existing aggregation literature~\citealt{zhang2019spectral, duan2019state, roblitz2013fuzzy}. This dataset contains taxi trip records that contain pick-up and drop-off locations of a trip in the form of longitude and latitude on the 2D map in NYC (the state space). We use the transition data from the entire year of $2016$, which contains $10^7$ trip records. These transitions become a Markov chain with a transitional probability. The task of state aggregation for this dataset is to partition the city map into subregions, using the transitional probability of the chain. 

\noindent\textbf{Dataset.} The New York City yellow cab trips dataset (~\citealt{nyc_taxi_data}) is utilized, a dataset widely used in existing aggregation literature~(\citealt{zhang2019spectral, duan2019state, roblitz2013fuzzy}). This dataset contains taxi trip records, including pick-up and drop-off locations in longitude and latitude on NYC's 2D map (the state space). The transition data from the entire year of $2016$, containing $10^7$ trip records, is analyzed. These transitions form a Markov chain with a transition probability, and the state aggregation task aims to partition the city map into subregions.

% \noindent\textbf{Dataset.} We utilize the yellow cab trips dataset from New York City ~(\citealt{nyc_taxi_data}), which has been widely used in the existing aggregation literature~(\citealt{zhang2019spectral, duan2019state, roblitz2013fuzzy}). This dataset consists of taxi trip records, including pick-up and drop-off locations in longitude and latitude on the NYC's 2D map (the state space). We analyze the transition data from the entire year of $2016$, containing $10^7$ trip records. These transitions form a Markov chain with a transition probability, and the state aggregation task aims to partition the city map into subregions.

  \begin{figure}[t]
    \includegraphics[width=\linewidth]{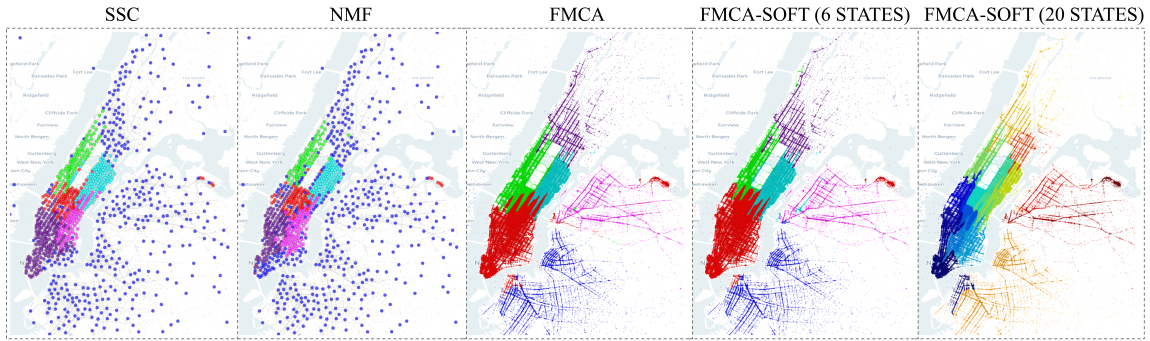}\vspace{-3pt}
    \caption{\small Taxi traffic data aggregation results. In comparison to baselines, FMCA neither requires the discretization of the state space nor the empirical estimation of transition probabilities. State assignment that appear as noise in SSC and NMF are identifiable streets in FMCA. Using softmax for soft-state distribution, FMCA-SOFT easily adapts to accommodate 20 states.}
    \label{FIGURE_AGGREGATION}
  \end{figure}%

\vspace{5pt}

\noindent\textbf{Baselines.} FMCA is compared with two baselines: Spectral State Compression (SSC) (\citealt{zhang2019spectral, duan2019state}) and Nonnegative Matrix Factorization (NMF) (\citealt{lee1999learning, lee2000algorithms}). SSC involves the following steps: (1) Discretizing the state space into grids. Since linear interpolation is inefficient, k-means is used to find $10^3$ centroids on the city map; (2) empirically estimating the transition probabilities of the $1000$-state chain; (3) performing SVD of the estimated $pdf$; (4) clustering using the top $r$ bases to obtain $r$ aggregated states. NMF follows a similar procedure as SSC but adds a penalty to SVD, ensuring that the bases are nonnegative. Further details of their implementations can be found in Appendix~\ref{implementation_details}.

\vspace{5pt}

\noindent\textbf{Aggregation procedures with FMCA.} As discussed in Section~\ref{aggregation_problem}, FMCA for deterministic aggregations uses top $r$ basis functions to obtain $r$ aggregated states with clustering, consistent with SSC. FMCA-SOFT uses a softmax function in the last layer of the neural network to learn a distribution, consistent with NMF, without requiring any clustering.

% The first simplifying difference in FMCA is that it does not require partitioning the 2D continuous space. The input for both networks consists of longitude and latitude, rather than cluster assignment indices. The second difference is that FMCA does not necessitate the estimation of transition probabilities. In the case of FMCA-SOFT, the number of outputs is equal to the number of aggregated states.

\vspace{5pt}
\noindent\textbf{Results of FMCA, compared to SSC and NMF (Figure~\ref{FIGURE_AGGREGATION}).} The first key simplification in FMCA is that it avoids partitioning the 2D continuous space, opting to use longitude and latitude as input instead of grid indices. The second difference is that FMCA eliminates the need for any empirical estimation of transition probability.

Figure~\ref{FIGURE_AGGREGATION} compares the aggregation results. FMCA effectively distinguishes the aggregated states without space discretization, identifying streets from what appears as noise in baseline methods. FMCA-SOFT can be effortlessly extended to accommodate more states by simply changing the output dimension. The aggregated outcomes from FMCA and FMCA-SOFT are highly consistent. In our experiments, we find FMCA-SOFT to be more stable and easy to train, while FMCA converges to the true eigenvalues more quickly. \vspace{5pt}

\noindent\textbf{Numerical comparisons (Figure~\ref{FIGURE_APPROX_ERROR}).} The approximation error of FMCA is compared with the error of KICA and HSIC, as all three methods are based on decomposition. A significant advantage of FMCA is that its approximation is not restricted by discretization resolution, as demonstrated in Figure~\ref{FIGURE_APPROX_ERROR}.\vspace{-5pt}
\begin{itemize}[leftmargin=*]
 \item{Low-res error (Figure~\ref{FIGURE7A}):} Initially, with a grid size of $10^3$, the CDR are estimated, and approximation errors ($L_2$ distance between approximation and the true CDR) are computed for all three methods. Due to the penalty constraint for non-negativity, NMF yields biased estimations with high errors. Theoretically, SSC attains minimal error in discretized space (rather than the original state space). FMCA's error stays close to the ground truth. \vspace{-5pt}
\item{High-res error (Figure~\ref{FIGURE7B}):} By changing the grid size to $10^4$ and recomputing the CDR as a $10^4\times10^4$ matrix while still using the SSC and NMF basis functions obtained from a grid size of $10^3$ for CDR approximation, FMCA demonstrates the lowest error among all methods.\vspace{-5pt}
\item{MSD (Figure~\ref{FIGURE7C}):} The MSD of FMCA is also computed and compared with eigenvalues acquired from the Nyström method using a grid size of $10^4$, showing high accuracy.\end{itemize}

% \noindent\textbf{Numerical comparisons (Figure~\ref{FIGURE_APPROX_ERROR}).} We compare FMCA's approximation error with that of KICA and HSIC, as all three methods are based on decomposition. The most distinctive advance of FMCA is that its approximation is unrestricted by discretization resolution, our numerical analysis in Figure~\ref{FIGURE_APPROX_ERROR} demonstrates this.\vspace{-5pt}
% \begin{itemize}[leftmargin=*]
%  \item{Low-res error (Figure~\ref{FIGURE7A}):} First, we set the grid size to $10^3$, estimate the CDR, and compute their approximation errors ($L_2$ distance between approximation and the true CDR) for all three methods. Due to the penalty constraint for non-negativity, NMF always yields biased estimations with high errors. Theoretically, SSC attains minimal error in discretized space (rather than the original state space). Our method's error remains close to the ground truth. \vspace{-5pt}
%  \item{High-res error (Figure~\ref{FIGURE7B}):}  Next, we change the grid size to $10^4$ and recompute the CDR as a $10^4\times10^4$ matrix, while still using the SSC and NMF basis functions obtained from a grid size of $10^3$ grids for approximating CDR. As a result of the resolution change, FMCA demonstrates the lowest error among all the methods.
% \item{MSD (Figure~\ref{FIGURE7C}):} We also compute and compare FMCA's MSD with eigenvalues acquired from the Nystr{\"o}m method method using a grid size of $10^4$, demonstrating high accuracy.\end{itemize}

  \begin{figure}[t]
  \vspace{-10pt}
  \begin{subfigure}{.36\textwidth}
    \includegraphics[width=\linewidth]{./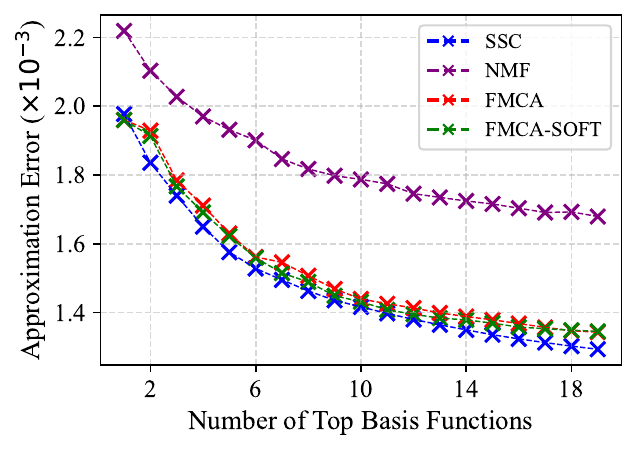}\vspace{-8pt}
    \caption{Low-res approximation error}
    \label{FIGURE7A}
  \end{subfigure}
  \begin{subfigure}{.36\textwidth}
  \vspace{-1pt}
    \includegraphics[width=\linewidth]{./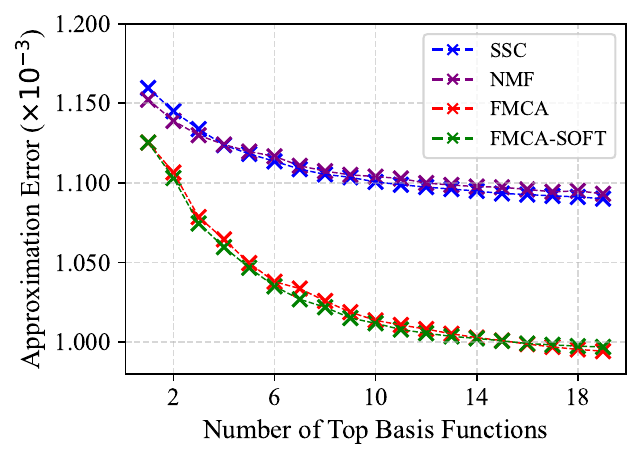}\vspace{-8pt}
    \caption{High-res approximation error}
    \label{FIGURE7B}
  \end{subfigure}\hfill
    \begin{subfigure}{.26\textwidth}
    \includegraphics[width=\linewidth]{./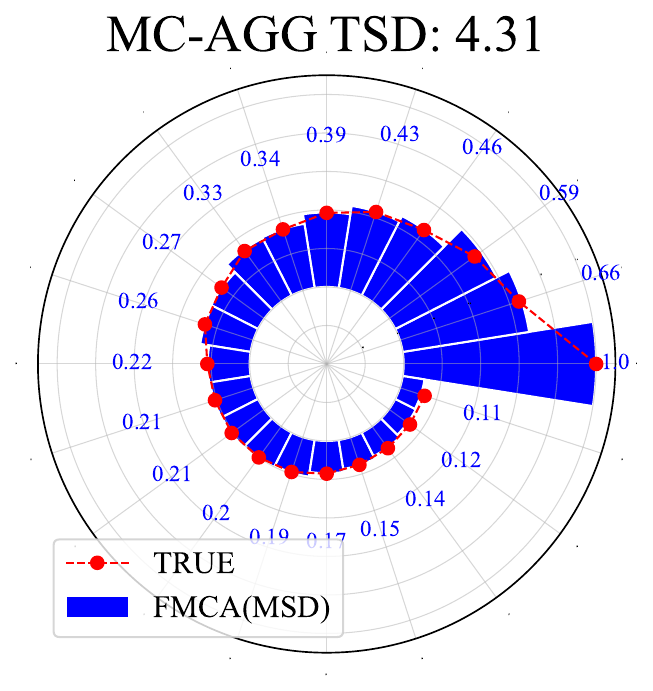}\vspace{-8pt}
    \caption{Learned MSD}
    \label{FIGURE7C}
  \end{subfigure}\vspace{-5pt}
  \caption{\small CDR approximation error comparison for FMCA, KICA, and HSIC at two map discretization resolutions: (a) a grid size of $10^3$, and (b) a grid size of $10^4$. The x-axis in the figures denotes the number of basis functions for approximating CDR. NMF and SSC are constrained to use bases obtained from low resolution. NMF always produces biased estimation. SSC has low error at low resolution, but struggle at high resolution. FMCA achieves the lowest error at high resolution, as it does not require map discretization. (c) displays FMCA's MSD in comparison to the Nystr{\"o}m method estimates.}
  \label{FIGURE_APPROX_ERROR}
  \end{figure}

\vspace{5pt}

\vspace{-20pt}
\section{Experiments for coding real-world images}
\label{image_codeds_section}

There are two traditional, contrasting approaches to learning image features: training a classifier, which minimizes classification error, and training an autoencoder (AE), which minimizes reconstruction error. These two tasks embody opposing learning principles—supervised and unsupervised. Classification emphasizes the generalization capability of codes, since they must reflect features in the functional space that represents classes. Reconstruction prioritizes the specificity of codes, as they must reconstruct the original images using a decoder.

% There are two traditional contrasting ways to learn image features: training a classifier, which minimizes classification error, and training an autoencoder (AE) that minimizes reconstruction error. The two tasks, supervised and unsupervised, represent two opposing learning principles. Classification focuses on the generalization capability of codes, as codes must reflect feature that represents classes in the functional space. Reconstruction focuses on the code specificity, as codes must be able to reconstruct original images using a decoder.

The intrinsic trade-off between generalization ability and code specificity is well-known. For example, networks trained with AE do not achieve the highest classification accuracy, and cost penalties are often necessary to impose constraints that enhance class discriminability. FMCA's flexibility enables a broader range of possibilities to balance this trade-off between generalization capability and code specificity. As discussed in~\ref{imagerp}, we compared the following variations of FMCA:

% It is well known that there is an intrinsic trade-off between the generalization ability and code specificity. For instance, networks trained with AE do not achieve the highest classification accuracy, and cost penalties are often required to impose constraints that enhance class discriminability. The flexibility of FMCA allows a much wider range of possibilities that balance this trade-off between the generalization capability and code specificity. As discussed in~\ref{imagerp}, we compared the following variations of FMCA:

\vspace{-5pt}
\begin{itemize}[leftmargin=*]
\item {FMCA-M:} Codes with maximal dependence, where $\mathbf{f}_\theta$ and $\mathbf{g}_\omega$ receive the same image to achieve the highest statistical dependence;
\vspace{-5pt}
\item {FMCA-PT:} Codes with intermediate dependence, where $\mathbf{f}_\theta$ receives a patch of an image, and $\mathbf{g}_\omega$ receives a different patch from the same image;
\vspace{-5pt}
\item {FMCA-C:} Class codes, where $\mathbf{f}_\theta$ receives an image and $\mathbf{g}_\omega$ receives a one-hot vector representing the image's label;\vspace{-5pt}
\item {FMCA-FC:} Factorial codes, where $\mathbf{f}_\theta$ takes an input image while random noise (sampled from a paired uniform distribution) is fed to $\mathbf{g}_\omega$. \end{itemize}\vspace{-5pt}
\noindent Let us elaborate on the significance of FMCA-FC. Codes are internal representations of the external world, so for autonomous learning, it is important to infer the state of the world from the activation of a given embedded code. Factorial codes have long been investigated for this purpose (\citealt{barlow1989finding}). For image recognition, the goal is to generate one single internal code activation for each image, which is much more specific than class assignment.

% making it crucial for autonomous learning to infer the real-world state from a given embedded code's activation. Factorial codes have been extensively studied for this purpose (\citealt{barlow1989finding}). In the image recognition problem, the objective is to generate a single internal code activation for each image, which is significantly more specific than a class assignment.

% Let us explain the importance of FMCA-FC. Codes are internal representations of the external world, so for autonomous learning, it is important to infer the state of the world from the activation of a given embedded code. Factorial codes have long been investigated precisely for this purpose (\citealt{barlow1989finding}). In the image recognition problem, the goal would be to produce one internal code activation for a single image, which is much more specific than class assignment. 

%FMCA-C is trained directly with the labels, but all the other networks (FMCA-M/PT/FC) require a two step training procedure: first, train the projection space to obtain the embedded codes, and then project the embedded learned codes to the space of the reference signal with a simple MLP, to evaluate classification accuracy.  %Next, we will introduce the implementations and the criteria used in Table~\ref{table_compare}.

%Our proposed architectures provide a continuum of solutions for this limitation, where FMCA-M, FMCA-PT, and FMCA-FC are unsupervised, while FMCA-C is supervised.

 \vspace{5pt}

\noindent \textbf{Network structure.} See Appendix~\ref{implementation_details} for information.

\vspace{5pt}

\noindent\textbf{Generalization criterion.} The test set prediction accuracy (\textbf{Acc}) is used for code generalization. FMCA-C directly produces the classification in the test set. For FMCA-M/PT/FC, the two FMCA networks that create the projection space are initially trained, and their parameters are fixed. Next, an MLP classifier is trained to project image features onto labels. The test images are then input to FMCA's primary network and the MLP classifier to obtain predictions. For the AE, the MLP classifier is trained to project latent features from a trained encoder to labels, and the encoder-classifier combination is applied to test images. \vspace{5pt}

\noindent\textbf{Code specificity criterion.} FMCA approximates the CDR $\rho(x, x')$ between image pairs in the training set. Different FMCA architectures produce varying CDRs. Code specificity requires the CDR between a sample $x_n$ and itself, $\rho(x_n, x')$ when $x' = x$, to be dominantly larger than the CDR between this sample and any other samples, $\rho(x_n, x')$ when $x' \neq x$. If the CDR value at $x' = x_n$ dominates all other samples, the code is specific. To evaluate specificity, we introduce two descriptors, \textbf{S-T} and \textbf{S-A}.

For a training set of size $N$, an $N\times N$ CDR matrix, $\rho(x, x')$, is obtained, with focus on the diagonal elements. Each row is normalized using the softmax function: $\overline{\rho}(x, x') = \frac{e^{\rho(x, x')}}{\sum_{n=1}^N e^{\rho(x, x_n)}}$. The average diagonal value, $\textbf{S-A}$, calculated as $\frac{1}{N} \sum_{n=1}^N\overline{\rho}(x_n, x_n)$, represents the dominance of diagonal elements. Due to possible large training set size and numerous diagonal values close to $1$, the diagonal elements $\overline{\rho}(x_n, x_n)$ are ranked from smallest to largest, considering only the $1000$ smallest values to compute the average, denoted as $\textbf{S-T}$. This descriptor $\textbf{S-T}$ enhances distinction. Both $\textbf{S-A}$ and $\textbf{S-T}$ values range from $0$ to $1$, with higher values indicating more specific codes. AE's CDR is generated by $\mathbf{f}^\intercal_\theta(x) \mathbf{R}_F^{-\frac{1}{2}} \mathbf{f}_\theta(x')$ with the encoder $\mathbf{f}_\theta$ and the ACF $\mathbf{R}_F$.

\vspace{5pt}

\noindent \textbf{Reconstruction criterion.} 
The reconstruction error, \textbf{R-Err}, is also evaluated to compare FMCA's performance with AE. A VGG-16 decoder is trained with the same architecture as AE's decoder to project the embedded space of a pre-trained FMCA network back to the image space, minimizing the reconstruction error. The error \textbf{R-Err} is reported on the training set. \vspace{5pt}

\noindent \textbf{Datasets.}
Three datasets with increasing difficulty are used to test the image codes obtained: MNIST is the simplest; CIFAR 10 has more complex image structures; CIFAR 100 has both a higher number of classes and more complex image structures. The results presented in Table~\ref{table_compare} provide a comparison of the different algorithms and will be discussed in the following sections.
\vspace{5pt}
% \noindent \textbf{Reconstruction criterion.} We also evaluate the reconstruction error, \textbf{R-Err} to compare performance of the FMCA with the AE. We train a VGG-16 decoder with the same architecture as AE's decoder to project the embedded space of a pre-trained FMCA network back to the image space while minimizing the reconstruction error. The error, \textbf{R-Err}, is reported on the training set. \vspace{5pt}

% \noindent \textbf{Datasets.} 
% Table 2 shows a comparison of the different models in three data sets of increasing difficulty: CIFAR 10 image structure is more complex than MNIST but the number of classes is small, while CIFAR 100 has both more classes and more image structure. We will discuss next the results in Table 2. 

\begin{table}[h]
\centering
\caption{Comparison of Generalization Capability and Code Specificity. FMCA-PT shows the highest classification accuracy for unsupervised learning. FMCA-FC shows the best code specificity and the second-best reconstruction error, only surpassed by AE, which minimizes errors as its objective.}
\label{tab:comparison}
\resizebox{0.9\textwidth}{!}{%
\begin{tabular}{|c|l|c|c|c|c|c|}
\hline
\rowcolor{gray!25}
\textbf{Datasets} & \textbf{Methods} & \textbf{Generalization} & \multicolumn{2}{c|}{\textbf{Code Specificity}} &\textbf{R-Err$\downarrow$}& \textbf{TSD}\\
\cline{4-5}
\rowcolor{gray!25}
 & & \textbf{Class Acc$\uparrow$} & \textbf{S-A$\uparrow$} & \textbf{S-T$\uparrow$} & & \\
\hline
\multirow{4}{*}{MNIST} & AE  & 0.983 &  \textbf{0.998} & \textbf{0.360} & \textbf{0.0006} & -\\
 & FMCA-PT & \textcolor{darkgray}{\textbf{0.987}}  & 0.851 & 0.004  & 0.073 & 23.91\\
 & FMCA-FC & 0.208 & \textbf{1.0} & \textbf{1.0} & \textbf{0.002} & 211.15\\
 & FMCA-M & 0.955  & 0.960 & 0.0039 & 0.006 & 297.56 \\
  & FMCA-C & \textbf{0.990}  & 0.0007 & 0.0001 & 0.04 & 12.67\\
\hline
\multirow{4}{*}{CIFAR10} & AE & 0.479 &  \textbf{0.993} & 0.004 & \textbf{0.005} & - \\
 & FMCA-PT & \textcolor{darkgray}{\textbf{0.876}} &  0.954 & 0.006 & 0.038 & 18.72\\
 & FMCA-FC & 0.139 & \textbf{1.0} & \textbf{1.0} & \textbf{0.009} & 213.56\\
 & FMCA-M & 0.351 & 0.990 & \textbf{0.368} & 0.019 & 299.38\\
  & FMCA-C & \textbf{0.902} & 0.007 & 0.0001 &0.058 & 25.68\\
\hline
\multirow{4}{*}{CIFAR100} & AE & 0.228 & \textbf{0.995} & \textbf{0.117} & \textbf{0.005} & - \\
 & FMCA-PT & \textcolor{darkgray}{\textbf{0.675}} & 0.934 &0.003 & 0.038 &18.11 \\
 & FMCA-FC & 0.093 & \textbf{1.0} &  \textbf{0.986} & \textbf{0.008} & 208.63\\
 & FMCA-M & 0.142 & 0.961 &0.006 & 0.018 &  299.27\\
  & FMCA-C & \textbf{0.752} &  0.034 & 0.0001 & 0.052 & 35.10\\
\hline
\end{tabular}}
\label{table_compare}
\end{table}

\noindent \textbf{FMCA-PT shows remarkable generalization capability.}
While FMCA-C, a supervised method, is expected to achieve the highest test set classification accuracy, it is surprising that the unsupervised method FMCA-PT achieves performance close to the supervised approach. The generalization capability of FMCA-M is lower, as it matches a complete image to itself, followed by FMCA-FC, which lacks functional space and class information in the codes, making it unable to generalize to different images of the same class. Remarkably, AE's performance falls short of FMCA-PT, particularly for more complex datasets with a higher number of classes. This indicates that FMCA-PT offers a better balance than AE between code specificity and generalization capability. \vspace{5pt}

% While FMCA-C as a supervised method is expected to have the highest test set classification accuracy, surprisingly, FMCA-PT as an unsupervised method achieves performance close to the supervised method. The generalization capability of FMCA-M is lower, as it matches a full image to itself, followed by FMCA-FC, which lacks functional space and class information in the codes, lacking the generalization to different images of the same class.

% The generalization capability of FMCA-M is worse,  as it matches a full image to itself, and lastly, FMCA-FC. This is expected because FMCA-FC does not contain any functional space information nor class information in the codes. So, it is unable to generalize to different images of the same class. 

\noindent \textbf{FMCA-FC has the best code specificity.} Regarding code specificity, factorial codes consistently yield the highest scores, achieving values near $1$ for both \textbf{S-A} and \textbf{S-T}. FMCA-C performs poorly in this aspect, ranking as the least effective. Surprisingly, AE consistently ranks second in \textbf{S-A} and almost always second in \textbf{S-T}. This finding suggests that specificity is related to how FMCA-C utilizes statistical dependence to create basis functions. Under \textbf{S-T}, the performance of FMCA-PT and FMCA-M significantly drops, revealing the impact of within-class similarity on the learned codes.  \vspace{5pt}

% \noindent \textbf{FMCA-FC has the best code specificity.} In terms of code specificity, the factorial codes consistently yield the highest scores, achieving values very close to $1$ for both \textbf{S-A} and \textbf{S-T}. FMCA-C performs poorly in code specificity, ranking as the worst performer. Another unexpected finding is that AE consistently ranks second in \textbf{S-A} and almost always second in \textbf{S-T}. This suggests that specificity is related to how FMCA-C utilizes statistical dependence to create the basis functions. Under \textbf{S-T}, there is a significant drop in the performance of FMCA-PT and FMCA-M, revealing the impact of within-class similarity on the learned codes. \vspace{5pt}

\noindent \textbf{FMCA does not compromise on reconstruction errors (Figure~\ref{BASIS_RECONSTRUCT_FIGURE}).} In terms of reconstruction error, AE outperforms other methods, followed by FMCA-FC. Lower errors in FMCA-FC can be contributed to the high sparsity of the codes, allowing each code to effectively reconstruct its corresponding image in the decoder. FMCA-M outperforms FMCA-PT, while FMCA-C performs the worst in most datasets. As illustrated in Figure~\ref{BASIS_RECONSTRUCT_FIGURE}, Despite FMCA-FC's poor generalization performance, it reconstructs images reasonably well using only the top $10$ eigenfunctions, indicating that FMCA-FC codes effectively preserve perceptual distance within the image space. \vspace{5pt}

% Regarding the reconstruction error, AE consistently produces lower errors compared to other methods, followed by FMCA-FC. This can be attributed to the fact that FC codes are very sparse, allowing each code to effectively reconstruct its corresponding image in the decoder. FMCA-M consistently outperforms FMCA-PT, while FMCA-C performs the worst in terms of reconstruction, for almost all datasets. Interestingly, despite FMCA-FC's poor generalization performance, it can still reconstruct images in the training set reasonably well using the top 10 eigenfunctions, as demonstrated in Figure~\ref{BASIS_RECONSTRUCT_FIGURE}. This reveals that FMCA-FC codes effectively preserve perceptual distance within the image space. 

\begin{figure}[h]
  \centering
    \begin{subfigure}{.8\textwidth}
    \includegraphics[width=\linewidth]{./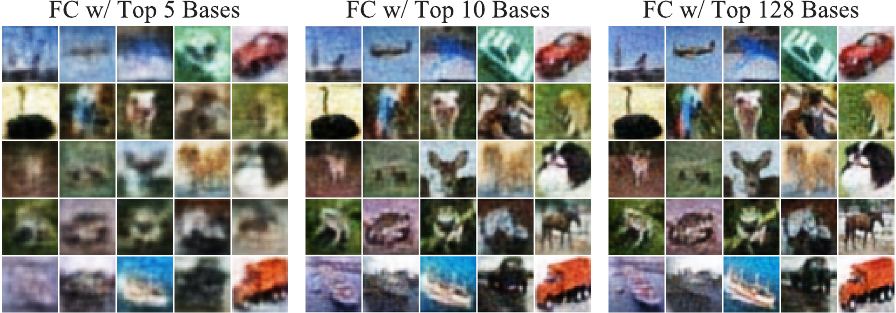}
    \phantomsubcaption
  \end{subfigure}
\caption{\small Reconstruction of images with FMCA-FC, utilizing varying numbers of basis functions to train the decoder. Although generalization performance is subpar, FMCA-FC successfully reconstructs training set images using just the top $10$ eigenfunctions, demonstrating effective preservation of perceptual distance within the image space.}
  \label{BASIS_RECONSTRUCT_FIGURE}
\end{figure}

%Overall and regarding code specificity, factorial codes consistently yield the highest scores, achieving values near $1$ for both \textbf{S-A} and \textbf{S-T}. FMCA-C performs poorly in this aspect, ranking as the least effective. Surprisingly, AE consistently ranks second in \textbf{S-A} and almost always second in \textbf{S-T}. This finding suggests that specificity is related to how FMCA-C leverages statistical dependence to create basis functions. Under \textbf{S-T}, the performance of FMCA-PT and FMCA-M significantly drops, revealing the impact of within-class similarity on the learned codes. \vspace{5pt}

\noindent \textbf{CDR Visualization (Figure~\ref{CDR_comparison_fmca}).} To better understand the trade-off between code specificity and generalization, the differences in CDR for each method are illustrated in Figure~\ref{CDR_comparison_fmca}. FMCA-C is a supervised coding method with class labels as reference, resulting in a positive CDR only when two samples belong to the same class. FMCA-PT's CDR incorporates functional space class structure beyond classification by learning from cross-patch information. FMCA-M's CDR contains class information but emphasizes matrix diagonal elements through self-matching, increasing code specificity. FMCA-FC's CDR is similar to FMCA-M's, as its CDR is maximal only when two images are identical (diagonal entries), but this is achieved using unsupervised learning. \vspace{5pt}

\begin{figure}[t]
\vspace{-9pt}
  \centering
  \begin{subfigure}{.201\textwidth}
    \includegraphics[width=\linewidth]{./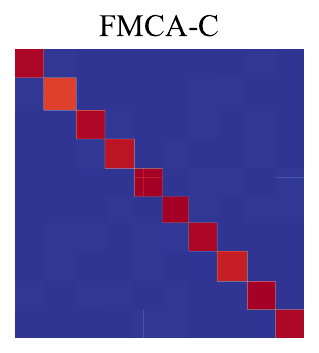}
    \phantomsubcaption
  \end{subfigure}\hspace{-7pt}
  \begin{subfigure}{.201\textwidth}
    \includegraphics[width=\linewidth]{./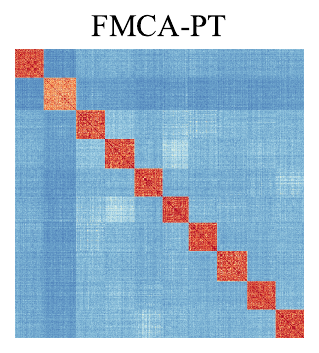}
    \phantomsubcaption
  \end{subfigure}\hspace{-7pt}
    \begin{subfigure}{.201\textwidth}
    \includegraphics[width=\linewidth]{./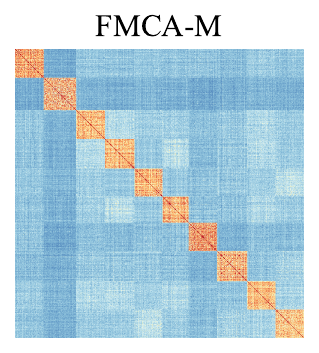}
    \phantomsubcaption
  \end{subfigure}\hspace{-7pt}
      \begin{subfigure}{.201\textwidth}
    \includegraphics[width=\linewidth]{./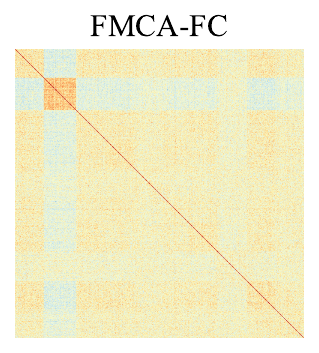}
    \phantomsubcaption
  \end{subfigure}\hspace{-7pt}
    \begin{subfigure}{.201\textwidth}
    \includegraphics[width=\linewidth]{./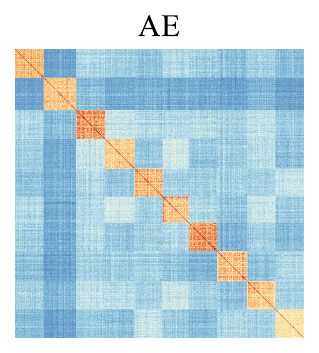}
    \phantomsubcaption
  \end{subfigure}\vspace{-5pt}
\caption{\small Comparison of CDR for coding methods, illustrating the trade-off between code specificity and generalization. For visualization purposes, we set the maximum value of CDR at $20$ and the minimum at $0$. From left to right, the methods are FMCA-C (supervised coding), FMCA-PT, FMCA-M, and FMCA-FC, respectively. Supervised coding is positive only when two samples belong to the same class. FMCA-PT learns cross-patch information. FMCA-M's diagonal matrix elements have higher values due to matching each image with itself. FMCA-FC has the maximal CDR when two images are identical, using unsupervised learning. FMCA-FC's CDR specificity is significantly better than AE's.}  \label{CDR_comparison_fmca} \vspace{-10pt}
\end{figure}

In conclusion, these image coding tests demonstrate the flexibility of the projection space in quantifying multivariate statistical dependence between two random processes. Beyond conventional machine learning applications, such as classification and dimensionality reduction, there are numerous other possibilities for learning internal representations that could prove critical for innovative applications, such as security. In an extreme case, a learning system can be trained to be far more specific than a classifier, capable of distinguishing one exemplar of the training set from all others.

\vspace{5pt}

\vspace{-10pt}
\section{Conclusion and Discussion}

% This paper introduces a novel Normalized Cross Density (NCD) function to measure statistical dependence between two nonstationary random processes, $\mathbf{x}$ and $\mathbf{u}$. The method involves two NCDs, one for each random process. The NCD numerator features the joint $pdf$, where $\mathbf{x}'$ is generated internally using a multidimensional extension of the Alternating Conditional Expectation (ACE), a bidirectional recursion algorithm. During the recursion for $\mathbf{x}$, the random variable $\mathbf{u}$ is marginalized, producing an internal variable $\mathbf{x}'$ to estimate the statistical dependence between $\mathbf{x}$ and $\mathbf{u}$, and similarly for $\mathbf{u}$.

This paper introduces a novel form of the Normalized Cross Density (NCD) function to quantify statistical dependence between two nonstationary random processes, $\mathbf{x}$ and $\mathbf{u}$. The method involves two NCDs, one for each process. The NCD numerator consists of the joint $pdf$, where $\mathbf{x'}'$ is an internally generated variable using a multidimensional extension of the Alternating Conditional Expectation (ACE), a bidirectional recursion algorithm. During the recursion for $\mathbf{x}$, the variable $\mathbf{u}$ is marginalized, producing an internal variable $\mathbf{x}'$ to estimate the statistical dependence between $\mathbf{x}$ and $\mathbf{u}$, and likewise for $\mathbf{u}$.

% In the bidirectional recursion for $\mathbf{x}$, the random variable (r.v.) $\mathbf{u}$ is marginalized, creating an internal variable $\mathbf{x}'$ to estimate the statistical dependence between $\mathbf{x}$ and $\mathbf{u}$ and likewise for $\mathbf{u}$. 

The NCD is a positive-definite function with two arguments, allowing decomposition into its eigenfunctions, which generates a feature space similar to principal component analysis (PCA) for correlation functions, except that NCD decomposes the $pdf$ structure. The NCD eigenspectrum serves as the first multivariate statistical dependence (MSD) measure between two r.p., extending both maximal correlation and conventional mutual information, which are scalar descriptors. The NCD offers significant improvements over Wiener theory, as conditioning and marginalization through linear operators in $pdf$ space, effectively capture nonlinear relationships in the data space. While this paper primarily focuses on the theory and multivariate characteristics of the proposed statistical dependence measure, the framework presents numerous new opportunities and holds significant potential for machine and information-theoretical learning applications.

\vspace{-5pt}
\section{Acknowledgment}
\vspace{-7pt}
The authors gratefully acknowledge partial support  by the Office of Naval Research (N00014-21-1-2324), (N00014-21-1-2295) and (N00014-21-1-2345). The authors also gratefully acknowledges Dr.Yuheng Bu, Assistant Professor with the Department of Electrical \& Computer Engineering (ECE) at the University of Florida, for constructive discussions and suggestions to improve the paper readability.
\vspace{-10pt}

\appendix

\section{Proof of the semi-definitenss of NCD}
\label{appendix_ncd_def}

Here, we prove Lemma~\ref{lemma_2_kernel} that two NCDs are positive-definite functions. 

\vspace{5pt}
\begin{proof}
% We show $K(x, x')$ is symmetrical, beginning with its denominator, by showing that the marginal distributions of two internal variables, induced by their joint distribution, are identical, as $p(\mathbf{x} = x) = p(\mathbf{x}' = x)$:
% We show that $K(x, x')$ is symmetrical, starting with its denominator, by showing that the marginal distributions of two internal variables, which are induced by their joint distribution, are identical, $p(\mathbf{x} = x) = p(\mathbf{x}' = x)$, as
We show the symmetry of $K(x, x'):== \frac{p(x, x')}{p^{\frac{1}{2}}(x) p^{\frac{1}{2}}(x')}$ starting with its denominator, by proving that the marginal distributions of $\mathbf{x}$ and $\mathbf{x}'$ are identical:
\begin{equation}
\begin{aligned}
p(\mathbf{x}' = x') = \int_{\mathcal{U}} p(\mathbf{x}' = x' |\mathbf{u}=u) p(\mathbf{u}=u)du = \int_{\mathcal{U}} p(\mathbf{x} = x' |\mathbf{u}=u) p(\mathbf{u}=u)du = p(\mathbf{x} = x').
\end{aligned}
\end{equation}
The symmetry of the numerator, i.e., the symmetry of their joint $pdf$, follows:
\begin{equation}
\begin{aligned}
p(x, x') = p(\mathbf{x} = x, \mathbf{x}' = x') &= \int_\mathcal{U}p(\mathbf{x}' =x'|\mathbf{u} = u)p(\mathbf{u} = u|\mathbf{x} = x)p(\mathbf{x} = x) du\\
& = \int_\mathcal{U}p(\mathbf{x} =x'|\mathbf{u} = u)p(\mathbf{u} = u|\mathbf{x}' = x)p(\mathbf{x}' = x) du \\
&= p(\mathbf{x} = x', \mathbf{x}' = x) = p(x', x).
\end{aligned}
\end{equation}
% Given this joint $pdf$, the denominator of $K(x, x')$ is symmetrical, as the marginal distributions share the same density function $p(\mathbf{x} = x') = p(\mathbf{x}' = x')$, as

Therefore, $K(x, x')$ is symmetrical. We proceed to show that $K(x, x')$ is positive-definite. Given a sequence of possible realizations $\{x_i\}_{i=1}^N$ and a set of scalars $\{\alpha_i\}_{i=1}^N$, we have
% Therefore $K(x, x')$ is symmetrical. Next, we show that $K(x, x')$ is positive-definite. Suppose we are given a sequence of possible realizations $\{x_i\}_{i=1}^N$ and a set of scalars $\{\alpha_i\}_{i=1}^N$, we have
\begin{equation}
\begin{aligned}
\sum_{i=1}^N \sum_{j=1}^N \alpha_i \alpha_j K(x_i, x_j) &= \sum_{i=1}^N \sum_{j=1}^N \alpha_i \alpha_j \frac{\int_{\mathcal{U}} p(x_j|u) p(u|x_i) p(x_i) du}{p^{\frac{1}{2}}(x_i) p^{\frac{1}{2}}(x_j)} \\
&= \int_{\mathcal{U}} \sum_{i=1}^N \sum_{j=1}^N \frac{  \alpha_i \alpha_j  p(x_i, u) p(x_j, u)}{p^{\frac{1}{2}}(x_i) p^{\frac{1}{2}}(x_j) p(u)} du \\
&= \int_{\mathcal{U}} \left( \sum_{i=1}^N \frac{\alpha_i p(x_i, u)}{p^{\frac{1}{2}}(x_i)}  \right)^2 \frac{1}{p(u)} du \geq 0.
\end{aligned}
\end{equation}
Therefore, $K(x, x')$ is positive-definite. Since it is both symmetrical and positive-definite, the function is a positive-definite kernel function. Similarly, $K(u, u') = \frac{p(u, u')}{p^{\frac{1}{2}}(u) p^{\frac{1}{2}}(u')}$ is also a positive-definite kernel. Furthermore, each kernel is associated with a linear operator and a corresponding Hilbert space.
\end{proof}

\section{Generalized definition of TSD}
\label{TSD_DEFINITION}
Here we present the generalized definition of the Total Statistical Dependence (TSD). 
\begin{definition}(Generalized definition of TSD) Given the NCD's eigenspectrum $\{ \lambda_i \}_{i=1}^\infty$ and any monotonically increasing convex function $c(\cdot):[0, 1]\rightarrow [0, \infty)$ that satisfies $c(0) = 0$, we define the generalized form of TSD along with its truncated TSD as:
\begin{equation}
\begin{gathered}
T(\lambda_1, \lambda_2, \cdots; c) = \sum_{i=1}^\infty c(\lambda_i), \;\; T_K(\lambda_1, \lambda_2, \cdots; c) = \sum_{i=1}^K c(\lambda_i).
\end{gathered} 
\end{equation} 
If an NCD has countably infinite eigenvalues that are strictly positive, in which case the TSD is unbounded, the truncated TSD uses the dominant $K$ eigenvalues and ensures that the measure remains bounded. Immediately, function $c$ can be chosen as $c(\lambda) = \lambda$ or polynomials of $\lambda$. In Definition~\ref{definition_eigenspectrum} for this paper, function $c$ is chosen as a special function $c(\lambda) = \frac{1}{2}\log\frac{1}{1-\lambda}$. 
\label{definition_generalized_tsd}
\end{definition}
\section{Proof of the variational eigenproblem}
\label{CDR_appendix}

Here we prove Lemma~\ref{theorem1}, which transforms the NCD eigenproblem into a variational eigenproblem of the CDR involving only empirical quantities. The proof follows a standard approach from GP. \vspace{5pt}
\begin{proof}
The set $\{\widehat{\phi_i}(x)\}_{i=1}^\infty$ can be constructed as follows. For each eigenfunction $\phi_i$, there exists a function $\widehat{\phi_i}(x)$ such that $\phi_i(x) = \widehat{\phi_i}(x) p^{\frac{1}{2}}(x)$ a.e.$p(x)$. Assuming $p(x)$ has bounded support, we can construct $\widehat{\phi_i}(x)$ such that $\widehat{\phi_i}(x) = \phi_i(x) p^{-\frac{1}{2}}(x)$ on the support of $p(x)$, and $\widehat{\phi_i}(x) = 0$ otherwise. To verify that the constructed $\{\widehat{\phi_i}(x)\}_{i=1}^\infty$ satisfies the orthonormal condition~\eqref{Orthonormal}, 
we can prove the following relationship
\begin{equation}
\begin{gathered}
\int \phi_i(x) \phi_j(x)  dx = \int \widehat{\phi_i}(x) p^{\frac{1}{2}}(x)  \widehat{\phi_j}(x) p^{\frac{1}{2}}(x)  dx = \int \widehat{\phi_i}(x) \widehat{\phi_j}(x) p(x) dx.
\end{gathered}
\label{proof_orthonormal}
\end{equation}
Therefore if the functions $\{{\phi_i}(x)\}_{i=1}^\infty$ are orthonormal w.r.t. the Lebesgue measure, then the functions $\{\widehat{\phi_i}(x)\}_{i=1}^\infty$ are orthonormal w.r.t. the probability measure. To show that $\{\widehat{\phi_i}(x)\}_{i=1}^\infty$ satisfy the equilibrium condition~\eqref{conditional_mean}, we substitute $\phi_i$ with $\widehat{\phi_i}(x)p^{\frac{1}{2}}(x)$ for $T\phi_i$, which yields
\begin{equation}
\resizebox{1\linewidth}{!}{
$\begin{gathered}
T\phi_i(x) = \int_{\mathcal{X}}  \frac{p(x, x')}{p^{\frac{1}{2}}(x) p^{\frac{1}{2}}(x')} \phi_i(x') dx' =  \int_{\mathcal{X}} \frac{p(x, x')}{p^{\frac{1}{2}}(x) p^{\frac{1}{2}}(x')} \widehat{\phi_i}(x') p^{\frac{1}{2}}(x') dx' =  \int_{\mathcal{X}} \frac{p(x, x')}{p^{\frac{1}{2}}(x)} \widehat{\phi_i}(x') dx'.
\end{gathered}$}
\end{equation}
Using the definition of the eigenfunction $T\phi_i(x) = {\phi_i}(x) = \widehat{\phi_i}(x) p^{\frac{1}{2}}(x)$, we have 
\begin{equation}
\begin{gathered}
\int_{\mathcal{X}} \frac{p(x, x')}{p^{\frac{1}{2}}(x)} \widehat{\phi_i}(x') = \lambda_i \widehat{\phi_i}(x) p^{\frac{1}{2}}(x) \Rightarrow \int_{\mathcal{X}} p(x'|x) \widehat{\phi_i} (x') = \lambda_i \widehat{\phi_i} (x)\;\; a.e.p(x),
\end{gathered}
\label{RHSOF3}
\end{equation}
which proves~\eqref{conditional_mean}. Finally, to show Equation~\eqref{final_equation}, we use the property of the kernel function $K(x, x') = \sum_{i=1}^\infty \lambda_i \phi_i(x) \phi_i(x')$. Substituting $\phi_i(x)$ by $\widehat{\phi_i}(x) p^{\frac{1}{2}}(x)$, we obtain Equation~\eqref{final_equation}. Thus given any two realizations $x, x' \in \mathcal{X}$, we can construct the CDR $\rho (x, x') = \frac{p(x, x')}{p(x) p(x')}$ through $\{ \lambda_i \}_{i=1}^\infty$ and $\{\widehat{\phi_i}(x)\}_{i=1}^\infty$.
\end{proof}

\vspace{-20pt}

\section{The maximal correlation is the second largest eigenvalue of NCD}
\label{appendix_mca}
Here, we prove Lemma~\ref{lemma_10}, which demonstrates that the maximal correlation is the second-largest eigenvalue of NCD, and its solutions are the second eigenfunctions of the two CDR. \vspace{5pt}
\begin{proof}
The proof follows by applying the Cauchy-Schwarz inequality. Suppose the functions $\overline{f}(x)$ and $\overline{g}(u)$ achieve maximal correlation and are normalized to have a mean of $0$ and a standard deviation of $1$. By the shift-invariance and the scale-invariance of the maximal correlation, the functions $\overline{f}(x)$ and $\overline{g}(u)$ defined in Lemma~\ref{lemma_10} are also the optimal solution of the maximal correlation. Observe that $\mathbf{cc}^*$ can be written as
\begin{equation}
\resizebox{1\linewidth}{!}{
$\begin{aligned}
\textbf{cc}^* = \mathbb{E}[\overline{f}(\mathbf{x})\overline{g}(\mathbf{u})] = \mathbb{E}_\mathbf{x} \big[ \overline{f}(\mathbf{x}) \, \mathbb{E}_\mathbf{u}[\overline{g}(\mathbf{u})|\mathbf{x}] \big] \leq \textbf{std}(\overline{f}(\mathbf{x})) \; \textbf{std}(\mathbb{E}_\mathbf{u}[\overline{g}(\mathbf{u})|\mathbf{x}]) = \textbf{std}(\mathbb{E}_\mathbf{u}[\overline{g}(\mathbf{u})|\mathbf{x}]). 
\label{CS}
\end{aligned}$}
\end{equation}
This implies that $\textbf{cc}^*$ is upper bounded by the standard deviation of the conditional mean $\mathbb{E}_\mathbf{u}[\overline{g}(\mathbf{u})|\mathbf{x}]$. 

First, if the supremum $\textbf{cc}^*$ is attained, the inequality~\eqref{CS} must become equality. For any fixed function $\overline{g}(\cdot)$, the LHS of~\eqref{CS} has the largest value only when $\overline{f}(x) = \frac{\mathbb{E}_\mathbf{u}[\overline{g}(\mathbf{u})|x]}{\textbf{std}(\mathbb{E}_\mathbf{u}[\overline{g}(\mathbf{u})|\mathbf{x}])}$. By applying this form of $\overline{f}(x)$ to~\eqref{CS}, we obtain $\textbf{cc}^* = \textbf{std}(\mathbb{E}_\mathbf{u}[\overline{g}(\mathbf{u})|\mathbf{x}])$. And it follows that $\mathbb{E}_\mathbf{u}[\overline{g}(\mathbf{u})|x] = \textbf{cc}^*  \overline{f}(x)~a.e.p(x)$. Symmetrically, given any fixed $\overline{f}(\mathbf{x})$, we have $\mathbb{E}_\mathbf{x}[\overline{f}(\mathbf{x})|u] = \textbf{cc}^* \overline{g}(u)~a.e.p(u)$. Combining the two equalities and applying the bidirectional recursion in Definition~\ref{definition_density}, we have
\begin{equation}
\begin{aligned}
\mathbb{E}_{\mathbf{x'}}[\overline{f}(\mathbf{x'})|x] = (\textbf{cc}^*)^2 \, \overline{f}(x)~a.e.p(x),\;\;\mathbb{E}_{\mathbf{u'}}[\overline{g}(\mathbf{u'})|u] = (\textbf{cc}^*)^2 \, \overline{g}(u)~a.e.p(u).
\end{aligned}
\label{bidirectional_optimization}
\end{equation}
Equation~\eqref{bidirectional_optimization} implies that the equilibrium condition~\eqref{conditional_mean} is satisfied for $\overline{f}$ and $\overline{g}$. Moreover, since $\overline{f}(\mathbf{x})$ and $\overline{g}(\mathbf{u})$ are normalized, their standard deviations are both $1$. Therefore, the orthonormal condition~\eqref{Orthonormal} is also satisfied. Thus $\overline{f} \in \{\widehat{\phi_i}\}_{i=1}^\infty$ and $\overline{g} \in \{\widehat{\psi_i}\}_{i=1}^\infty$. From Equation~\ref{bidirectional_optimization}, we know that the maximal correlation $(\textbf{cc}^*)^2 \in \{\lambda_i\}_{i=1}^\infty$. 

Since we obtained $\textbf{cc}^*$ by taking the supremum, the only part left is to show that $\textbf{cc}^*$ avoids the trivial eigenvalue $\lambda_1 = 1$ and finds the second largest eigenvalue $\lambda_2$. In Lemma~\ref{dependence}, we proven that the eigenfunction of the largest eigenvalue has the form $\phi_1(x) = p^{\frac{1}{2}}(x)$, and correspondingly, $\widehat{\phi_i}(x) = 1~a.e.p(x)$ is a constant function. This implies that $\mathbf{std}(\widehat{\phi_1}(x)) = \sqrt{\mathbb{E}[\widehat{\phi_1}^2(x)] - \mathbb{E}^2[\widehat{\phi_1}(x)]} = 0$. By defining the cost function~\eqref{form1}, the standard deviation of the neural network outputs cannot be zero, as $\mathbf{std}(\overline{f}(\mathbf{x})) = 1\neq 0$. Therefore $\overline{f}(\mathbf{x})$ avoids $\widehat{\phi_1}(x)$ and finds the function corresponding to the second largest eigenvalue:
\begin{equation}
\begin{gathered}
(\textbf{cc}^*)^2 = \lambda_2, \;\; \overline{f}(x) = \widehat{\phi_2}(x)\; a.e.p(x), \;\; \overline{g}(u) = \widehat{\psi_2}(u)\; a.e.p(u).
\end{gathered}
\label{one_lambda}
\end{equation}
\end{proof}

\section{Main theorem of FMCA's optimality condition}
\label{main_proof}
Here, we prove the main theorem stating that the Wiener solutions $\mathbf{m}_F = \overline{\mathbf{P}} \overline{\mathbf{g}}$ and $\mathbf{m}_G = \overline{\mathbf{P}}^\intercal \overline{\mathbf{f}}$ is equivalent to the minimization problem of the cost function. The proof strategy involves investigating the first-order derivative. Our proof will require the following assumptions. \vspace{-5pt}
\begin{itemize}[leftmargin=*]
    \item Two networks are parameterized models, $\mathbf{f}_\theta$ and $\mathbf{g}_\omega$, assumed to be universal; \vspace{-5pt}
    \item All ACFs are assumed to be positive-definite and thus invertible, which can be ensured by adding a small regularization matrix $\epsilon \mathbf{I}$ to them. For simplicity, we henceforth omit explicitly mentioning their invertibility when taking the inverse; \vspace{-5pt}
    \item For simplicity, the model $\mathbf{g}_\omega$ is assumed to be fixed. Without losing any generality, the function $\mathbf{g}_\omega$ is assumed to have a unitary ACF, i.e., $\mathbf{R}_G = \mathbf{I}$. We discuss the optimality condition when the parameters of $\mathbf{f}_\theta$ are updated to minimize the cost $r(\mathbf{f}_\theta, \mathbf{g}_\omega)$; \vspace{-7pt}
    \item The goal is to prove that when the cost is minimized, the Wiener solution is satisfied: $\mathbb{E}[\mathbf{g}_\omega(\mathbf{u})|x] = \mathbf{P}^\intercal \mathbf{R}_F^{-\frac{1}{2}}\mathbf{f}_\theta(x)$. If this can be proven, we can further fix $\mathbf{f  }_\theta$ and update $\mathbf{g}_\omega$ and show that $\mathbb{E}[\mathbf{f}_\omega(\mathbf{x})|u] = \mathbf{P} \mathbf{R}_G^{-\frac{1}{2}}\mathbf{g}_\omega(u)$ for the opposite direction. 
\end{itemize}
\begin{proof}
Given any two neural networks \(\mathbf{f}_\theta\) and \(\mathbf{g}_\omega\) that meet the specified assumptions. Applying the Schur complement to the cost function gives us
\begin{equation}
\begin{gathered}
r(\mathbf{f}_\theta, \mathbf{g}_\omega) = \log \det \mathbf{R}_F + \log\det (\mathbf{I} - \mathbf{P}^\intercal \mathbf{R}_F^{-1} \mathbf{P}) - \log \det \mathbf{R}_F = \log\det (\mathbf{I} - \mathbf{P}^\intercal \mathbf{R}_F^{-1} \mathbf{P}).
\end{gathered}
\end{equation}
The goal is to prove that, when the cost is minimized, the Wiener solution, which uses \(\mathbf{f}_\theta\) to predict \(\mathbf{g}_\omega\), is satisfied. That is, \(\mathbb{E}[\mathbf{g}_\omega(\mathbf{u})|x] = \mathbf{P}^\intercal \mathbf{R}_F^{-\frac{1}{2}}\mathbf{f}_\theta(x)\).
% The goal is to prove that when the cost is minimized, the Wiener solution of using $\mathbf{f}_\theta$ to predict $\mathbf{g}_\omega$ is satisfied, i.e., $\mathbb{E}[\mathbf{g}_\omega(\mathbf{u})|x] = \mathbf{P}^\intercal \mathbf{R}_F^{-\frac{1}{2}}\mathbf{f}_\theta(x)$. 

% For this reason, let us define an error vector, representing the distance for each function in $\mathbf{g}_\omega$ and its corresponding predictor, denoted as $\mathbf{e} = \mathbf{g}_\omega - \mathbf{P}^\intercal \mathbf{R}_F^{-\frac{1}{2}}\mathbf{f}_\theta(x)$. Notice that, the ACF of this error vector can be written as
% \begin{equation}
% \begin{gathered}
% \mathbb{E}[\mathbf{e} \mathbf{e}^\intercal] = \mathbf{I} - \mathbf{P}^\intercal \mathbf{R}_F^{-1} \mathbf{P}.
% \end{gathered}
% \end{equation}
Define the matrix $\mathbf{M}$ as $\mathbf{M} := \mathbf{P}^\intercal \mathbf{R}_F^{-1} \mathbf{P}$, which corresponds to $\overline{\mathbf{P}}\; \overline{\mathbf{P}}^\intercal$, as detailed in Section~\ref{CORE-THEOREM}. It can be easily shown that $\mathbf{M}$ is a positive semidefinite matrix. Representing the eigenvalues of $\mathbf{M}$ with $\mathbf{\Sigma} = \text{diag}([\sigma_1, \sigma_2, \ldots, \sigma_K]^\intercal)$, we write the cost function in the following form:
\begin{equation}
\begin{gathered}
r := \sum_{i=1}^K \log(1- \sigma_i(\theta)) 
\end{gathered}
\end{equation}
We aim to demonstrate that maximizing the sum of eigenvalues is equivalent to maximizing each individual eigenvalue. Notably, assuming a universal function approximator allows us to assume, without loss of generality, that each eigenvalue is assigned with a different model with parameters $\theta_i$. This results in a new cost function, $\Tilde{r}$. Consequently, minimizing $\Tilde{r}$ satisfies: \vspace{-5pt}
\begin{equation}
\begin{gathered}
\min_{\theta_1, \cdots, \theta_K} \Tilde{r} =  \min_{\theta_1, \cdots, \theta_K} \sum_{i=1}^K \log(1- \sigma_i(\theta_k)) = \sum_{i=1}^K \min_{\theta_k} \log(1- \sigma_i(\theta_k)) \leq \min_{\theta} {r}.
\end{gathered}
\end{equation}
Since each eigenvalue is individually optimized, the optimal cost of $\Tilde{r}$ essentially acts as a lower bound for the optimal $r$. By assuming a universal approximator for optimization, we can use this lower bound to optimize each individual eigenvalue, without loss of generality. This demonstrates that using the sum of eigenvalues (i.e., the trace) instead of the log determinant does not compromise generality and simplifies the analysis. Applying the matrix trace property, we have $\frac{\partial \mathbf{Tr}(\mathbf{I} - \mathbf{M})}{\partial \theta} = -\frac{\partial \mathbf{Tr}(\mathbf{M})}{\partial \theta} = -\mathbf{Tr}(\frac{\partial \mathbf{M}}{\partial \theta})$. Thus, we proceed to investigate $\frac{\partial \mathbf{M}}{\partial \theta}$, which can be written as follows:
\begin{equation}
\begin{gathered}
\frac{\partial \mathbf{M}}{\partial \theta} = \frac{\partial \mathbf{P}^\intercal}{\partial \theta} \mathbf{R}_F^{-1} \mathbf{P} - \mathbf{P}^\intercal  \mathbf{R}_F^{-1} \frac{\partial \mathbf{R}_F}{\partial \theta} \mathbf{R}_F^{-1} \mathbf{P} +  \mathbf{P}^\intercal \mathbf{R}_F^{-1} \frac{\partial \mathbf{P}}{\partial \theta}, 
\end{gathered}
\label{gradients_of_M}
\end{equation}
Each term in~\eqref{gradients_of_M} can further be written as
\begin{equation}
\begin{gathered}
\frac{\partial \mathbf{P}}{\partial \theta} = \mathbb{E}[\frac{\partial \mathbf{f}_\theta(\mathbf{x})}{\partial \theta} \mathbf{g}_\omega^\intercal (\mathbf{u})], \;\; \frac{\partial \mathbf{R}_F}{\partial \theta} = \mathbb{E}[\frac{\partial \mathbf{f}_\theta(\mathbf{x})}{\partial \theta} \mathbf{f}_\theta^\intercal (\mathbf{x})] + \mathbb{E} [\mathbf{f}_\theta(\mathbf{x}) \frac{\partial \mathbf{f}_\theta^\intercal(\mathbf{x})}{\partial \theta} ].
\end{gathered}
\label{eq_rfff}
\end{equation}
Denote $\mathbf{W} := \mathbf{R}_F^{-1} \mathbf{P}$ as the weighting function for $\mathbf{f}_\theta$, \eqref{eq_rfff} can be further simplified as
\begin{equation}
\resizebox{1\linewidth}{!}{
$\begin{aligned}
\frac{\partial \mathbf{M}}{\partial \theta} &= \frac{\partial \mathbf{P}^\intercal }{\partial \theta} \mathbf{W} + \mathbf{W}^\intercal \frac{\partial \mathbf{R}_F}{\partial \theta} \mathbf{W} + \mathbf{W}^\intercal \frac{\partial \mathbf{P} }{\partial \theta}  \\
& = \mathbb{E} \small[\mathbf{g}_\omega(\mathbf{u}) \frac{\partial \mathbf{f}_\theta^\intercal(\mathbf{x})}{\partial \theta} \small] \mathbf{W} - \mathbf{W}^\intercal \left( \mathbb{E}\small[\frac{\partial \mathbf{f}_\theta(\mathbf{x})}{\partial \theta} \mathbf{f}_\theta^\intercal (\mathbf{x})\small] + \mathbb{E} \small[\mathbf{f}_\theta(\mathbf{x}) \frac{\partial \mathbf{f}_\theta^\intercal(\mathbf{x})}{\partial \theta} \small] \right) \mathbf{W} + \mathbf{W}^\intercal \mathbb{E}\small[\frac{\partial \mathbf{f}_\theta(\mathbf{x})}{\partial \theta} \mathbf{g}_\omega^\intercal (\mathbf{u})\small]. \\
&= \mathbb{E} \small[\mathbf{W}^\intercal \frac{\partial \mathbf{f}_\theta(\mathbf{x})}{\partial \theta} \left( \mathbf{g}_\omega^\intercal (\mathbf{u}) - \mathbf{f}_\theta^\intercal(\mathbf{x}) \mathbf{W} \right)\small] + \mathbb{E}\small[ (\mathbf{g}_\omega(\mathbf{u}) - \mathbf{W}^\intercal \mathbf{f}_\theta(\mathbf{x})) \frac{\partial \mathbf{f}_\theta^\intercal(\mathbf{x})}{\partial \theta} \mathbf{W}\small].
\end{aligned}$}
\label{partial_derivative_of_M}
\end{equation}
Further writing $\mathbf{e} :=  \mathbf{g}_\omega(\mathbf{u}) - \mathbf{W}^\intercal \mathbf{f}_\theta(\mathbf{x})$, we obtain the final form:
\begin{equation}
\begin{aligned}
\frac{\partial \mathbf{M}}{\partial \theta} = \mathbf{W}^\intercal \mathbb{E}[\frac{\partial 
\mathbf{f}_\theta(\mathbf{x})}{\partial \theta} \mathbf{e}^\intercal] + \mathbb{E}[\mathbf{e} \frac{\partial \mathbf{f}_\theta^\intercal(\mathbf{x})}{\partial \theta} ]\mathbf{W}. 
\end{aligned}
\end{equation}
In Wiener filter theory, the symbol $\mathbf{e}$ represents the prediction error. While typically $\mathbf{e}$ is a random variable, here it becomes a vector. It is essential to distinguish the function $\mathbf{g}_\omega$'s appearance in the prediction error $\mathbf{e}$, whereas in the Wiener solution, it appears as a conditional expectation. Even though it might not be feasible to find a function $\mathbf{g}_\omega$ that minimizes the error to zero, the Wiener solution remains applicable, providing the smallest achievable error. Therefore, we employ the tower property to the expectation in $\frac{\partial \mathbf{M}}{\partial \theta}$~\eqref{partial_derivative_of_M} to close the gap:
% In Wiener filter theory, the symbol $\mathbf{e}$ denotes the prediction error. While conventionally $\mathbf{e}$ represents a r.v., here it is a vector. Note the distinction that the function $\mathbf{g}_\omega$ appears in $\mathbf{e}$, whereas in the Wiener solution, the function $\mathbf{g}_\omega$ appears in the form of conditional expectation. Although it may not be feasible to find a function $\mathbf{g}_\omega$ that minimizes the error to zero, the Wiener solution remains viable and provides the smallest achievable error. So we apply the tower property to the expectation in $\frac{\partial \mathbf{M}}{\partial \theta}$~\eqref{partial_derivative_of_M}, closing the gap:
\begin{equation}
\begin{aligned}
\mathbb{E}[\frac{\partial \mathbf{f}_\theta(\mathbf{x})}{\partial \theta} \mathbf{g}_\omega^\intercal (\mathbf{u})] = \iint \frac{\partial \mathbf{f}_\theta(x)}{\partial \theta} \mathbf{g}_\omega^\intercal (u) p(x, u) dxdu  &= \int \frac{\partial \mathbf{f}_\theta(x)}{\partial \theta} \int \mathbf{g}_\omega^\intercal (u) p(u|x) du\, p(x) dx\\
&= \mathbb{E}_\mathbf{x}\left[\frac{\partial \mathbf{f}_\theta(\mathbf{x})}{\partial \theta} \mathbb{E}_\mathbf{u}[\mathbf{g}_\omega^\intercal (\mathbf{u})|\mathbf{x}]\right]. 
\end{aligned}
\label{}
\end{equation}
We now replace the function $\mathbf{g}_\omega$ in both $\mathbf{e}$ and $\frac{\partial \mathbf{M}}{\partial \theta}$ with its conditional expectation, as\begin{equation}
\begin{gathered}
\widetilde{\mathbf{e}} =   \mathbb{E}[\mathbf{g}_\omega (\mathbf{u})|\mathbf{x}] - \mathbf{W}^\intercal \mathbf{f}_\theta(\mathbf{x}), \;\; \frac{\partial \mathbf{M}}{\partial \theta} = \mathbf{W}^\intercal \mathbb{E}[\frac{\partial \mathbf{f}_\theta(\mathbf{x})}{\partial \theta} \widetilde{\mathbf{e}}^\intercal ] + \mathbb{E}[\widetilde{\mathbf{e}} \frac{\partial \mathbf{f}_\theta^\intercal(\mathbf{x})}{\partial \theta} ]\mathbf{W}. 
\end{gathered}
\end{equation}
As a result, its trace satisfies $\textbf{Tr}(\frac{\partial \mathbf{M}}{\partial \theta}) = \sum_{i=1}^K \mathbb{E}[(\mathbf{W}^\intercal \frac{\partial \mathbf{f}_\theta^\intercal(\mathbf{x})}{\partial \theta})_i \, \widetilde{\mathbf{e}}_i]$. Remember that the function $\mathbf{f}_\theta$ is universal, which means a sufficiently large number of parameters can be chosen arbitrarily. The derivative of each parameter can be viewed as a measurable function in the probability space defined by the data. In order to set the derivative to zero for any parameter in this universal model, the error $\widetilde{\mathbf{e}}$ has to be orthogonal to any arbitrary measurable function within this probability space. Thus, the only feasible option is setting the error $\widetilde{\mathbf{e}}$ as a zero vector, such as $\mathbb{E}[\mathbf{g}_\omega(\mathbf{u})|x] = \mathbf{P}^\intercal \mathbf{R}_F^{-1}\mathbf{f}_\theta(x)$. This implies that the optimality is reached if and only if the Wiener solution is satisfied, as illustrated below:
\begin{equation}
\begin{gathered}
\mathbb{E}[\mathbf{g}_\omega(\mathbf{u})|x] = \mathbf{P}^\intercal \mathbf{R }_F^{-1}\mathbf{f}_\theta(x) \Leftrightarrow \textbf{Tr}\left(\frac{\partial \mathbf{M}}{\partial \theta}\right) = \mathbf{0} \Leftrightarrow \text{maximal eigenvalues} \Leftrightarrow \text{optimal cost}.\end{gathered}
\end{equation}
We have now established that the the satisfaction of the Wiener solution is equivalent to the optimal condition of the cost, where $\mathbf{f}_\theta$ is used to predict a fixed $\mathbf{g}_\omega$. Similarly, the reverse direction, i.e., employing the Wiener solution to use $\mathbf{g}_\omega$ to predict a fixed $\mathbf{f}_\theta$, must also be satisfied. This result can then be used to prove Lemma~\ref{lemma_solution}, which proves that these functions learn the eigenfunctions of CDR.\end{proof} \vspace{-20pt}

% $\frac{\partial \mathbf{M}}{\partial \theta} = 0$, which further leads to the conclusion that $\frac{\partial r}{\partial \theta} = 0$. 

% The gradient of the cost w.r.t. the parameter $\theta$ is zero, which means the minimum is obtained. We leave the KKT condition to the reader. This only implies sufficiency, to show the necessarity, we have to make further assumption. We have to assume that $\mathbf{I} - \mathbf{M}$ is full-rank and inversible such that zero trace implies zero matrix. We have to assume that $f_\theta$ is powerful enough such that the zero matrix implies the error is zero. But such assumptions are conventional in this type of analysis.

% \section{Detailed discussions on baseline literature}

\section{Implementation details for experiments}
\label{implementation_details}

\textbf{Statistical dependence datasets.} GAUSS uses multivariate Gaussian distributions with standardized marginals (mean $0$, standard deviation $1$), and correlation coefficient $\rho$. The cross density has a closed form $p(x'|x) = \int_{\mathcal{U}}\mathcal{N}(x'- \rho  u, 1-\rho^2)  \mathcal{N}(u- \rho  x, 1-\rho^2) du = \mathcal{N}(x' - \rho^2 x, (1 - \rho^2)\rho^2)$, with eigenfunctions having a closed-form solution using Hermite polynomials (\citealt{zhu1997gaussian}, \citealt{williams2006gaussian}). We implement conventional distributions like spiral dataset (SPIRAL), check dataset (CHECK), and sine waves (SW) (\citealt{pedregosa2011scikit}), visualized in Figure~\ref{distributiON_compare}. The sample size for these datasets is chosen to be $10^4$. \vspace{-1pt}
\begin{figure}[h]
\vspace{-7pt}
  \centering
  \begin{subfigure}{.2\textwidth}
    \includegraphics[width=\linewidth]{./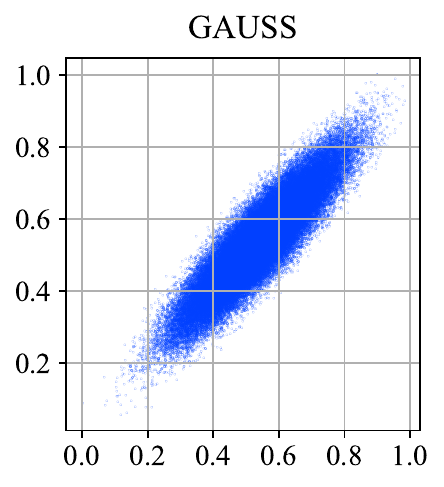}\vspace{-8pt}
    \phantomsubcaption
  \end{subfigure}
    \begin{subfigure}{.2\textwidth}
    \includegraphics[width=\linewidth]{./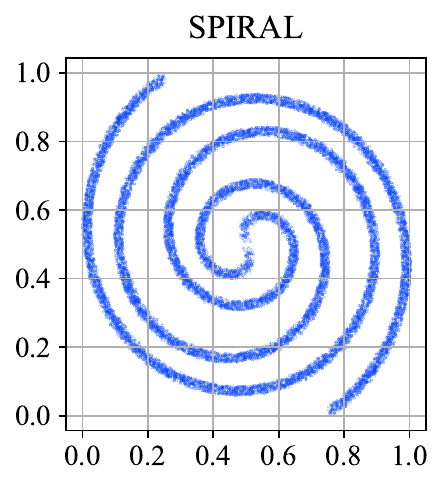}\vspace{-8pt}
    \phantomsubcaption
  \end{subfigure}
      \begin{subfigure}{.2\textwidth}
    \includegraphics[width=\linewidth]{./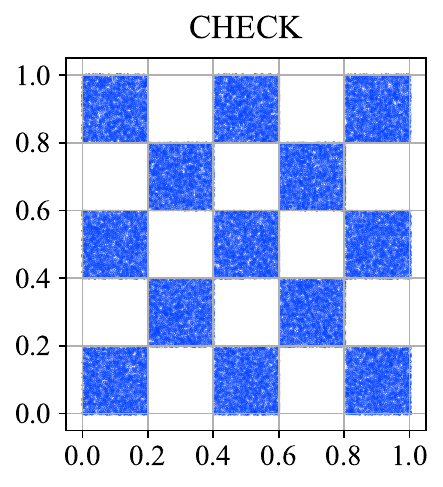}\vspace{-8pt}
    \phantomsubcaption
  \end{subfigure}
        \begin{subfigure}{.2\textwidth}
    \includegraphics[width=\linewidth]{./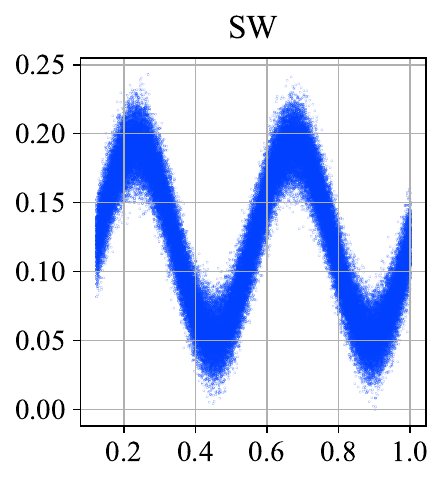}\vspace{-8pt}
    \phantomsubcaption
  \end{subfigure}
  \begin{minipage}{.8\linewidth}
\caption{Datasets used in Figure~\ref{spiral_plot} of Section~\ref{experiment_section}. While they have similar TSD, their MSD is significantly different; CHECK has only two positive eigenvalues, whereas SPIRAL has many more. }
\label{distributiON_compare}
    \end{minipage}
\end{figure}
\vspace{-1pt}

\noindent\textbf{Network structure.} The statistical dependence measurement  (Section~\ref{experiment_section}) and Markov chain aggregation (Section~\ref{aggregation_section}) use a five-layer MLP, having 2000 units in each layer, a ReLU activation function, and a Sigmoid function in the final layer to constrain the output to a range between $0$ and $1$. In FMCA-SOFT (Section~\ref{aggregation_section}), a Softmax function replaces the Sigmoid function in the last layer. All networks are optimized using an Adam optimizer with a learning rate of $10^{-4}$, $\beta_1= 0.5$ and $\beta_2 = 0.9$, and a batch size of 200. The networks used for image coding are shown in Table~\ref{image_coding_TABLE}.  \vspace{5pt}

\begin{table}[t]
\caption{\small Neural Network structure utilized in image coding.}
\centering
\footnotesize
\begin{threeparttable}
\resizebox{1\linewidth}{!}{
\begin{tabular}{lllllll}
\toprule
                                   & \textbf{Net} & \textbf{Inputs} & \textbf{Input Dim.} &  \textbf{Net Type} & \textbf{Outputs} & \textbf{Output Dim.}\\ \midrule
\multirow{2}{*}{\textbf{FMCA-PT}}  & $\mathbf{f}_\theta$     & Patches\tnote{1} &  (3, 32, 32)& ResNet-18 & \multirow{2}{*}{Bases} & \multirow{2}{*}{128}\\
                                   & $\mathbf{g}_\omega$     & Patches&  (3, 32, 32)& ResNet-18\\ \midrule
\multirow{2}{*}{\textbf{FMCA-FC}}  & $\mathbf{f}_\theta$          & Images& (3, 32, 32)&ResNet-18 & \multirow{2}{*}{Bases} & \multirow{2}{*}{128}\\ 
                                   & $\mathbf{g}_\omega$          & Uniform Noise & 500 & MLP\tnote{3}\\ \midrule
\multirow{2}{*}{\textbf{FMCA-M}}   & $\mathbf{f}_\theta$    & Images& (3, 32, 32) & ResNet-18& \multirow{2}{*}{Bases} & \multirow{2}{*}{128}\\
                                   & $\mathbf{g}_\omega$          &Same Image&(3, 32, 32) & ResNet-18\\ \midrule
\multirow{2}{*}{\textbf{FMCA-C}}   & $\mathbf{f}_\theta$          &Images& (3, 32, 32) & ResNet-18& \multirow{2}{*}{Bases} & \multirow{2}{*}{128}\\
                                   & $\mathbf{g}_\omega$          &Class&10 or 100\tnote{2} & MLP\tnote{3}\\ \toprule
\rowcolor{gray!10} \multicolumn{2}{l}{\textbf{AE-Encoder}}                        & Images & (3, 32, 32) & ResNet-18 & Latent Codes & 128\\ 
\rowcolor{gray!20} \multicolumn{2}{l}{\textbf{Decoder\tnote{4}}}              & Codes& 128 &VGG-16 & Reconstruction & (3, 32, 32)\\ 
\rowcolor{gray!30} \multicolumn{2}{l}{\textbf{Classifier}}                & Codes& 128 & MLP\tnote{3} & Labels & 10 or 100\\ \toprule
\end{tabular}}
\begin{tablenotes}
\footnotesize
\item[1] Patches fed into paired networks belong to the same image.
\item[2] Dimensions for either CIFAR10 or CIFAR100.
\item[3] FMCA-FC MLP: five-layer, 2000 units/layer; FMCA-C MLP: three-layer, 300 units/layer; Classifier MLP: five-layer, 2000 units/layer.
\item[4] Decoders for AE and FMCA reconstructions.
% \item[1] Two patches fed into two networks are from the same image.
% \item[2] Determined by whether it is CIFAR10 or CIFAR100
% \item[3] MLP: FC: five layers, $2000$ units/layer; C: three layers, $300$ units/layer; Classifier: five layers, $2000$ units/layer.
% \item[4] Both the decoder for AE and reconstructions for FMCA.
\end{tablenotes}
\end{threeparttable}
\label{image_coding_TABLE}
\end{table}

\noindent\textbf{Regularization Parameter for ACFs.} When computing the inverse of any ACFs for the gradient of the log determinant, similar to the pseudo-inverse, a small diagonal matrix scaled by a regularization parameter $\epsilon \mathbf{I}$ is added. This constant is vital for training stability and correlates with the regularization constant in TSD's definition (Definition~\ref{definition_eigenspectrum}). For statistical dependence measurement, Markov chain aggregation, and FMCA-FC with images, $\epsilon = 10^{-3}$ is chosen. For all other image experiments, $\epsilon=10^{-1}$ is chosen. We find that $\epsilon=10^{-1}$ produces the best generalization performance.\vspace{5pt}

\noindent\textbf{Estimating basis functions and MSD after training.} Since training updates use a batch to estimate ACFs and the gradient, this estimation is biased for the full dataset. Hence, after training, ACFs and CCFs are re-estimated to solve the eigenvalue problem, and this solution is used to normalize the neural network outputs, obtaining the basis functions and MSD. An alternative approach utilizes an adaptive filter for estimating ACFs during training for gradient updates and estimation, similar to the Adam optimizer. Both approaches produce accurate estimations of the ground truth. The pseudo code of the algorithm is shown in Algorithm~\ref{algorithm_1}. \vspace{5pt}

\begin{algorithm}[t]
\small
\caption{Pseudocode of FMCA}
\begin{algorithmic}[1]
\State Initialize neural networks $\mathbf{f}_\theta:\mathcal{X}\rightarrow \mathbb{R}^K$ and $\mathbf{g}_\omega: \mathcal{U} \rightarrow \mathbb{R}^L$; Typically choose $K=L$.
%Initialize the ACF matrices $\mathbf{R}_F: K\times K$ and $\mathbf{R}_G: L\times L$ with identity matrices. Initialize $\mathbf{P}_{FG}$ with zeros.
\State Given the input random process $\{x_n(t):t=1,\cdots,T_1\}_{n=1}^N$; Construct the reference random process $\{u_n(t):t=1,\cdots,T_2\}_{n=1}^N$ following Section~\ref{reference_process}. 
\While{the equilibrium not reached}
\State $k \leftarrow k+1$
\State Sample $M$ realizations $\{(x_m. u_m)\}_{m=1}^M$ from the joint random process.
\State Compute ${\mathbf{R}}_F = \frac{1}{M} \sum_{m=1}^M \mathbf{f}_\theta(x_m)  \mathbf{f}^\intercal_\theta(x_m) +\epsilon \mathbf{I}_{K}$, \; ${\mathbf{R}}_G = \frac{1}{M} \sum_{m=1}^M \mathbf{g}_\omega(x_m)  \mathbf{g}^\intercal_\omega(x_m) +\epsilon \mathbf{I}_{L}$,

${\mathbf{P}}_{FG} = \sum_{m=1}^M  \mathbf{f}_\theta(x_m) \mathbf{g}_\omega^\intercal(x_m)$.

% \State Apply recursive estimators ${\mathbf{R}}_F^{(k+1)} \leftarrow \beta {\mathbf{R}}_F^{(k)} +  (1-\beta) \mathbf{R}_F$,  ${\mathbf{R}}_G^{(k+1)} \leftarrow \beta {\mathbf{R}}_G^{(k)} +  (1-\beta) \mathbf{R}_G$,

% ${\mathbf{P}}_{FG}^{(k+1)} \leftarrow \beta {\mathbf{P}}_{FG}^{(k)} +  (1-\beta) \mathbf{P}_{FG}$.

% \State Calculate $\widetilde{\mathbf{R}}_F = {\mathbf{R}}_F/(1-\beta^k)$, $\widetilde{\mathbf{R}}_G = {\mathbf{R}}_G/(1-\beta^k)$, $\widetilde{\mathbf{P}}_{FG} = {\mathbf{P}}_{FG}/(1-\beta^k)$. 

\State Compute $\frac{\partial {r}}{\partial \theta} = \text{Tr} ( \begin{bsmallmatrix}{\mathbf{R}}_F & {\mathbf{P}}_{FG} \\
{\mathbf{P}}^\intercal_{FG} & {\mathbf{R}}_G
\end{bsmallmatrix}^{-1}  \frac{\partial \begin{bsmallmatrix}{\mathbf{R}}_F & {\mathbf{P}}_{FG} \\
{\mathbf{P}}^\intercal_{FG} & {\mathbf{R}}_G
\end{bsmallmatrix}}{\partial \theta} )  - \text{Tr}( {{\mathbf{R}}_F^{-1}} \frac{\partial {\mathbf{R}}_F}{\partial \theta} )$

$\frac{\partial {r}}{\partial \omega} = \text{Tr} ( \begin{bsmallmatrix}{\mathbf{R}}_F & {\mathbf{P}}_{FG} \\
{\mathbf{P}}^\intercal_{FG} & {\mathbf{R}}_G
\end{bsmallmatrix}^{-1}  \frac{\partial \begin{bsmallmatrix}{\mathbf{R}}_F & {\mathbf{P}}_{FG} \\
{\mathbf{P}}^\intercal_{FG} & {\mathbf{R}}_G
\end{bsmallmatrix}}{\partial \omega} )  - \text{Tr}( {{\mathbf{R}}_G^{-1}} \frac{\partial {\mathbf{R}}_G}{\partial \omega} )$.

\State Gradient descent with an optimizer $\theta \leftarrow \text{optimizer}(\theta, \frac{\partial {r}}{\partial \theta})$, $\omega \leftarrow \text{optimizer}(\omega, \frac{\partial {r}}{\partial \omega})$. 
% \, \overline{\mathbf{P}}^{\intercal} = \mathbf{Q}_F \mathbf{\Sigma} {\mathbf{Q}_F}^\intercal, \;\;\overline{\mathbf{P}}^{\intercal} \overline{\mathbf{P}}= \mathbf{Q}_G \mathbf{\Sigma} {\mathbf{Q}_G}^\intercal,

\EndWhile
\If{MSD and TSD needed}
% \State Normalization 1: $\overline{\mathbf{f}_\theta} = {\mathbf{R}}_F^{-\frac{1}{2}} \mathbf{f}_\theta, \;\;  \overline{\mathbf{g}_\omega} = {\mathbf{R}}_G^{-\frac{1}{2}} \mathbf{g}_\omega$.\\
\State Solve the eigenproblem: $\overline{\mathbf{P}} =  {\mathbf{R}}_F^{-\frac{1}{2}} {\mathbf{R}}_{FG} {\mathbf{R}}_G^{-\frac{1}{2}}, \;\overline{\mathbf{P}} \; \overline{\mathbf{P}}^{\intercal} = \mathbf{Q}_F \mathbf{\Sigma} {\mathbf{Q}_F}^\intercal, \;
\overline{\mathbf{P}}^{\intercal} \overline{\mathbf{P}} = \mathbf{Q}_G \mathbf{\Sigma} {\mathbf{Q}_G}^\intercal$.
\State Construct MSD and TSD: $\mathbf{\Sigma} \Rightarrow \{\sigma_1, \cdots, \sigma_K\}, \; T_K = - \frac{1}{2}\sum_{i=2}^K \log({1-\sigma_i})$.
\State Normalize with ACFs: $\overline{\mathbf{f}_\theta} = {\mathbf{R}}_F^{-\frac{1}{2}} \mathbf{f}_\theta, \;\;  \overline{\mathbf{g}_\omega} = {\mathbf{R}}_G^{-\frac{1}{2}} \mathbf{g}_\omega$. 
\State Normalize with eigenvectors: $\widehat{\mathbf{f}_\theta} = \mathbf{Q}_F^T \overline{\mathbf{f}_\theta},\;\; \widehat{\mathbf{g}_\omega} = \mathbf{Q}_G^T \overline{\mathbf{g}_\omega}$; Here $\widehat{\mathbf{f}_\theta}, \widehat{\mathbf{g}_\omega}$ are the estimated bases.
\State Construct CDR: $\rho(x, x') = \widehat{\mathbf{f}_\theta}^\intercal(x) \mathbf{\Sigma} \widehat{\mathbf{f}_\theta} (x')$, $\rho(u, u') = \widehat{\mathbf{g}_\omega}^\intercal(u) \mathbf{\Sigma} \widehat{\mathbf{g}_\omega} (u')$.
% \State Obtain basis functions $\mathbf{f}_\theta, \mathbf{g}_\omega$; MSD 
\EndIf
\end{algorithmic}
\label{algorithm_1}
\end{algorithm}%

\noindent\textbf{Dimensions of network outputs.} FMCA calculates MSD using a fixed number of eigenvalues and a truncated TSD, limited by the neural network's output dimensions. Typically, the primary network and the reference network share the same output dimensions $K$ for simplicity. For statistical dependence measurement and Markov chain aggregation, $K=50$ is chosen, except for FMCA-SOFT, where the number of bases matches the aggregated states count, set to $K=6$ and $K=20$. In the case of image datasets, $K=128$ is chosen. \vspace{5pt}

% \noindent\textbf{Dimensions of network outputs.} FMCA estimates MSD using a fixed number of eigenvalues and a truncated TSD, constrained by the neural network's output dimensions. In practice, the primary network and the reference network have the same output dimensions $K$ for simplicity. For statistical dependence measurement and Markov chain aggregation, we choose $K=50$, except for FMCA-SOFT, where the number of bases corresponds to the number of aggregated states, being $K=6$ and $K=20$. For image datasets, we choose $K=128$. \vspace{5pt}

\noindent\textbf{Implementations of statistical dependence baselines.} The steps of implementing KICA and HSIC are as follows:  \vspace{-5pt}
\begin{itemize}[leftmargin=*]
\item Estimate Gram matrices: For two r.p. $\mathbf{x}$ and $\mathbf{u}$, and Gaussian kernel $\mathcal{K}(x_i, x_j) = \mathcal{N}(x_i-x_j;\delta)$ with $\delta$ as the standard deviation, compute individual process Gram matrices $\mathbf{R}_{XX}$ and $\mathbf{R}_{UU}$. Each matrix consists of $\mathbf{R}_{XX}(i, j) = \mathcal{K}(x_i, x_j)$ and $\mathbf{R}_{UU}(i, j) = \mathcal{K}(u_i, u_j)$. Note that KICA and HSIC do not require CCF between $\mathbf{x}$ and $\mathbf{u}$. \vspace{-5pt}
\item KICA-KGV: Construct the normalization matrix $\mathbf{N}_{i,j} = 
\begin{cases} 
   1 - \frac{1}{n} & \text{if } i = j \\
  -\frac{1}{n} & \text{if } i \neq j
\end{cases}
$. Normalize the two Gram matrices as $\overline{\mathbf{R}_{XX}} = \mathbf{N}\mathbf{R}_{XX}\mathbf{N}$ and $\overline{\mathbf{R}_{UU}} = \mathbf{N}\mathbf{R}_{UU}\mathbf{N}$. Construct matrix $\mathbf{A} = \begin{bmatrix}\mathbf{A}_1 & 0\\ 0 & \mathbf{A}_2 \end{bmatrix}$, where $\mathbf{A}_1 = \overline{\mathbf{R}_{XX}}\,\overline{\mathbf{R}_{UU}}$ and $\mathbf{A}_2 = \overline{\mathbf{R}_{UU}}\,\overline{\mathbf{R}_{XX}}$. Construct matrix $\mathbf{B} = \begin{bmatrix}\mathbf{B}_1 & 0\\ 0 & \mathbf{B}_2 \end{bmatrix}$, where $\mathbf{B}_1 = (\overline{\mathbf{R}_{XX}}+\epsilon \mathbf{I}) \,(\overline{\mathbf{R}_{XX}}+\epsilon \mathbf{I})$ and $\mathbf{B}_2 = (\overline{\mathbf{R}_{UU}}+\epsilon \mathbf{I}) \,(\overline{\mathbf{R}_{UU}}+\epsilon \mathbf{I})$. Solve the generalized eigenvalue problem $\mathbf{A} \mathbf{v}_i = \sigma_i \mathbf{B} \mathbf{v}_i$, where $i=1, \cdots, 2N$. The generalized eigenproblem will generate $2N$ eigenvalues symmetrical over the real line. We use the $N$ positive eigenvalues to compute the TSD, with $T_{KICA} = -\frac{1}{2}\sum_{i=1}^N\log(1 - \sigma_i^2)$.
\item HSIC-NOCCO: Construct $\mathbf{C} = \mathbf{B}_1^{-\frac{1}{2}} \mathbf{A}_1 \mathbf{B_2}^{-\frac{1}{2}}$. Perform the eigenvalue problem $\mathbf{C}\mathbf{v}_i = \sigma_i \mathbf{v}_i$, where $i=1,\cdots, N$. Compute the TSD, with $T_{HSIC} = \text{Trace}(\mathbf{C})$. 
\item Hyperparameters: Kernel size $\delta=0.1$ and small constant $\epsilon=0.1$. Due to the normalize matrix $\mathbf{N}$ centering the basis functions, KICA/HSIC's first eigenvalue corresponds to FMCA's second eigenvalue, and so forth. 
\end{itemize}

\noindent To implement MINE, follow these steps: Initialize a three-layer MLP $f_\theta$, consisting of $2000$ units per layer and a Sigmoid at the final layer, which operates on space $\mathcal{X}\times \mathcal{U}$ and generates a one-dimensional output. Sample $\{\mathbf{x}, \mathbf{u}\}$ from the joint distribution, concatenate their outputs in the feature dimension, and compute $f_\theta(\mathbf{x}, \mathbf{u})$. Then, sample $\{\mathbf{x}', \mathbf{u}'\}$ from their respective marginal distributions, concatenate, and calculate $f_\theta(\mathbf{x}', \mathbf{u}')$. Minimize the variational cost $\min_{\theta} \mathbb{E}[f_\theta(\mathbf{x}, \mathbf{u})] - \log \mathbb{E}[e^{f_\theta(\mathbf{x}', \mathbf{u}')}\!+\!10^{-5}]$. Optimize the network using the Adam optimizer with a learning rate of $10^{-4}$, $\beta_1 = 0.5$, and $\beta_2 = 0.9$.
\vspace{10pt}
% \noindent Next, we describe the steps of implementing MINE: \textcolor{red}{MINE} \vspace{5pt}

\noindent\textbf{Implementation of aggregation baselines.} The steps of implementing SSC and NMF are as follows: \vspace{-5pt}
\begin{itemize}[leftmargin=*]
\item Clean data: Retain taxi trip coordinates within the boundaries of $-74.05$ to $-73.85$ longitude and $40.65$ to $40.88$ latitude. \vspace{-5pt}
\item Discretize space: Perform K-means clustering on pickup/dropoff locations, obtain 1000 centers, and assign samples accordingly. \vspace{-5pt}
\item Estimate $pdf$: Empirically estimate the transition probability $p(x_{t+1}|x_t)$ and marginal probability $p(x_t)$ based on clustering centers. \vspace{-5pt}
\item SSC: Perform SVD on joint $pdf$ $p(x_t, x_{x+1})$, take top $r$ basis functions ($r=6$), and use K-means clustering for aggregated state assignments. \vspace{-5pt}
\item NMF: Decompose joint $pdf$ by minimizing the Frobenius norm $\min_{U, V} \; ||p(x_t, x_{t+1})-UV||_{\text{Fro}}^2 + ||U||_{\text{Fro}}^2$ and assign states based on the largest element in each row of $U$.\end{itemize}

\end{document}